\NeedsTeXFormat{LaTeX2e}

\documentclass{new_tlp}
\usepackage{mathptmx}
\usepackage{amsfonts}

%%% Macros for the guide only %%%
\hyphenation{either}

\newcommand\bcmdtab{\noindent\bgroup\tabcolsep=0pt%
	\begin{tabular}{@{}p{10pc}@{}p{20pc}@{}}}
	\newcommand\ecmdtab{\end{tabular}\egroup}

\def\univ{\widetilde{\forall}}

\title[]
{Strong Equivalence and Program Structure in Arguing Essential Equivalence between\\ Logic Programs} % under Answer Set Semantics}

\author[Yu. Lierler]
{YULIYA LIERLER\\
	6001 Dodge St, Omaha, NE 68182, USA\\
	\email{ylierler@unomaha.edu}}

\jdate{March 2003}
\pubyear{2003}
\pagerange{\pageref{firstpage}--\pageref{lastpage}}
\doi{S1471068401001193}

\usepackage[english]{babel}
\usepackage[autostyle]{csquotes}
%File: formatting-instruction.tex
\usepackage{times}
\usepackage{xcolor}
\usepackage{soul}
\usepackage[utf8]{inputenc}
\usepackage[small]{caption}
\usepackage{enumitem}
\usepackage{url}  %Required
\usepackage{graphicx}  %Required
\usepackage{tikz}
\frenchspacing  %Required
\setlength{\pdfpagewidth}{8.5in}  %Required
\setlength{\pdfpageheight}{11in}  %Required
\def\C{\cal C}
%PDF Info Is Required:
%\setcounter{secnumdepth}{0}  
\renewcommand{\vec}[1]{\mathbf{#1}}

\usepackage{mathptmx}
\setlist[itemize]{noitemsep, topsep=0pt, leftmargin=*}
\usepackage{amsmath}

\def\SM{\hbox{SM}}

\def\pred{\pi}

\def\G{\mathcal{G}}

\def\body{\mathit{Body}}

\DeclareMathOperator*{\bigcomma}{\textrm{\huge ,}}
\DeclareMathOperator*{\bigmid}{\textrm{\huge {$\mid$}}}

\usepackage{xspace}
\usepackage{color}
\usepackage{tikz}\usetikzlibrary{arrows}
\tikzset{
	>=stealth,
	auto,
	node distance=3.5cm,
	font=\scriptsize,
	possible world/.style={circle,draw,thick,align=center},
	real world/.style={double,circle,draw,thick,align=center},
	minimum size=40pt
}

\usepackage{todonotes}
\usetikzlibrary{positioning, automata}
\usepackage{verbatim}
%\usepackage{natbib}
%\bibliographystyle{splncs04}
%\setcitestyle{authoryear,open={((},close={))}}
\tikzset{place/.style={circle, draw}}
\tikzset{>=stealth, auto, node distance=2.5cm, every loop/.style={->, min distance=10mm, in=0, out=60, looseness=10}}

\def\bp{\textbf{p}}

\def\observation{{\em Claim}\xspace}
\def\observations{\hbox{\observation}\hbox{{\em s}}}

\def\caused{\hbox{$\mathbf{caused}\ $}\xspace}
\def\causes{\hbox{$\mathbf{causes}\ $}\xspace}
\def\inertial{\hbox{$\mathbf{inertial}\ $}\xspace}

\def\ifc{\hbox{$\ \mathbf{if}\ $}\xspace}
\def\after{\hbox{$~\mathbf{after}~$}\xspace}

\def\ar{\leftarrow}
\def\rar{\rightarrow}

\def\lrar{\leftrightarrow}
\def\beq{\begin{equation}}
\def\eeq#1{\label{#1}\end{equation}}
\def\ba{\begin{array}}
\def\ea{\end{array}}

\def\t#1{\hbox{$\hbox{-}{#1}$}}
\def\h#1{\hbox{$\widehat{#1}$}}
\def\hh#1{\hbox{$\widetilde{#1}$}}

\def\fol#1{\hbox{$\widehat{#1}$}}

\def\o#1{\hbox{$\overline{#1}$}}

\def\after{\hbox{\bf after}}

\def\causes{\hbox{\bf causes}}
\def\inertial{\hbox{\bf inertial}}

\def\alchoice{{\em Plan-choice}\xspace}
\def\planinst{{\em Plan-instance}\xspace}

\def\dalchoice{{\em Plan-disj}\xspace}

\def\sqht{\hbox{\bf SQHT$^=$}\xspace}

\hyphenation{lif-schitz}

\def\C{\hbox{$\mathcal{C}$}\xspace}
\def\al{\hbox{$\mathcal{AL}$}\xspace}

\def\D{\Delta}
\def\seq{\Rightarrow}
\def\r#1#2{\frac{\textstyle #1}{\textstyle #2}}

\def\citep#1{\cite{#1}}

\def\r#1#2{\frac{\textstyle #1}{\textstyle #2}}

\def\sig#1{atoms(#1)}

\def\pred{\pi}

\def\shift{\hbox{$\mathit{shift}$}}
\def\location{loc\xspace}
\def\fluent{fluent\xspace}
\def\step{step\xspace}
\def\inc{inc\xspace}

\def\occurs{o\xspace}
\def\block{block\xspace}
\def\noccurs{non\_o\xspace}
\def\holds{h\xspace}
\def\nholds{non\_\holds\xspace}

\def\action{action\xspace}
\def\sthHpd{\hbox{$\mathit{sthHpd}$}\xspace}

\newtheorem{prop}{Proposition} %[section]
\newtheorem{lemma}{Lemma} %[section]
\newtheorem{theorem}{Theorem} %[section]
\newtheorem{corollary}{Corollary} %[section]

\setlength{\abovedisplayskip}{1pt}
\setlength{\belowdisplayskip}{1pt}

\usepackage{times}  %Required
\usepackage{helvet}  %Required
\usepackage{courier}  %Required
\usepackage{url}  %Required
\usepackage{graphicx}  %Required
\frenchspacing  %Required
\setlength{\pdfpagewidth}{8.5in}  %Required
\setlength{\pdfpageheight}{11in}  %Required

\makeatletter
\def\thm@space@setup{%
  \thm@preskip=\parskip \thm@postskip=0pt
}
\makeatother

\begin{document}
\date{}	
%\title{
%	Strong Equivalence and Program Structure in Arguing Essential Equivalence between  First-Order Logic Programs}
	%\thanks{We are grateful to Vladimir Lifschitz and Miroslaw Truszczynski  for valuable discussions on the subject of this paper.   Yuliya Lierler was partially supported by the NSF 1707371 grant.}}
%\title{IJCAI--18 Formatting Instructions\thanks{These match the formatting instructions of IJCAI-07. The support of IJCAI, Inc. is acknowledged.}}

%	pcchair@ijcai-18.org}
\maketitle

\begin{abstract}
Answer set programming  is a prominent declarative programming paradigm used in formulating combinatorial search problems and implementing different knowledge representation formalisms. 
Frequently, several related and yet substantially different answer set programs exist for a given problem. Sometimes these encodings may display significantly different performance. Uncovering {\em precise formal} links between these programs is often important and yet far from trivial. 
This paper presents formal results carefully relating  a number of interesting program rewritings.
 It also provides the proof of correctness of system {\sc projector} concerned with automatic program rewritings for the sake of efficiency.
 Under consideration in Theory and Practice of Logic Programming (TPLP)
\end{abstract}
%\todo{
%\begin{itemize}
%\item Vladimir's comment: One of the reasons is that
%the "easy to see" claim in the proof of Proposition 1 may be correct
%once we have the definition of a loop formula before our eyes, but you
%don't reproduce it in the paper.
%\item remove definition of an answer set -- does not seem to be of use.
%\end{itemize}
%}
\section{Introduction}	

 Answer set programming (ASP) is a prominent knowledge representation paradigm 
 with roots in logic programming \cite{bre11}. It is frequently used for addressing 
 combinatorial search problems. 
  It has also been used to provide implementations and/or translational semantics to other knowledge representation formalisms such as action languages including languages~$\mathcal{B}$~\cite[Section 5]{gel98},~\C~\citep{lif99b}, $\mathcal{BC}$~\citep{lee2013action},~$\C+$~\citep{giu04,bab13}, and~\al~\cite[Section 8]{gel14}.

In answer set programming, a given computational problem 
is represented by a {\em declarative program}, also called a {\em problem encoding}, that describes the properties of  
a solution to the problem. Then, an answer set solver is used to generate 
answer sets for the program. These answer sets correspond to solutions to 
the original problem.  
As answer set programming evolves, new language features come to life providing means to reformulations of original problem encodings. Such new formulations often prove to be more intuitive and/or more concise and/or more efficient. Similarly, when a software engineer tackles a problem domain by means of answer set programming it is a common practice to first develop {\em a/some} solution to a problem and then rewrite this solution iteratively using such techniques, for example, as projection to  gain a better performing encoding~\citep{bud15}. These common processes bring a scientific question to light: {\em what are the formal means to argue the correctness of renewed formulations of the original encodings to problems?} In other words,  under the assumption that the original encoding to a problem is correct, how can we argue that a related and yet  different encoding  is also correct?
In addition, automated ASP program rewriting systems come to life.
Systems~{\sc lpopt}~\cite{bic16} and ~{\sc projector}~\cite{hip18}   exemplify such a trend. These tools rewrite an original program into a new one with the goal of improving an ASP solver's performance. Once again, formal means are necessary to claim the correctness of such systems. We note that the last section of this work is devoted to the claim of correctness of system~{\sc projector}.

It has been long recognized that studying various notions of equivalence
between programs under the answer set semantics is of crucial importance. % in advancing answer set programming. 
Researchers proposed and studied  strong equivalence~\citep{lif01,lif07a}, uniform equivalence~\citep{eit03},
relativized  strong and uniform equivalences~\citep{wol04}. Another related approach is the study of forgetting~\cite{lei17}.
% Work by~\cite{erdo04} and~\cite{wol08} can be seen as a precursor to the notion of a logic program being a conservative extension of another one championed here. These papers study different kinds of equivalences between the programs that go beyond strong equivalence.
Also,    equivalences relative to specified signatures were considered~\citep{erdo04,eit05,wol08,har16}. In most  of the cases the programs considered for studying the distinct forms of equivalence are propositional.
Works~by~\citeN{eit06}, \citeN{eit06a},
\citeN{lif07a},
\citeN{oet08},
\citeN{pea12}, and
\citeN{har16} are exceptions. These authors consider programs with variables (or, first-order programs). It is first-order programs that  ASP knowledge engineers develop. Thus, theories on equivalence between programs with variables  are especially important as they can lead to more direct arguments about properties of programs used in practice.

In this paper we show how concepts of strong equivalence and so called conservative extension are of use in illustrating that two programs over different signatures and with significantly different structure are ``essentially the same'' or ``essentially equivalent'' in a sense that they capture solutions to the same problem. 
Let us make the concept of an essential equivalence between problem's encodings precise. 
We use the same notion of the search problem as Brewka et al.~\shortcite{bre11}. Quoting from their work, a {\em search problem}~$P$ consists of a set of instances with each
{\em instance}~$I$ assigned a finite set~$S_P(I)$ of solutions. 
We say that logic program $\Pi_P(\cdot)$ is an encoding of $P$ when for any instance $I$ of this problem, the solutions to $I$ -- the elements of set~$S_P(I)$ -- can be reconstructed from the answer sets of  program $\Pi_P(I)$. We say that encodings $\Pi_P(\cdot)$ and $\Pi'_P(\cdot)$ are {\em essentially equivalent}, 
 when given any instance $I$ of  problem $P$ the answer sets of programs 
 $\Pi_P(I)$ and $\Pi'_P(I)$    are in one-to-one correspondence.
The paper has two parts. In the \textit{first part} we consider  propositional programs. In the \textit{second part}, we move  to the programs with variables. \textit{These parts can be studied separately}. \textit{The first one is appropriate for} researchers who are not yet deeply familiar with answer set programming theory and are interested in learning formal details. \textit{The second part\footnote{This part of the 
paper is a substantially extended version of the paper presented at PADL 2019~\cite{lier18}.} is geared towards} answer set programming practitioners providing them with  theoretical grounds and tools to assist them in program analysis and formal claims about the developed encodings and their relations. In both of these parts we utilize running examples stemming from the literature. For instance, for the case of propositional programs we study two distinct ASP formalizations of action language~$\C$.  In the case of first-order programs, we study two distinct formalizations of planning modules for action language \al given in~\cite[Section 9]{gel14}. Namely,
\begin{enumerate}[topsep=0pt]
	\itemsep0em 
	\item a \alchoice formalization that utilizes choice rules and aggregate expressions,
	\item a \dalchoice formalization that utilizes disjunctive rules.
\end{enumerate} 
  In both cases we identify interesting results. 
%We now provide a detailed outline for the tow parts of the paper.

\begin{comment}
\noindent{\bf Summary.}~ We view this paper as an important step towards bringing theories about  program's equivalence to providing practical solutions in the realm of ASP as it is used by knowledge engineers. Formal results on program rewritings stated   in this paper can serve as a solid theoretical basis for
\begin{itemize} [topsep=0pt]
	\itemsep0em 
	\item arguing the correctness of a portfolio of program rewritings;
	\item a software system that may automatically produce new variants of logic programs (some of these encodings will often exhibit better performance) by utilizing studied rewritings;
	\item  a proof technique for arguing the correctness of a logic program. This proof technique assumes the existence of a ``gold standard'' logic program formalizing a problem at hand, in a sense that this gold standard is trusted to produce correct results. A proper portfolio of known program rewritings and their properties equips ASP practitioners with powerful tools to  argue that another encoding is essentially the same to the gold standard. 
	%(We illustrate this approach on our running example of the \alchoice and \dalchoice programs.)
\end{itemize}
\end{comment}

\paragraph{\bf Paper Outline.}
The paper is structured  as follows. Section~\ref{sec:lp} presents the concepts of (i) a propositional logic program, (ii) strong equivalence between propositional logic programs, and (iii) a propositional logic program being a conservative extension  of another one. 
Section~\ref{sec:edef} introduces a rewriting technique frequently used by ASP developers when a new auxiliary proposition is introduced in order to denote a conjunction of other propositions. 
Then these conjunctions are safely renamed by the auxiliary atom. We refer to this process as explicit definition rewriting and illustrate its correctness.
We continue by reviewing  action language \C in Section~\ref{sec:actionlangC}, which serves a role of a running example in  the first part of the paper.
In Section~\ref{sec:btr}, we present an original, or gold standard, translation of language \C to a logic program. 
Section~\ref{sec:sbtr} states a modern formalization stemming from the translation of a syntactically restricted fragment of $\C+$. 
At last, in Section~\ref{sec:relation}
we showcase how we can  argue on the correctness of a modern formalization  by stating the formal relation between the original and modern translations of language \C. An interested reader may proceed to the Appendix to find the details of the proof of the claim. There, we  utilize reasoning by ''weak'' natural deduction and a formal result on explicit definition rewriting. (A weak natural deduction system is reviewed in the Appendix.)
We also note that there is an interesting by product of our analysis: we establish that $\C+$ can be viewed as a true generalization of the language~$\C$ to the case of multi-valued fluents.
%We envision that 
%for proving correctness of related answer set programming formalizations will allow us to create a firm ground for automating common rewriting techniques by using this methodology as a general framework  for arguing correctness for these rewritings. 

We start the second part of the paper  by presenting the  \alchoice and \dalchoice programs at the onset of Section~\ref{sec:foprogram}.
We then introduce the logic program language called RASPL-1 in Section~\ref{sec:sm}.  The semantics of this language is given in terms of the SM operator reviewed in Section~\ref{sec:sem}. In Section~\ref{sec:strong}, we review the concept of strong equivalence for first order programs. Section~\ref{sec:rewr} is devoted to a sequence of formal results on  program rewritings. 
One of the findings of this work is lifting the results by 
\citeN{ben94} to the first order case.
Earlier work claimed that propositional head-cycle-free disjunctive programs can be rewritten to nondisjunctive programs by means of simple syntactic transformation. Here we not only generalize this result to the case of first-order programs, but also illustrate that at times we can remove disjunction from parts of a program even though the program is not head-cycle-free. Another important contribution of the work is lifting the 
Completion Lemma and the Lemma on Explicit Definitions
stated in~\citep{fer05,fer05b} from the case of propositional theories and propositional logic programs to first-order programs. In conclusion, in Section~\ref{sec:proj} we review a frequently used rewriting technique called projection that often produces better performing encodings. We illustrate the utility of the presented theoretical results as they can be used to argue the correctness of distinct versions of projection that also include rules with aggregates. In particular, the last formal result stated in the paper provides a proof of correctness of system~{\sc projector}. The Lemma on Explicit Definitions presented here is essential in this argument.

The Appendix provides the proofs for the formal results presented in the paper. 
%We start the Appendix by   reviewing a weak natural deduction system that is then used to formally argue the strong equivalence between groups of logic rules for the propositional programs studied in the first part of the paper.

\section{Propositional Programs}\label{sec:prop}

\subsection{Traditional Logic Programs and their Equivalences}\label{sec:lp}
A {\em (traditional logic) program} is a finite set of {\em rules}  of the form
\beq
\ba{l}
a_0 \ar a_{1} , \dots , a_l , not\  a_{l+1} , \dots , not\  a_m ,  not\  not\  a_{m+1} , \dots , 
not\  not\  a_n, 
\ea
\eeq{eq:ruletraditional}
$(0\leq l \leq m \leq n)$, where each $a_0$ is an atom or $\bot$ and each $a_i$ ($1\leq i\leq n$) is an atom, $\top$, or $\bot$ .
The expression containing atoms 
$a_{1}$ through $a_{n}$ is called the {\em body} of the rule. 
%A rule with an empty body is called a {\em fact}.  
Atom $a_0$ is called a {\em head}.
%An expression
% $a_{1} , \dots , a_l$ forms a {\em positive} part of the body, while the remainder of the body forms its negative part.

We define the {\em answer sets} of
a traditional program $\Pi$ following~\citep{lif99d}. We say that a program is {\em basic}, when it does not contain connective $not$. 
In other words a basic program consists of  rules
\beq
a_0 \ar a_{1} , \dots , a_l, 
\eeq{eq:basic}
where each $a_0$ is an atom or $\bot$ and each $a_i$ ($1\leq i\leq l$) is an atom, $\top$, or $\bot$. We say that a set $X$ of atoms {\em satisfies}
rule~\eqref{eq:basic} if it satisfies the implication
$$a_{1} \wedge \cdots \wedge a_l\rightarrow a_0.$$  
We say that a set $X$ of atoms is an {\em answer set} of a basic program $\Pi$ if $X$ is a minimal set among sets satisfying all rules of $\Pi$.

A {\em reduct} of a program $\Pi$ with respect to a set $X$ of atoms, denoted by $\Pi^X$, is constructed as follows.
For each rule~\eqref{eq:ruletraditional} in $\Pi$
\begin{enumerate}
\item when   $not\ not\ a_i$ ($m+1\leq i\leq n$) is such that $a_i\in X$, replace this expression with~$\top$, otherwise replace it with $\bot$,
\item when   $not\ a_i$ ($l+1\leq i\leq m$) is such that $a_i\in X$, replace this expression with~$\bot$, otherwise replace it with $\top$.
\end{enumerate}
It is easy to see that a reduct of a program forms a basic program. We say that a set $X$ of atoms is an {\em answer set} of a traditional program if it is an answer set for the reduct~$\Pi^X$.

In the later part of the paper we present the definition of an answer set for programs with variables by means of operator SM~\cite{fer09}. \citeN{fer09} show  in which sense SM operator captures the semantics of answer sets presented here. %in~\cite{fer05}. 

According 
to~\citep{fer05b} and \citep{fer05}, rules of the form~\eqref{eq:ruletraditional} are sufficient 
to capture the meaning of the choice rule construct commonly used in 
answer set programming. For instance, the choice rule ${\{p\}\ar q}$ 
is understood as the rule 
$$p\ar q,\ not\ not\ p.$$
We intuitively read this rule as {\em given $q$ atom $p$ {\em may be} true.} 
We  use  choice rule notation in the sequel. 
\paragraph{Strong Equivalence}
Traditional programs $\Pi_1$ and~$\Pi_2$ are {\em strongly equivalent}~\citep{lif01} when for 
every program $\Pi$, programs $\Pi_1\cup \Pi$ and $\Pi_2\cup \Pi$ have the same answer sets. In addition to introducing  strong equivalence, \citeN{lif01} also illustrated that traditional programs can be associated with the propositional formulas and a question whether the programs are strongly equivalent can  be turned into a question whether the respective propositional formulas are equivalent in the logic of here-and-there (HT-logic)~\cite{Luk38}, an intermediate logic between classical and intuitionistic logics.

We follow the steps of \citep{lif01} and identify a rule~\eqref{eq:ruletraditional} with the propositional formula
\beq
\ba{r}
a_{1} \wedge \dots \wedge a_l \wedge \neg  a_{l+1} \wedge \dots 
\wedge \neg a_m \wedge   \neg \neg a_{m+1}
\wedge \dots \wedge \neg\neg  a_n \rar a_0.  
\ea
\eeq{eq:formulatraditional}

\paragraph{Conservative Extensions}
\citeN{har16} defined the notion of a conservative extension
for the case of logic programs.
%This concept is related to strong equivalence for logic programs. %~\citep{lif01}.
Similarly to  strong equivalence, it attempts to capture the conditions under which we  can rewrite parts of the program and yet guarantee that the resulting program is  not different in an essential way from the original one.
Conservative extensions allow  
us to reason about rewritings even when the rules in question have 
different signatures.

For a program $\Pi$, by $\sig{\Pi}$ we denote the set of atoms occurring in $\Pi$.
Let~$\Pi$ and~$\Pi'$ be programs such that
$\sig{\Pi}\subseteq\sig{\Pi'}$.
We say that program~$\Pi'$ is  
a {\em conservative extension} of $\Pi$ if $X \mapsto X\cap\sig{\Pi}$ is a 1-1 
correspondence between the answer sets of $\Pi'$ and the answer sets of $\Pi$. 
For instance, program
\beq
\ba{l}
\neg q\rar p\\
\neg p\rar q
\ea
\eeq{eq:prpq}
is a conservative extension of the program containing the single choice rule
$$
\{p\}.
$$
Furthermore, given  program $\Pi$ such that (i) it contains rule $\{p\}$ and (ii) $q\not\in\sig{\Pi}$, a program constructed from $\Pi$ by replacing~$\{p\}$ with~\eqref{eq:prpq} is a conservative extension of~$\Pi$.

\subsection{On Explicit Definition Rewriting}\label{sec:edef} 
We now turn our attention to a common rewriting technique based on  explicit definitions and illustrate its correctness. This technique introduces an  auxiliary proposition  in order to denote a conjunction of other propositions. Then these conjunctions are safely renamed by the auxiliary atom in the remainder of the program.

We call a formula {\em basic conjunction} when it is of the form
\beq
a_{1} \wedge \dots \wedge a_l \wedge \neg  a_{l+1} \wedge \dots 
\wedge \neg a_m \wedge    \neg \neg a_{m+1}
\wedge \dots \wedge \neg\neg  a_n,\eeq{eq:bconjunction}
where each $a_i$ ($1\leq i\leq n$) is an atom, $\top$, or $\bot$. For example, the body of any rule in a traditional program is a basic conjunction.

Let $\Pi$ be a program,  $Q$ be a set of atoms that do not occur in $\Pi$. For an atom $q\in Q$, let $def(q)$ denote a basic conjunction~\eqref{eq:bconjunction}, 
where $a_i$ ($1\leq i\leq n$) in $\sig{\Pi}$. 
We say that $def(q)$ is an {\em explicit definition} of $q$ in terms of $\Pi$. 
By $def(Q)$ we denote a set of formulas $def(q)$ for each atom $q\in Q$. We assume that all these formulas are distinct.
Program $\Pi[Q,def(Q)]$ is constructed from~$\Pi$ as follows:
\begin{itemize}
	\item all occurrences of all formulas  $def(q)$ from $def(Q)$ in some body of $\Pi$ are replaced by respective $q$,
	\item for every atom  $q\in Q$ a rule of the form 
	$$def(q)\rar q$$
	is added to the program. 
\end{itemize}

For instance,let $\Pi$ be a program
$$
\neg q \rar p
$$
and $def(r)$ be a formula $\neg q$,
then $\Pi[\{r\},\{def(r)\}]$  follows
$$
\ba{l}
r \rar p\\
\neg q \rar r
\ea
$$
The proposition below supports the fact that the latter program  is a conservative extension of the former. It is an important claim as although this kind of rewriting is very frequently used in practice to the best of our knowledge this is the first time it has been formally claimed.

\begin{prop}\label{prop:ce}
	Let $\Pi$ be a program,  $Q$ be a set of atoms that do not occur in $\Pi$, and~$def(Q)$ be a set composed of explicit definitions for each element in $Q$ in terms of~$\Pi$.
	Program $\Pi[Q,def(Q)]$ is a conservative extension of $\Pi$.
\end{prop}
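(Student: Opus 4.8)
The plan is to exhibit an explicit inverse of the restriction map and to verify directly, at the level of reducts, that it sets up the required 1-1 correspondence. Write $\Pi' = \Pi[Q,def(Q)]$ and note first that $\sig{\Pi'} = \sig{\Pi}\cup Q$, since every $def(q)$ is built from atoms of $\sig\Pi$ and the only genuinely new atoms are those of $Q$. For a set $X$ of atoms over $\sig\Pi$, put $\widehat{X} = X \cup \{\, q\in Q : X \text{ satisfies } def(q)\,\}$, where satisfaction of the basic conjunction $def(q)$ is understood classically. I will show that $X\mapsto\widehat X$ carries answer sets of $\Pi$ to answer sets of $\Pi'$ and is the two-sided inverse, on answer sets, of the restriction $Y\mapsto Y\cap\sig\Pi$; together these give the bijection demanded by the definition of conservative extension.

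The first key step is a \emph{pinning} observation: in every answer set $Y$ of $\Pi'$ the auxiliary atoms are fully determined by $X:=Y\cap\sig\Pi$, namely $q\in Y$ exactly when $X$ satisfies $def(q)$, so that $Y=\widehat X$. This rests on minimality of $Y$ as a model of the reduct $(\Pi')^Y$: each $q\in Q$ occurs in a head only through the added rule $def(q)\rar q$ and occurs in bodies only positively, as the replacement for $def(q)$. Hence if the reduct of the body $def(q)$ is satisfied by $Y$ the rule forces $q\in Y$, while if it is not then $q$ is needed by no rule and deleting it would yield a smaller model, contradicting minimality. A small but pervasive fact here is that, because $def(q)$ mentions only atoms of $\sig\Pi$, its reduct depends only on $X$, and whether a set satisfies that reduct depends only on its intersection with $\sig\Pi$ and on $X$.

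Next I would establish the two transfer directions, each by a correspondence between $(\Pi')^Y$ and $\Pi^X$ obtained by reading the atom $q$ as a macro for $def(q)$. Forward: given an answer set $Y$ of $\Pi'$ with $X=Y\cap\sig\Pi$, I check that $X$ models $\Pi^X$ --- using pinning to see that whenever $X$ satisfies an original body reduct the corresponding $q$'s lie in $Y$, so the modified rule fires --- and that $X$ is minimal; from a hypothetical model $W$ of $\Pi^X$ that is a proper subset of $X$ one forms $Y' = W\cup\{\, q : W \text{ satisfies } (def(q))^X\,\}$, a proper subset of $Y$, and verifies $Y'\models(\Pi')^Y$, contradicting minimality of $Y$. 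Backward is dual: for an answer set $X$ of $\Pi$ one checks that $\widehat X$ models $(\Pi')^{\widehat X}$ and is minimal, where any model $Z$ that is a proper subset of $\widehat X$ restricts to a model $Z\cap\sig\Pi$ of $\Pi^X$ properly contained in $X$ --- the definition rules force each relevant $q$ into $Z$ precisely when $Z\cap\sig\Pi$ satisfies $(def(q))^X$, so the modified rules collapse back onto the reduct rules of $\Pi$.

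Assembling these facts, restriction sends answer sets of $\Pi'$ to answer sets of $\Pi$, the map $X\mapsto\widehat X$ sends answer sets of $\Pi$ to answer sets of $\Pi'$, and pinning shows the two are mutually inverse, so the correspondence is 1-1. I expect the main obstacle to be the two minimality arguments rather than the modelhood checks: one must build the correct smaller candidate model and argue that substituting $q$ for $def(q)$, and back again, preserves satisfaction of the reducts. This hinges on the observation from the second step that each reduced $def(q)$ is a basic conjunction over $\sig\Pi$ whose truth is insensitive to the auxiliary atoms. Handling the double-negated literals $\neg\neg a_i$ inside $def(q)$ and the case where a rule head is $\bot$ is needed but routine.
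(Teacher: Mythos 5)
Your proof is correct, but it takes a genuinely different route from the paper's. The paper never descends to the level of reducts: it first forms $\Pi'$ by adding the rules $def(q)\rar q$ to $\Pi$ and invokes Ferraris's Lemma on Explicit Definitions (Proposition~4 of the 2005 paper) to conclude that $\Pi'$ is a conservative extension of $\Pi$; it then uses the Completion Lemma (Proposition~5 there) to replace each implication $def(q)\rar q$ by the equivalence $def(q)\lrar q$ in both $\Pi'$ and $\Pi[Q,def(Q)]$ without changing answer sets; finally it observes that the two resulting definitional programs are strongly equivalent by the Replacement Theorem for intuitionistic logic, since under the hypothesis $def(q)\lrar q$ a body containing $def(q)$ and the body obtained by writing $q$ in its place are intuitionistically equivalent. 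Your argument is self-contained where the paper's is modular: you exhibit the inverse map $X\mapsto\widehat X$ explicitly and establish the pinning property and the two minimality transfers by hand, using only the definition of reduct and minimal model, whereas the paper obtains the bijection by composing three imported correspondences and never names the inverse. What the paper's route buys is reusability and brevity: it shows in passing that the intermediate programs with $\lrar$ are strongly equivalent (so the replacement is licensed in any context), and it is exactly the pattern the paper later lifts to first-order programs via the SM operator, where a reduct-level induction would have to be redone from scratch. Both arguments ultimately rest on the same two facts --- that each $q\in Q$ occurs in $\Pi[Q,def(Q)]$ only positively in bodies and only in the head of its defining rule, and that satisfaction of $def(q)$ is insensitive to the atoms of $Q$ --- your proof simply exploits them directly rather than through Ferraris's lemmas.
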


\subsection{Review of Action Language C}\label{sec:actionlangC}
This review of action language \C follows~\citep{lif99b}.
\begin{comment}
\todo[inline]
{
files btSpring and ntSpring formulate door domain, where the later provides choice rule based formulation for actions/default.	
Speak of this new translation in terms of readability. I.e. 20 years ago choice rules did not exist but now they do and it is more intuitive to provide the translation in these terms. Also eliminating negation for actions seems to be more in spirit as it gives intuitive reading. Again how can we give a direct prove; and how can we give a "relative" prove.
Btw,  in a domain with more actions possible a performance difference could be observed but not on these simple domains. Check bw domain formulations in C?}
\end{comment}

We consider a set $\sigma$ of propositional symbols partitioned into the fluent names $\sigma^{fl}$
and the elementary action names~$\sigma^{act}$. An action is an interpretation of $\sigma^{act}$.
Here we only consider what \citeN{lif99b} call {\em definite} action descriptions so that we only define this special class of~\C action descriptions.

Syntactically, a \C~ {\em action description} is a set of {\em static} and {\em dynamic laws}.
{\em Static laws} are of the form
\beq
\caused l_0 \ifc l_1\wedge\cdots\wedge l_m 
\eeq{eq:slaw} 
and dynamic laws are of the form
\beq
\caused l_0 \ifc l_1\wedge\cdots\wedge l_m \ \after\ l_{m+1}\wedge\cdots\wedge l_n 
\eeq{eq:dlaw}
where 
\begin{itemize}
	\item  $l_0$ is either a literal over $\sigma^{fl}$ or the symbol $\bot$, 
	\item  $l_i$ ($1\leq i\leq m$) is a literal in $\sigma^{fl}$, 
	\item  $l_i$ ($m+1\leq i\leq n$) is a literal in $\sigma$ , and
	\item conjunctions $l_1\wedge\cdots\wedge l_m$ and $l_{m+1}\wedge\cdots\wedge l_n$ are possibly empty and understood as~$\top$ in this case.
\end{itemize}
In both laws, the literal $l_0$ is called the {\em head}. 

Semantically, an action description defines a graph or a transition system. 
We call nodes of this graph {\em states} and directed edges {\em transitions}. 
We now define these concepts precisely. 
Consider an action description $D$. A {\em state} is an interpretation of~$\sigma^{fl}$ that
satisfies implication 
$$l_1\wedge\cdots\wedge l_m\rightarrow l_0$$ 
for every static law~\eqref{eq:slaw}  in $D$. 
A {\em transition} is any triple $\langle s, a, s' \rangle $, where $s$, $s'$ are states
and $a$ is an action; $s$ is the {\em initial} state of the transition,
and $s'$ is its {\em resulting} state. A literal $l$ is {\em caused} in a transition 
$\langle s, a, s' \rangle $
if it is
\begin{itemize} 
	\item  the head of a static law~\eqref{eq:slaw} from $D$ such that $s'$ satisfies $l_1\wedge\cdots\wedge l_m$, or
	\item the head of a dynamic law~\eqref{eq:dlaw} from $D$ such that $s'$ satisfies $$l_1\wedge\cdots\wedge l_m$$  and $s\cup a$ satisfies $$l_{m+1}\wedge\cdots\wedge l_n.$$
\end{itemize}
A transition  $\langle s, a, s' \rangle $ is {\em causally explained} by $D$ if its resulting state $s'$ is the set of literals caused  in this transition.

The {\em transition system described by an action description}~$D$ is the directed
graph, which has the states of $D$ as nodes, and which includes an edge from state $s$ to
state $s'$ labeled $a$ for every transition  $\langle s, a, s' \rangle $ that is causally explained by $D$.

We now present an example by~\citeN{lif99b} that formalizes {\em the effects of putting an object in water}. 
We use this domain as a running example.
It uses the fluent names $inWater$ and $wet$ and the elementary action name $putInnWater$. 
We follow the convention by~\citeN{lif99b} and present states (interpretations) as lists of literals.
In the notation  introduced by \citeN[Section 6]{gel98}, the action description for {\em water domain} follows\footnote{We remark on the keyword ${\inertial}$.
	It intuitively suggests that a fluent declared to be inertial is such that its value  can be changed by  actions only. If no actions, which directly or indirectly affect such a fluent, occur then the value of the inertial fluent remains unchanged.
} 
$$
\ba{l}
\caused wet \ifc inWater\\
putInWater\ \causes\ inWater\\
\inertial\    inWater, \neg inWater, wet, \neg wet
\ea
$$
Written in full this action description contains six laws:
$$
\ba{l}
\caused wet \ifc inWater\\
\caused inWater \ifc \top\ \after\ putInWater \\
\caused inWater \ifc inWater\ \after\ inWater\\
\caused \neg inWater \ifc \neg inWater\ \after\ \neg inWater\\
\caused wet \ifc wet\ \after\ wet\\
\caused \neg wet \ifc \neg wet\ \after\ \neg wet\\
\ea
$$
The corresponding transition system has 3 states:
$$\neg inWater~\neg wet,~\neg inWater~wet, ~inWater~wet
$$
and 6 causally explained transitions
\beq
\small
\ba{ll}
\langle \neg inWater~\neg wet , \neg putInWater, \neg inWater~\neg wet\rangle,&
\langle \neg inWater~\neg wet ,  putInWater,  inWater~ wet\rangle,\\
\langle \neg inWater~wet , \neg putInWater, \neg inWater~ wet\rangle,&
\langle \neg inWater~wet ,  putInWater,  inWater~ wet\rangle,\\
\langle  inWater~wet , \neg putInWater,  inWater~ wet\rangle,&
\langle  inWater~wet ,  putInWater,  inWater~ wet\rangle.\\
\ea
\normalsize
\eeq{eq:cexample}
We depict this transition system in Figure~\ref{fig:trsys}.
\begin{figure}[th]
\small
	\begin{tikzpicture}
	\tikzset{node distance=3.5cm, auto}
	\node  [possible world] (0) {$~\neg inWater$\\ $wet$};
	\node  [possible world] (1) [right of=0, xshift = .3cm] {$\neg inWater$\\$\neg wet$};
	\node  [possible world] (2) [below of=0, xshift = 2.3cm,  yshift = 1cm,] {$inWater$\\$ wet$};
	
	\path[]
	(0) edge [loop right] node {$\neg putInWater$} (0)
	(0) [->] edge node[left] {$putInWater$} (2)
	(1) edge [loop right] node {$\neg putInWater$} (1)
	(2) edge [loop right] node {$\neg putInWater,~~$  $putInWater$} (2)
	(1) [->] edge node[left] {$putInWater$} (2);
	\end{tikzpicture}
	\normalsize
	
	\caption{Transition diagram for Water domain.\label{fig:trsys}} \end{figure}
% Review of Traditional Logic Programs, Strong Equivalence, and Natural Deduction System With the Weak Law of Excluded Middle}\label{sec:trpr}

\subsection{On Relation between the Original and Modern Formalizations of \C}\label{sec:umbrella}
We start this section by reviewing the original formalization of action language~\C in the language of logic programs under answer set semantics~\citep{lif99b}. 
Specifically, \citeN{lif99b} proposed a translation from an action description $D$ in~\C to a logic program $lp_T(D)$ so that the answer sets of this program capture all the "histories" of length $T$ in the transition system specified by $D$. 

Since that original work, languages of answer set programming have incorporated  new features such as, for instance, choice rules. At present, these are commonly used by the practitioners of ASP. It is  easy to imagine that in a modern formalization of action language~\C, given a system description~$D$ a resulting program will be different from the original  $lp_T(D)$. In fact,~\citeN{bab13} present a translation of an action language {\C}$+$ (according to \citeN[Section~7.3]{giu04}  $\C$ is {\em the immediate predecessor} of $\C+$) that utilizes modern language features such as choice rules. In Section~\ref{sec:sbtr}, we present this translation  for the case of \C. In particular, we 
restrict the language of $\C+$ to Boolean, or two-valued, fluents (in general, $\C+$ permits multivalued fluents).
We call this translation~$simp_T(D)$. 
Although,  $lp_T(D)$ and $simp_T(D)$ share a lot in common they are substantially different. To begin with, the signatures of these programs are not identical. Also, $simp_T(D)$ utilizes choice rules. The programs $lp_T(D)$ and $simp_T(D)$ are different enough that it is not immediately obvious that their answer sets capture the same entities. There are two ways to argue that the program 
$simp_T(D)$ is ``essentially the same'' as program $lp_T(D)$:
to illustrate that the answer sets of $simp_T(D)$ capture all the "histories" of length~$T$ in the transition system specified by $D$ by relying
\begin{enumerate} 
	\item   on the definitions of action language~\C;
	\item   on the properties of 
	programs $lp_T(D)$ and $simp_T(D)$ that establish a one-to-one correspondence between their answer sets.
\end{enumerate}

Here we take the second way into consideration. We illustrate how the concepts of strong equivalence and conservative extension together with formal results previously discovered about these 
prove to be of essence in this argument. The details of this argument are given in the Appendix. Thus, we showcase a proof technique for arguing on the correctness of a logic program. This proof technique assumes the existence of a ''gold standard'' logic program formalizing a problem at hand, in a sense that this gold standard is trusted to produce correct results. 
%The other logic program formulation of this problem is related to this gold standard as they may share some rules and yet may even operate on  different signatures. 
It is a common practice in  development of answer set programming solutions  to obtain a final formalization of a problem by first producing such a gold standard program and then applying a number of rewriting procedures to that program to enhance its performance. The benefits of the proposed method are twofold. 
First, this methodology can be used by software engineers during a formal analysis of their solutions. Second, we trust that this methodology paves a way for a general framework for arguing correctness of common program rewritings so that they can be automated for the gain of performance. This is a question for investigation in the future.

\subsubsection{Review of Basic Translation}\label{sec:btr}

Let $D$ be an action description. \citeN{lif99b} defined a translation from action description $D$ to a logic program $lp_T(D)$ parametrized with a positive integer~$T$ that intuitively represents a time horizon. The remarkable property of logic program  $lp_T(D)$ that its answer sets correspond to "histories" -- path/trajectories of length $T$ in the transition system described by $D$.

Recall that by $\sigma^{fl}$ we denote fluent names of $D$ and  by $\sigma^{act}$ we denote elementary action names of $D$. Let us construct "complementary" vocabularies to $\sigma^{fl}$ and $\sigma^{act}$ as follows
$$\t{\sigma^{fl}}=\{\t{a}\mid a\in \sigma^{fl}\}$$ 
and 
$$\t{\sigma^{act}}=\{\t{a}\mid a\in \sigma^{act}\}.$$

For a literal $l$, we define 
$$\h{l}=\begin{cases}
a\hbox{ if $l$ is an atom $a$}\\
\t{a}\hbox{ if $l$ is a literal of the form $\neg a$}\\
\end{cases}
$$
and
$$\o{l}=\begin{cases}
\t{a}\hbox{ if $l$ is an atom $a$}\\
{a}\hbox{ if $l$ is a literal of the form $\neg a$}\\
\end{cases}
$$

The language of $lp_T(D)$ has atoms of four kinds:
\begin{enumerate}
	\item fluent atoms--the fluent names of $\sigma^{fl}$ followed by $(t)$ where $t = 0,\dots, T$, 
	\item action atoms--the action names of $\sigma^{act}$ followed by $(t)$ where $t = 0,\dots, T-1$, 
	\item complement fluent atoms--the elements of $\t{\sigma^{fl}}$ followed by $(t)$ where $t = 0,\dots, T$,
	\item complement action atoms--the elements of $\t{\sigma^{act}}$ followed by $(t)$ where \hbox{$t = 0,\dots, T-1$}.
\end{enumerate}
%Thus every atom in this language ends with $(t)$ for some natural number $t$. We call this 
%number the {\em time stamp} of an atom.
Program $lp_T(D)$ consists of the following rules:
\begin{enumerate}
	\item for every atom $a$ that is a fluent or action atom of the language of  $lp_T(D)$
	\beq
	\bot \ar a,\t{a}
	\eeq{eq:consistency}
	and 
	\beq
	\bot\ar\ not\  a,\ not\  \t{a}
	\eeq{eq:complete}
	\item for every static law~\eqref{eq:slaw} in $D$, the rules 
	\beq
	\h{l_0}(t) \ar\ not\ \o{l_1}(t),\dots,\ not\ \o{l_m}(t) 
	\eeq{eq:rulestatic}
	for all $t=0,\dots,T$ (we understand $\h{l_0}(t)$ as $\bot$ if $l_0$ is $\bot$),
	\item for every dynamic law~\eqref{eq:dlaw} in $D$, the rules 
	\beq
	\ba{r}
	\h{l_0}(t+1) \ar\ not\ \o{l_1}(t+1),\dots,\ not\ \o{l_m}(t+1),
	\h{l_{m+1}}(t),\dots,\ \h{l_n}(t), 
	\ea
	\eeq{eq:dynamicorig}
	for all $t=0,\dots,T-1$,
	\item the rules
	\beq
	\ba{l}
	\t{a}(0)\ar\ not\ a(0)\\ 
	a(0)\ar\ not\ \t{a}(0),\\ 
	\ea
	\eeq{eq:fl0}
	for all fluent names  $a$ in $\sigma^{fl}$ and
	\item  for every atom $a$ that is an action atom of the language of  $lp_T(D)$
	the rules
	$$
	\ba{l}
	\t{a}\ar\ not\ a\\ 
	a\ar\ not\ \t{a}.\\ 
	\ea
	$$ 
\end{enumerate}
\begin{prop}[Proposition~1 in~\citep{lif99b}]~\label{prop:lif} For a set $X$ of atoms,~$X$ is an answer set for $lp_T(D)$ if and only if it has the form 
	
	$$
	\Bigg[\bigcup_{t=0}^{T-1}{\{\h{l}(t)\mid l\in s_t\cup a_t\}}\Bigg]~\cup~\{\h{l}(T)\mid l\in s_T\}
	$$
	for some path $\langle s_0,a_0,s_1,\dots,s_{T-1},a_{T-1},s_T\rangle$ in the transition system described by~$D$.
\end{prop}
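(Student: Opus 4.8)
The plan is to prove the biconditional by decoding each candidate answer set into a time-indexed sequence of interpretations and then matching the answer-set machinery of $lp_T(D)$ step by step against the definitions of \emph{state}, \emph{caused} literal, and \emph{causally explained transition}. First I would observe that the consistency rules~\eqref{eq:consistency} and completeness rules~\eqref{eq:complete} force any answer set $X$ to contain, for each fluent name and each $t\in\{0,\dots,T\}$ (resp.\ each action name and each $t\in\{0,\dots,T-1\}$), exactly one of the complementary atoms $a(t)$, $\t{a}(t)$. Hence $X$ determines, and is determined by, a sequence $\langle s_0,a_0,\dots,s_{T-1},a_{T-1},s_T\rangle$ via $s_t\models f \iff f(t)\in X$ and $a_t\models\alpha \iff \alpha(t)\in X$, and $X$ is then exactly the set displayed in the statement. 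This reduces the claim to showing that this sequence is an answer set iff every $s_t$ is a state and every triple $\langle s_t,a_t,s_{t+1}\rangle$ is a causally explained transition.

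The core step is to analyze the reduct $lp_T(D)^X$, a basic program, and compute its minimal model layer by layer in increasing time. Since $\o{l_i}(t)\notin X \iff s_t\models l_i$, a static-law rule~\eqref{eq:rulestatic} at time $t$ becomes the fact $\h{l_0}(t)\ar$ in the reduct precisely when $s_t\models l_1\wedge\dots\wedge l_m$; similarly a dynamic-law rule~\eqref{eq:dynamicorig} survives as $\h{l_0}(t+1)\ar \h{l_{m+1}}(t),\dots,\h{l_n}(t)$ exactly when $s_{t+1}\models l_1\wedge\dots\wedge l_m$, and its positive body holds iff $s_t\cup a_t\models l_{m+1}\wedge\dots\wedge l_n$. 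Reading these two conditions against the definition of a caused literal shows that the derivable atoms of the form $\h{l}(t+1)$ are exactly $\{\h{l}(t+1)\mid l \text{ caused in } \langle s_t,a_t,s_{t+1}\rangle\}$, with the time-$0$ layer supplied by the choice rules~\eqref{eq:fl0} and the action layers supplied freely by the action-choice rules. A bottom-up induction on $t$ then identifies the minimal model of $lp_T(D)^X$ with the set of caused literals at each layer.

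With this characterization both directions follow. For the forward direction, $X$ being an answer set means $X$ equals this minimal model and satisfies the constraints; matching $\{\h{l}(t)\mid l\in s_t\}$ with the derivable atoms forces each $s_t$ to satisfy every static-law implication (so $s_t$ is a state) and forces $s_{t+1}$ to coincide with the caused literals (so each transition is causally explained). For the backward direction, given a path I would verify that the decoded $X$ is a model of the reduct and is minimal, using that $s_{t+1}$ equals the caused literals to match the derivable layer exactly. The main obstacle, and the place where the constraints~\eqref{eq:consistency}--\eqref{eq:complete} do essential work, is pinning down the \emph{equality} $s_{t+1}=\{\text{caused literals}\}$ rather than mere inclusion: completeness~\eqref{eq:complete} rules out a fluent left uncaused at some $t+1$ (a gap in $s'$), while consistency~\eqref{eq:consistency} rules out a fluent caused both positively and negatively (an over-determined $s'$). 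Reconciling these two constraints with the minimality of the model, and carrying the induction cleanly across the time layers so that the negation-as-failure conditions frozen into the reduct stay in step with the decoded $s_{t+1}$, is the delicate part of the argument.
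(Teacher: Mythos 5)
This proposition is not proved in the paper at all: it is imported verbatim as Proposition~1 of Lifschitz \citeyearp{lif99b} and used as the ``gold standard'' against which $simp_T(D)$ is later measured, so there is no in-paper argument to compare yours against. Your reconstruction follows the standard route of the original source --- decode $X$ into $\langle s_0,a_0,\dots,s_T\rangle$ via the consistency/completeness constraints, compute the reduct, and match its layer-by-layer minimal model against the definition of caused literals and causally explained transitions --- and the key observations (that $\o{l_i}(t)\notin X$ iff $s_t\models l_i$, that static rules reduce to facts and dynamic rules to positive rules whose survival encodes the ``if'' part while their positive bodies encode the ``after'' part, and that the two constraints are exactly what upgrades ``$s_{t+1}\supseteq$ caused literals'' to equality) are all correct. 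The only point worth making explicit when you write this out is the time-$0$ layer: the rules~\eqref{eq:fl0} make every literal of $s_0$ a fact of the reduct, but the static-law rules at $t=0$ also contribute facts, and it is the interplay of these with the consistency constraint~\eqref{eq:consistency} and minimality that forces $s_0$ to be closed under the static laws, i.e.\ to be a state --- a condition your sketch needs but only derives for $s_{t+1}$ via the causally-explained equality.
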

We note that \citeN{lif99b} presented $lp_T$ translation and Proposition~1 using both default negation $not$ and classical negation $\neg$ in the program. Yet, classical negation 
can always be eliminated from a program by means of auxiliary atoms and additional constraints as it is done here. In particular, auxiliary atoms have the form $\t{a}(i)$ (where $\t{a}(i)$ intuitively stands for literal~$\neg a(i)$), while the additional constraints have the form~\eqref{eq:consistency}.

To exemplify this translation, consider $\C$ action description~\eqref{eq:cexample}. Its
 translation  consists of all rules of the form
$$
\ba{ll|ll}
1. &\bot\ar inWater(t),\t{inWater}(t) &2.
&wet(t)\ar\ not\ \t{inWater}(t)\\
&\bot\ar\ not\ inWater(t),\ not\ \t{inWater}(t)&&\\

&\bot\ar wet(t),\t{wet}(t)\\
&\bot\ar\ not\  wet(t),\ not\ \t{wet}(t)&&\\

&\bot\ar putInWater(t),\ not\ \t{putInWater}(t)&&\\
&\bot\ar\ not\  putInWater(t),\ not\ \t{putInWater}(t)&&\\
%\hline
\ea$$
$$
\ba{ll|ll}
3.& inWater(t+1)\ar\ {putInWater}(t)&4.
&\t{inWater}(0)\ar\ not\ {inWater(0)}\\
&inWater(t+1)\ar\ not\ \t{inWater}(t+1),\ {inWater}(t)&&inWater(0)\ar\ not\ \t{inWater(0)}\\

&\t{inWater}(t+1)\ar\ not\ {inWater}(t+1), \t{inWater}(t)&&\t{wet}(0)\ar\ not\ {wet(0)}\\
&wet(t+1)\ar\ not\ \t{wet}(t+1), \ {wet}(t)&&wet(0)\ar\ not\ \t{wet(0)}\\
&\t{wet}(t+1)\ar\ not\ {wet}(t+1), \t{wet}(t)\\
\hline

5.
&\t{putInWater}(t)\ar\ not\ {putInWater(t)}&&\\
&putInWater(t)\ar\ not\ \t{putInWater(t)}&&\\
\ea
$$

\subsubsection{Simplified Modern Translation}\label{sec:sbtr}
As in the previous section, let $D$ be an action description and $T$ a positive integer. 
In this section we define a translation from action description $D$ to a logic program $simp_T(D)$ inspired by $lp_T(D)$ and the advances in answer set programming  languages. The main property of logic program  $simp_T(D)$ is as in case of 
$lp_T(D)$ 
that its answer sets correspond to histories captured by the transition system described by $D$.  This translation is a special case of a translation by~\citeN{bab13} for an action language {\C}$+$ 
that is limited  to two-valued fluents.

The language of $simp_T (D)$ has atoms of three kinds that coincide with the three first groups~(1-3) of atoms identified in the language of  $lp_T(D)$.
%We denote the set of described fluent and action atoms by $\sigma^T$.

For a literal $l$, we define 
$$\hh{l}=\begin{cases}
not~a\hbox{ if $l$ is a literal of the form $\neg a$, where $a\in\sigma^{act}$}\\
\h{l}\hbox{ otherwise}
\end{cases}
$$
%For a literal $l$, we define 
%$$\n{l}=\begin{cases}
%not\ a\hbox{ if $l$ is an atom $a$}\\
%a\hbox{ if $l$ is a literal of the form $\neg a$}\\
%\end{cases}
%$$

Program $simp_T(D)$ consists of the following rules:
\begin{enumerate}
	\item for every fluent atom $a$ the
	rules of the form~\eqref{eq:consistency}
	and~\eqref{eq:complete},
	\item for every static law~\eqref{eq:slaw} in $D$,
	$simp_T(D)$ contains
	the rules of the form
	\beq
	\h{l_0}(t) \ar\ not\ not\ \h{l_1}(t),\dots,\ not\ not\ \h{l_m}(t) 
	\eeq{eq:rulestatic2}
	for all $t=0,\dots,T$\footnote{\citeN{bab13} allow rules with arbitrary formulas in their bodies so that in place of~\eqref{eq:rulestatic2} they consider rule $\h{l_0}(t) \ar\ not\ not\ (\h{l_1}(t)\wedge \dots\wedge \h{l_m}(t))$. Yet, it is well known that such a rule is strongly equivalent to~\eqref{eq:rulestatic2}. Furthermore, more answer set solvers allow  rules of the form~\eqref{eq:rulestatic2} than more general rules considered in~\citep{bab13}.},
	\item for every dynamic law~\eqref{eq:dlaw} in $D$, 
	the rules
	\begin{align*}
	\h{l_0}(t+1) \ar\ not\ not\ \h{l_1}(t+1),\dots,\ not\ not\ \h{l_m}(t+1),\\  
	\hh{l_{m+1}}(t),\dots,\ \hh{l_n}(t), 
	\end{align*} 
	
	for all $t=0,\dots,T-1$,
	
	\item the rules
	\beq
	\ba{l}
	\{\t{a}(0)\}\\ 
	\{a(0)\}\\ 
	\ea
	\eeq{eq:fl2}
	for all fluent names  $a$ in $\sigma^{fl}$ and
	\item for every atom $a$ that is an action atom of the language of  $lp_T(D)$,
	the choice rules
	$
	\{a\}.$
	Here we note that the language \C assumes every action to be exogenous, whereas this is not the case in  $\C+$, where it has to be explicitly stated whether an action has this property. Thus, in~\citep{bab13} rules of this group only appear for the case of actions that have been stated exogenous. 
	
\end{enumerate}

The $simp_T$  translation of \C action description~\eqref{eq:cexample} consists of all rules of the form
$$
\ba{ll|ll}
1. &\bot\ar inWater(t),\t{inWater}(t)&2.
&wet(t)\ar\ not\ not\ {inWater}(t)\\

&\bot\ar\ not\ inWater(t),\ not\ \t{inWater}(t)&&\\

&\bot\ar wet(t),\t{wet}(t)&&\\
&\bot\ar\ not\  wet(t),\ not\ \t{wet}(t)&&\\
\hline
3.& inWater(t+1)\ar\ {putInWater}(t)&4.
&\{\t{inWater}(0)\}\\
&\{inWater(t+1)\}\ar\  {inWater}(t)&&\{inWater(0)\}\\

&\{\t{inWater}(t+1)\}\ar\  \t{inWater}(t)&&\{\t{wet}(0)\}\\
&\{wet(t+1)\}\ar\  {wet}(t)&&\{wet(0)\}\\
&\{\t{wet}(t+1)\}\ar\  \t{wet}(t)&&\\

\hline
5.&\{putInWater(t)\}\\
\ea
$$

\subsubsection{On the Relation Between Programs $lp_T$ and $simp_T$}\label{sec:relation}
Proposition~\ref{prop:main} stated in this section is the key result of this part of the paper. Its proof outlines the essential steps that we take in arguing that two logic programs $lp_T$ and $simp_T$ formalizing the action language \C are essentially the same.
The key claim of the proof is that logic program $lp_T(D)$ is a conservative extension of $simp_T(D)$. Here we only outline the proof whereas the Appendix of the paper provides a complete proof.

The argument of this claim requires some close attention to groups of rules in the $lp_T(D)$ program. In particular, we establish by means of weak natural deduction that
\begin{itemize}
	\item the rules in group 1 and 2 of $lp_T(D)$ are strongly equivalent to 
	the rules in group~1 and 2  of $simp_T(D)$ and 
	\item the rules in group 1 and 4 of $lp_T(D)$ are strongly equivalent to 
	the rules in group~1 and 4  of $simp_T(D)$. 
\end{itemize} 
Similarly, we  show that 
\begin{itemize}
	\item
	the rules in group 1 and 3 of $lp_T(D)$ are strongly equivalent to the rules in group 1 of $simp_T(D)$ and the rules structurally similar to rules in group 3 of  $simp_T(D)$ and yet not the same.
\end{itemize} 
These arguments allow us to construct a program $lp'_T(D)$, whose answer sets are the same as those of $lp_T(D)$. Program $lp'_T(D)$ is 
a conservative extension of $simp_T(D)$
due to explicit definition rewriting. Proposition~\ref{prop:ce} helps us to uncover this fact.

\begin{comment}
It is currently difficult to state precisely how the described analysis can be automated. Yet, some key points are identified. For example, if to follow the outlined method, a theorem proving engine for  weak natural deduction is of an essence. Also, multiple times we  focused on the relation of parts of programs (i.e., we claimed the relation between the rules in group 1 and 2 of $lp_T(D)$ and 
the rules in group 1 and~2  of $simp_T(D)$ and so on). Thus, the developments in modular formalisms of logic programs (for example see~\cite{oik08,den12,lt13,har16}) may have an important contribution in systematic program analysis. At last, the creation of a portfolio of formal results for common program rewritings may prove to be essential. For example, here we presented Proposition~\ref{prop:ce} that allows us to talk about common explicit definition rewriting. \citeN{har16} present a formal result about a projection rewriting. A portfolio of such results will aid in the formal analysis of program's relations. At present, we believe  that the detailed proof of 
Proposition~\ref{prop:main} is the right step in the direction of  equipping answer set programming  developers with  tools in the formal analysis of the variants of programs they develop for applications at hand. 
\end{comment}

%We now turn our attention to formal statement of the main result of this paper.
Recall that the language of $simp_T(D)$ includes the action atoms --- the action names of~$\sigma^{act}$ followed by $(t)$ where $t = 0,\dots T-1$. We denote the action atoms by $\sigma_T^{act}$. 

\begin{prop}\label{prop:main}
	For a set $X$ of atoms,
	$X$ is an answer set for $simp_T(D)$ if and only if the set 
	$X\cup \{\t{a}\mid a\in \sigma_T^{act}\setminus X\}$
	has the form  
	$$
	\Bigg[\bigcup_{t=0}^{T-1}{\{\h{l}(t)\mid l\in s_t\cup a_t\}}\Bigg]~\cup~\{\h{l}(T)\mid l\in s_T\}
	$$
	for some path $\langle s_0,a_0,s_1,\dots,s_{T-1},a_{T-1},s_T\rangle$ in the transition system described by~$D$.
	
\end{prop}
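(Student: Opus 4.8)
The plan is to deduce the statement from Proposition~\ref{prop:lif} once we have shown that $lp_T(D)$ is a conservative extension of $simp_T(D)$ whose only extra atoms are the complement action atoms. Let $Q$ consist of the complement action atoms $\t{a}(t)$ ($a\in\sigma^{act}$, $0\le t\le T-1$); these do not occur in $simp_T(D)$, since its language has only the first three groups of atoms. For each such atom set $def(\t{a}(t))=\neg a(t)$, a basic conjunction over $\sig{simp_T(D)}$, and form $simp_T(D)[Q,def(Q)]$ as in Section~\ref{sec:edef}. The first thing I would check is that this rewriting is exactly the explicit-definition construction: the only rule bodies of $simp_T(D)$ in which the conjunct $\neg a(t)$ with $a\in\sigma^{act}$ occurs are those of the dynamic-law rules (group~3), where a negative action literal contributes $\hh{l_i}(t)=\ not\ a(t)$; the choice rules of group~5 contribute $\neg\neg a(t)$, not $\neg a(t)$, and the remaining groups mention no action literal in a body. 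Hence Proposition~\ref{prop:ce} applies and yields that $simp_T(D)[Q,def(Q)]$ is a conservative extension of $simp_T(D)$. Because $\t{a}(t)$ occurs only in the head of the single added rule $\t{a}(t)\ar not\ a(t)$, the answer set of $simp_T(D)[Q,def(Q)]$ that corresponds to an answer set $X$ of $simp_T(D)$ is precisely $X\cup\{\t{a}\mid a\in\sigma_T^{act}\setminus X\}$.

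It then remains to show that $lp_T(D)$ and $simp_T(D)[Q,def(Q)]$ have the same answer sets. Here I would treat the consistency constraints $\bot\ar a,\t{a}$ and $\bot\ar not\ a,not\ \t{a}$ for the \emph{fluent} atoms (group~1) as a fixed background and, reasoning in the logic of here-and-there through the weak natural deduction calculus, derive from them the body-level equivalences $\neg\t{a}\leftrightarrow\neg\neg a$ and $\neg a\leftrightarrow\neg\neg\t{a}$ (both are already intuitionistically derivable). These equivalences show that, modulo the fluent consistency constraints, the static-law rules (group~2), the initial-fluent rules (group~4) and the dynamic-law rules (group~3, after the rewriting above has replaced $not\ a(t)$ by $\t{a}(t)$) of $lp_T(D)$ are strongly equivalent to the corresponding groups of $simp_T(D)[Q,def(Q)]$, since corresponding rule bodies become here-and-there equivalent. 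Replacing one strongly equivalent block by another preserves answer sets in every context, so $lp_T(D)$ has the same answer sets as the program $lp'_T(D)$ that agrees with $simp_T(D)[Q,def(Q)]$ on all fluent- and dynamic-law groups but keeps, for the action atoms, the group~1 constraints together with the group~5 rules $\t{a}\ar not\ a$ and $a\ar not\ \t{a}$ of $lp_T(D)$.

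The step I expect to be the main obstacle is the last one, because the action rules of $lp'_T(D)$ and of $simp_T(D)[Q,def(Q)]$ are \emph{not} strongly equivalent: the choice rule $\{a\}$ with $\t{a}\ar not\ a$ has the classical model making both $a$ and $\t{a}$ true, which the constraint $\bot\ar a,\t{a}$ excludes. The route around this is the observation that in both programs every action atom $a(t)$ and every complement action atom $\t{a}(t)$ occurs only positively in the bodies of the remaining rules (only in the dynamic-law group). Fixing an interpretation that makes exactly one of $a(t),\t{a}(t)$ true for each action, I would compute the two reducts and see that they coincide except for the constraint $\bot\ar a,\t{a}$, which every such candidate already satisfies; one then checks, using that action atoms appear only in heads of the action rules, that no answer set of either program can contain both or neither of $a(t),\t{a}(t)$. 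Consequently the two programs have exactly the same answer sets, and so do $lp_T(D)$ and $simp_T(D)[Q,def(Q)]$.

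Assembling the chain finishes the proof: $X$ is an answer set of $simp_T(D)$ iff, by Proposition~\ref{prop:ce}, $X\cup\{\t{a}\mid a\in\sigma_T^{act}\setminus X\}$ is an answer set of $simp_T(D)[Q,def(Q)]$, iff by the preceding paragraphs it is an answer set of $lp_T(D)$, iff by Proposition~\ref{prop:lif} it has the required form for some path in the transition system described by $D$. Besides the action block above, the two points demanding care are keeping the fluent and action instances of group~1 separate when invoking strong equivalence, and verifying that the here-and-there equivalences are exactly what makes the rewritten group~3 bodies match those of $lp_T(D)$.
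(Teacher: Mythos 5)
Your proposal is correct and its skeleton is the same as the paper's: derive the body-level equivalences $\neg\t{a}\lrar\neg\neg a$ and $\neg a\lrar\neg\neg\t{a}$ from the group-1 consistency constraints and use them to match the static, dynamic, and initial-state groups; handle the complement action atoms via the explicit-definition rewriting of Proposition~\ref{prop:ce}; and finish with Proposition~\ref{prop:lif}. Where you genuinely diverge is at the action block. The paper transforms $lp_T(D)$ into a program $lp'_T(D)$ (keeping its groups 1 and 5 intact) and simply asserts that $lp'_T(D)$ \emph{coincides} with $simp_T(D)[Q,def(Q)]$; read literally this identification requires the rewriting to turn the choice rule $\{a\}$ into $a\ar not\ \t{a}$ and leaves the action-atom constraints $\bot\ar a,\t{a}$ and $\bot\ar not\ a,\ not\ \t{a}$ unaccounted for. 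You instead keep $\{a\}$ unrewritten, observe correctly that the two action blocks are \emph{not} strongly equivalent, and supply a separate reduct-level argument using the fact that $a(t)$ and $\t{a}(t)$ occur only positively in the remaining bodies and only in the heads of the action rules. This buys a more self-contained justification of the one step the paper leaves implicit, at the cost of a direct semantic argument that falls outside the strong-equivalence/conservative-extension toolkit the rest of the proof relies on; an alternative that stays inside the paper's framework would be to note that $\bot\ar not\ a,\ not\ \t{a}$ is intuitionistically entailed by $\t{a}\ar not\ a$ and to discharge $\bot\ar a,\t{a}$ via a completion-style argument. Your side remark that the needed equivalences are already intuitionistically derivable (so the weak excluded middle of Lemma~\ref{lem:aux} is not strictly needed for them) is also accurate.
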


\subsubsection{Additional Concluding Remarks: An Interesting Byproduct}
Our work, which  illustrates that logic programs $lp_T(D)$ and $simp_T(D)$ are essentially the same, also uncovers a precise formal link between the action description languages $\C$ and $\C+$. The authors of $\C+$ claimed that $\C$ is an immediate predecessor of $\C+$. Yet,  to the best of our knowledge the exact formal link between the two languages has not been stated. Thus, earlier one could view $\C+$ as a generalization of~$\C$ only {\em informally} alluding to the fact that $\C+$ allows the same intuitive interpretation of syntactic expressions of $\C$, while generalizing these to allow multivalued fluents in place of Boolean ones. These languages share the same syntactic constructs 
such as, for example, a dynamic law 
of the form
$$ \caused f_0 \ifc f_1\wedge\cdots\wedge f_m \ \after\ a_{1}\wedge\cdots\wedge a_n. 
$$ 
that we intuitively read as after the concurrent execution of actions $a_{1}\dots a_n$ the fluent expression~$f_0$ holds in case if fluents expressions $f_1\dots f_m$ were the case at the time when aforementioned actions took place.   
Both languages provide interpretations for such expressions that meet our intuitions of this informal reading. Yet, if one studies the semantics of these languages it is not trivial to establish a specific formal link between them. For example, the semantics of $\C+$ relies on the concepts of causal theories~\citep{giu04}.  The semantics of $\C$ makes no reference to these theories. Here we recall the translations of \C and $\C+$ to logic programs, whose answer sets correspond to their key semantic objects. 
We then state the precise relation between the two by means of relating the relevant translations. In conclusion, $\C+$ can be viewed as a true generalization of the language~$\C$ to the case of multi-valued fluents.
%, when all considered fluents are declared inertial. 

\vspace{-1em}
\section{Programs with Variables}~\label{sec:foprogram}
We now proceed towards the second part of the paper devoted to programs with variables. We start by presenting its detailed outline, a new running example and then stating the preliminaries. We conclude with the formal statements on a number of rewriting techniques.

On the one hand, this part of the paper can be seen as a continuation of work by~\citeN{eit06}, where we consider common program rewritings using a more complex dialect of logic programs. On the other hand, this part of the paper grounds the concept of program synonymity studied by~\citeN{pea12} in a number of practical examples. Namely, we illustrate how formal results on strong equivalence developed earlier and in this work  help us  to construct precise claims about programs  in practice.

%In other words,  under assumption that original encoding to a problem is correct how can we argue that a related and yet different encoding  is also correct.

%\paragraph{Related work and Rationale}

In this part of the paper, we systematically study some common rewritings  on first-order programs  utilized by ASP practitioners. 
We use a running example
to ground general theoretical presentation of this work into specific context. In particular,
 we consider two 
formalizations of a  planning module given in~\cite[Section 9]{gel14}: \begin{enumerate}[topsep=0pt]
	\itemsep0em 
	\item a \alchoice formalization that utilizes choice rules and aggregate expressions,
	\item a \dalchoice formalization that utilizes disjunctive rules.
\end{enumerate} 
Such a planning module is meant to be augmented with an ASP representation of a dynamic system description expressed in action language \al\footnote{It is due to remark that although~\citeN{gel14} use the word ``module'' when encoding a planning domain, they utilize this term only informally to refer to a collection of rules responsible for formalizing ``planning''.
}. \citeN{gel14} formally state in Proposition~9.1.1  that the answer sets of program  \dalchoice augmented with a given  system description  encode all the ``histories/plans'' of a specified length in the transition system captured by the system description. \textit{We note that no formal claim is provided for the \alchoice program.}
Although both \alchoice and \dalchoice programs {\em intuitively} encode the same knowledge the exact connection between them is not immediate. In fact, these programs 
\begin{itemize}
\item do not share the same signature, and
\item  use distinct syntactic constructs such as choice, disjunction, aggregates in the specification of a problem.
\end{itemize}
Here, we establish  
a one-to-one correspondence between the answer sets of these programs using
 their properties. 
Thus, the aforementioned formal claim about \dalchoice  translates into the same claim for   \alchoice.
%that can be coherently considered and discussed as a whole. This is in line with common practices in ASP, where modules are frequently only informally present as part of a conceptual design of an overall solution. Yet, researchers have considered modules formally (see, for instance, work by~\cite{jan07}). 

Here we use a dialect of the ASP language called 
%present the  \alchoice and \dalchoice programs that serve a role of a motivating {\em running example} using 
%a logic program language called 
RASPL-1~\citep{lee08}. Notably, this language  combines  choice, aggregate, and disjunction constructs. Its semantics is given in terms of the SM operator, which exemplifies the approach to the semantics of first-order programs that bypasses grounding. Relying on SM-based semantics allows us to refer to  earlier work that study the formal properties of first-order programs~\citep{fer09,fer09a} using this operator. 
We state a sequence of formal results on programs rewritings and/or programs properties. Some of these results are geared by our running example and may not appear of great general interest. Yet, we view the proofs of these results as an interesting contribution as they showcase how arguments of correctness of rewritings can be constructed by the practitioners. 
Also, some discussed rewritings are well known and frequently used in practice. %(Yet, to the best of our knowledge this paper is the first account where their correctness is stated in precise terms.)
Often, their correctness is an immediate consequence of well known properties about logic programs (e.g., relation between intuitionistically provable first-order formulas and strongly equivalent programs viewed as such formulas).
Other discussed rewritings are far less straightforward and require elaborations on previous theoretical findings about the operator SM.
%Now we list few more sophisticated findings stated   in this work.
It is well known that propositional head-cycle-free disjunctive programs~\citep{ben94} can be rewritten to nondisjunctive programs by means of a simple syntactic transformation. Here we not only generalize this result to the case of first-order programs, but also illustrate that at times we can remove disjunction from parts of a program even though the program is not head-cycle-free. 
This result is relevant to local shifting and component-wise shifting discussed in~\citep{eit06} and~\citep{jan07}, respectively. 
We also generalize so called Completion Lemma and Lemma on Explicit Definitions
stated in~\citep{fer05,fer05b} for the case of propositional theories and propositional logic programs. These generalizations are applicable to first-order programs. We conclude by applying the Lemma on Explicit Definitions proved here to argue the correctness of program rewriting system~{\sc projector}~\cite{hip18}.

%We use the  \alchoice and \dalchoice programs to illustrate the applicability of presented theorems so that we {\em make a claim that  programs \alchoice and \dalchoice are essentially the same mathematically precise}.
\begin{comment}
%AAAI-19-comment
We note that
many stated results (e.g., Completion Lemma and Lemma on Explicit Definitions) are for programs, whose syntax goes beyond the RASPL-1 language. In particular, an arbitrary first-order formula is also considered to be a program. 
%This fact  makes some definitions (e.g., a definition of a dependency graph) rather complex. Yet, 
We find that it is important to state these results in such a general form as we intend them to be applicable for claiming the correctness of a variety of rewritings in the future that may go beyond the fragment of RASPL-1.
\end{comment}

\subsection{Running  Example and Claims}
 
This section presents two ASP formalizations of a domain independent {\em planning module} given in \cite[Section 9]{gel14}. Such  planning module   is meant to be augmented with a logic program encoding a system description expressed in action language~\al that represents a domain of interest  (in Section 8 of their book~\cite{gel14}, Gelfond and Kahl present a sample Blocks World domain representation). 
Two formalizations of a planning module are stated here almost verbatim.
% There are few inessential differences. 
 Predicate names~$\occurs$ and~$\sthHpd$ intuitively stand for {\em occurs} and {\em something\_happend}, respectively. 
We  eliminate classical negation symbol by  
\begin{itemize}
\item utilizing auxiliary predicates %\nholds and 
 \noccurs in place of  $\neg \occurs$; and 
 \item  introducing rule
%\[
%\ba{l}
%$\ar \holds(F,I), \nholds(F,I),$ and %, \fluent(F), \step(I).\\
$\ar \occurs(A,I), \noccurs(A,I).$
%\ea
%\]
\end{itemize} 
This is a standard practice and ASP systems  perform the same rewriting when processing classical negation symbol~$\neg$ occurring in programs (in other words, symbol $\neg$ is treated as  syntactic sugar). 
%We present the complete encodings. The numbered rules  play a special role in this paper. 
%The remainder of the program  is necessary to verify some of its structural properties. 

%\]
Let

\begin{tabular}{lll}
	$SG(I)$& abbreviate&~~
	$
	\step(I),\ not\ goal(I),\ I\neq n,
	$~~   \end{tabular}
where $n$ is some integer specifying  a limit on a length of an allowed plan.
The first formalization called \alchoice  follows:
\begin{flalign}
& success \ar goal(I),\ 
           \step(I). 
             \nonumber\\%\label{eq:alchoice0} \\
&\ar not\ success.  \nonumber \\ 
&\ar \occurs(A,I), \noccurs(A,I)\label{eq:alchoice1}\\
%\hline
&\noccurs(A,I) \ar \action(A),\ \step(I),\
                not\ \occurs(A,I)\label{eq:alchoice2}\\
%\hline     
 &    \{\occurs(A,I)\} \ar  \action(A),\ SG(I)  \label{eq:alchoice3}\\
     %\hline
&     \ar 2\leq\#count\{A: \occurs(A,I)\},\ SG(I).\label{eq:alchoice4}\\
&     \ar not\ 1\leq \#count\{A: \occurs(A,I)\}, \ SG(I)\label{eq:alchoice5}
\end{flalign}
One more remark is in order. In~\cite{gel14}, Gelfond and Kahl  list only a single rule 
$$
\ba{l}
1\{\occurs(A,I): \action(A)\}1 \ar  SG(I)  
\ea
$$
in place of  rules~(\ref{eq:alchoice3}-\ref{eq:alchoice5}). Note that this single rule is an  abbreviation for rules~(\ref{eq:alchoice3}-\ref{eq:alchoice5})~\citep{geb15}.

%The horizontal lines in the encodings are immaterial for now.   

   The second formalization that we call a \dalchoice encoding   is obtained from \alchoice by  replacing  rules (\ref{eq:alchoice3}-\ref{eq:alchoice5})   with the following:
\begin{flalign}
&\occurs(A,I) \mid \noccurs(A,I) \ar \action(A),\ SG(I)\label{eq:dalchoice1}\\
&\ar \occurs(A,I),\ \occurs(A',I),\ \action(A),\ \action(A'),\ A \neq A'\label{eq:dalchoice2}\\
&\sthHpd(I) \ar \occurs(A,I)\label{eq:dalchoice3}\\
&\ar not\ \sthHpd(I),\ SG(I).\label{eq:dalchoice4}
\end{flalign}

It is important to note several  facts about the considered  planning module encodings. These planning modules are meant to be used with  logic programs  that capture  
\begin{itemize}
\item[(i)] a domain of interest originally stated as a system description in the action language \al; 
\item[(ii)] a specification of an initial configuration;
\item[(iii)] a specification of a goal configuration.
\end{itemize}
 The process of encoding (i-iii) as a logic program, which we call a \planinst encoding, follows a strict procedure. As a consequence, some important  properties hold about any \planinst. To state these it is 
convenient to recall the notion of a simple rule and define a ``terminal'' predicate.
% and a ''terminal'' predicate in a program. 
 
 A {\em signature} is a set of function and 
 predicate symbols/constants.   A function symbol of arity~0 is an {\em object constant}. 
 A {\em term} is an object constant, an object variable, or an expression of the form $f(t_1,\dots,t_m)$, where $f$ is a function symbol of arity $m$ and each $t_i$ is a term. An {\em atom} is an expression of the form $p(t_1,\dots,t_n)$ or $t_1=t_2$, where $p$ is an $n$-ary predicate symbol and each $t_i$ is a term.
A {\em simple body} has the form 
  $$a_1,\dots,a_m,\ not\ a_{m+1},\dots,\ not\ a_{n},$$
  where~$a_i$ is an atom and $n\geq 0$.
  Expression $a_1,\dots,a_m$ forms the positive part of a body.
A {\em simple} rule has the form 
%\beq
$$h_1\mid\cdots\mid h_k\ar \body$$
%\eeq{eq:almostfact}
 or
$$
\{h_1\}\ar \body$$
where $h_i$ is an atom and $\body$ is a simple body. 
We now state a recursive definition of a terminal predicate with respect to a program.
Let~$i$ be a nonnegative integer.
A predicate that occurs only in rules whose  body  is empty is called {\em 0-terminal}.
We call a predicate {\em $i+1$-terminal} when it occurs only in the heads of simple rules (left hand side of an arrow), furthermore 
\begin{itemize}
\item in these rules  all predicates occurring in their positive parts of the bodies must be at most {\em $i$-terminal} and 
\item at least one of these rules is such that some predicate  occurring in its positive part of the body is {\em $i$-terminal} .
\end{itemize}  We call any $x$-terminal predicate {\em terminal}. For example, in program 
$$
\begin{array}{l}
 \block(b0).\   \block(b1).\\ 
\location(X) \ar \block(X).\ ~~~~~~ \location(table).
\end{array}
$$
$\block$ is a $0$-terminal predicate, $\location$ is a $1$-terminal predicate; and both predicates are terminal. 

We are now ready to state important {\em Facts} about any possible  \planinst and, consequently, about the considered planning modules 
\begin{enumerate}[topsep=0pt]
	\itemsep0em 
	\item\label{enum:fact1} Predicate $\occurs$ never occurs in the heads of rules in \planinst. 
	\item\label{enum:fact2} Predicates $\action$ and $\step$ are terminal in  \planinst as well as in \planinst augmented by either \alchoice or \dalchoice.
	\item\label{enum:fact2.5} By Facts~\ref{enum:fact1} and~\ref{enum:fact2}, predicate $\occurs$ is terminal in \planinst augmented by either \alchoice or \dalchoice.
	\item\label{enum:fact3} Predicate $\sthHpd$ never occurs in the heads of the rules in \planinst. 		
	\end{enumerate}
In the remainder of the paper we use considered  theoretical results to  illustrate  the following  {\observations}: 
\begin{enumerate}[topsep=0pt]
	\itemsep0em 
	
	\item\label{enum:observ4} In the presence of rule~\eqref{eq:alchoice2} it is safe to add  a rule
	\beq
	\noccurs(A,I) \ar not\ \occurs(A,I),\ \action(A),\ SG(I)
	\eeq{eq:disj2}
	into an arbitrary program.
By ``safe to add/replace'' we understand that the resulting program has the same answer sets as the original one.	
\item\label{enum:observ2}  It is safe to replace rule~\eqref{eq:alchoice4} with 
rule
\beq
\ar \occurs(A,I),\ \occurs(A',I),\ SG(I),\ A \neq A'
\eeq{eq:ruleSGI} within an arbitrary program.

\begin{comment}
\item\label{enum:observ6}  It is  safe to replace rule~\eqref{eq:alchoice5}
with rule
\beq
\ar not\ \occurs(A,I),\ SG(I).
\eeq{eq:ruleagrewr}
in an arbitrary program.
\end{comment}

\item\label{enum:observ1} In the presence of rules \eqref{eq:alchoice1} and  \eqref{eq:alchoice2}, it is safe to replace rule~\eqref{eq:alchoice3} with rule
\beq
\occurs(A,I)\ar not\ \noccurs(A,I),\ \action(A),\ SG(I)
\eeq{eq:disj1}
 within an arbitrary program. 

\item\label{enum:observ5} Given the syntactic features of the \alchoice encoding and any \planinst encoding,
 it is  safe to replace rule~\eqref{eq:alchoice3} with rule~\eqref{eq:dalchoice1}. The argument utilizes \observations~\ref{enum:observ4} and~\ref{enum:observ1}. 
 Fact~\ref{enum:fact3} forms an essential syntactic feature.

\item\label{enum:observ3} Given the syntactic features of the  \alchoice encoding  and any \planinst encoding, it is  safe to replace rule~\eqref{eq:alchoice4} with rule~\eqref{eq:dalchoice2}. The argument utilizes \observation~\ref{enum:observ2}, i.e., it is safe to replace rule~\eqref{eq:alchoice4} with 
rule
\eqref{eq:ruleSGI}. An essential syntactic feature relies on 
Fact~\ref{enum:fact1}, and the facts that (i) rule~\eqref{eq:alchoice3}
is  the only  one in \alchoice, where predicate $\occurs$ occurs in the head; and (ii)  rule~\eqref{eq:dalchoice2} differs from~\eqref{eq:ruleSGI} only in atoms that are part of the body of~\eqref{eq:alchoice3}.
 \item\label{enum:observ7} By Fact~\ref{enum:fact3}  and  
 the fact that \sthHpd does not occur in any other rule but~\eqref{eq:dalchoice4} in \dalchoice,
 the answer sets of the  program 
obtained by replacing rule~\eqref{eq:alchoice5}  with rules~\eqref{eq:dalchoice3} and~\eqref{eq:dalchoice4}
 are in one-to-one correspondence with the answer sets of the program \dalchoice extended with \planinst. 
%(The argument utilizes \observation~\ref{enum:observ6}.) 
\end{enumerate}

\smallskip \noindent{\bf Essential Equivalence Between Two Planning Modules:}
These {\observations} are sufficient to state that the answer sets of the \alchoice and 
\dalchoice programs (extended with any \planinst) are in one-to-one correspondence. We can capture the simple relation between the answer sets of these programs by
 observing that dropping the atoms whose predicate symbol is \sthHpd from an answer set of the \dalchoice program results in an answer set of the \alchoice program.

\vspace{-1em}
 \subsection{Preliminaries: RASPL-1 Logic Programs, Operator \SM, Strong Equivalence}\label{sec:sm}
 We now review a logic programming language  RASPL-1~\citep{lee08}. % and Abstract Gringo~\citep{geb15}. 
 This language is sufficient to capture choice, aggregate, and disjunction constructs (as used in  \alchoice and \dalchoice). There are distinct and not entirely compatible semantics for aggregate expressions in the literature. We refer the interested reader to the discussion by~\citeN{lee08} on the roots of semantics of aggregates considered in RASPL-1.

 An {\em aggregate expression} is an expression of the form 
 \beq
 b\leq \#count\{\vec{x}:L_1,\dots,L_k\}
 \eeq{eq:agg}
 ($k\geq 1$), where $b$ is a positive integer ({\em bound}), 
 $\vec{x}$ is a list of variables (possibly empty), and each 
 $L_i$ is an atom possibly preceded by {\em not}. We call variables in {\bf x} {\em aggregate} variables. This expression states that 
 there are at least~$b$ values of $\vec x$ such that conditions 
 $L_1,\dots, L_k$ hold.

 A {\em body} is an expression of the form
\beq e_{1} , \dots , e_m , not\  e_{m+1} , \dots 
 , not\  e_n
 \eeq{eq:body}
 ($n\geq m\geq 0$)  where  each $e_i$ is an aggregate expression or an atom.
 A rule is an expression of either of the forms
\begin{align}
&
 a_1  \mid \dots \mid a_l \ar  \body
 \label{eq:ruleraspl1} &\\
& \{a_1\} \ar \body
 \label{eq:ruleraspl2}&
 \end{align}
 ($l\geq 0$) where
 	each $a_i$ is an atom, and
 $\body$ is the body in the form~\eqref{eq:body}.
  When $l=0$, we identify the head of~\eqref{eq:ruleraspl1} with symbol~$\bot$ and call such a rule a {\em denial}. 
  When $l=1$, we call rule~\eqref{eq:ruleraspl1} a {\em defining} rule. 
  We call rule~\eqref{eq:ruleraspl2} a {\em choice} rule. 
 %We call a variable {\em local} to aggregate expression $e_i$  of the form~\eqref{eq:agg} in rules~(\ref{eq:ruleraspl1}) and (\ref{eq:ruleraspl2}), when variable $x$ appears in $F_1,\dots,F_k$, but does not appear in $\vec{x}$ and does not appear in any atom of this rule that is not part of some aggregate expression. 
 A {\em (logic) program} is a  set of {\em rules}. 
 An atom of the form $not\ t_1=t_2$ is abbreviated by $t_1\neq t_2$.

%It is easy to see that rules in the~\alchoice and \dalchoice encodings  are in the RASPL-1 language. To illustrate the notion of a local variable consider RASPL-1 rule
%\beq
%\begin{array}{ll}
%good(X)\ar& vtx(X),2\leq\#count\{Y:e(X,Y),e(Y,Z),red(Z)\}
%\end{array}
%\eeq{eq:localex1}
%taken from Example~5 by~\citeN{bic17}. This rule intuitively says that a vertex is {\em good} if it has at least two neighbors (vertexes connected by edge $e$) that, themselves, have a red neighbor. Variable $Z$ in this rule is local to its only aggregate expression. Rules~\eqref{eq:alchoice4} and~\eqref{eq:alchoice5} have no local variables.

% AND GIVEN this new language and its new semantics possibly connect its definition to Abstract Gringo definition? 
 
\vspace{-1em}
% \paragraph{Review: Operator \SM}
\subsubsection{Operator \SM}\label{sec:opsm}
 Typically, the semantics of logic programs with variables is given by  stating that these rules are  an abbreviation for a  possibly infinite set of  propositional rules. Then the semantics of propositional programs is considered. The {\SM} operator introduced by \citeN{fer09} 
 gives 
 a definition for the semantics of first-order programs
 bypassing  grounding. 
 %We refer the reader to \citep{fer09} for the detailed definition of the operator \SM. Here it is sufficient to mention that 
 It is an operator that takes a first-order sentence $F$ and a tuple ${\bf p}$ of predicate symbols  and produces the second order sentence that we denote by $\SM_{\bf p}[F]$.

%\begin{comment}
 We now review the operator \SM.
 The symbols
 $ \bot,\land, \lor, \rar$, $\forall,$ and $\exists$ 
 are viewed as primitives. The formulas $\neg F$ %, $F \lrar G$,
 and $\top$ are 
 abbreviations for $F\rar\bot$
 %, $(F \rar G) \land (G \rar F),$  
 and $\bot \rar 
 \bot$, respectively. 
 If $p$ and $q$ are predicate symbols of arity~$n$ then $p \leq q$ is an 
 abbreviation for the formula 
 $
 \forall {\bf x}(p({\bf x}) \rar q({\bf x})), 
 $
 where ${\bf x}$ is a tuple of variables of length~$n$. If ${\bf p}$ and ${\bf 
 	q}$ are tuples $p_1, \dots, p_n$ and $q_1, \dots, q_n$ of predicate symbols then 
 ${\bf p} \leq {\bf q}$ is an abbreviation for the conjunction 
 $
 (p_1 \leq q_1) \land \dots \land (p_n \leq q_n),
 $ 
 and 
 ${\bf p} < {\bf q}$ is an abbreviation for 
 $({\bf p} \leq {\bf q}) \land \neg ({\bf q} \leq {\bf p}).$ We apply the same 
 notation to tuples of predicate variables in second-order logic formulas.   
 If ${\bf p}$ is a tuple of predicate symbols $p_1, \dots, p_n$ (not including equality), and $F$ 
 is a first-order sentence 
 then \SM$_{\bf p}[F]$ denotes the second-order sentence 
 $$ F \land \neg \exists {\bf u}({\bf u} < {\bf p}) \land F^*({\bf u}), $$
 where ${\bf u}$ is a tuple of distinct predicate variables $u_1, \dots, u_n$, and $F^*({\bf u})$ is defined recursively:

 \begin{itemize}[topsep=0pt]
 	\setlength\itemsep{0em}
 	%\item $\bot^*$ is $\bot$;
 	%\item $(t_1 = t_2)^*$ is $(t_1 = t_2)$ for any terms $t_1, t_2$; 
 	\item $p_i({\bf t})^*$ is $u_i({\bf t})$ for any tuple ${\bf t}$ of terms;
 	\item $F^*$ is $F$ for any atomic formula $F$ that does not contain members of 
 	$\bp$;\footnote{This includes equality statements and the formula $\bot$.} 
 	\item $(F \land G)^*$ is $F^* \land G^*$;
 	\item $(F \lor G)^*$ is $F^* \lor G^*$;
 	\item $(F \rar G)^*$ is $(F^* \rar G^*) \land (F \rar G) $;
 	\item $(\forall x F)^*$ is $\forall x F^*$;
 	\item $(\exists x F)^*$ is $\exists x F^*$.
 \end{itemize}
 Note that if ${\bf p}$ is the empty tuple then \SM$_{\bf p}[F]$ is equivalent to~$F$. For intuitions regarding the definition of the {\SM}  operator we direct the 
 reader to \cite[Sections  2.3, 2.4]{fer09}. 
 
\begin{comment}
For an interpretation~$I$ over signature $\sigma$ and a function symbol (or, 
 predicate symbol) $t$ from $\sigma$ by $t^I$ we denote a function (or, relation) 
 assigned to $t$ by $I$.    Let  $\sigma$ and $\Sigma$ be  signatures so that 
 $\sigma\subseteq\Sigma$.
 For interpretation $I$ over  $\Sigma$, by $I_{\mid \sigma}$ we denote the 
 interpretation over $\sigma$ constructed from $I$ so that for every function or 
 predicate symbol $t$ in $\sigma$, $t^I=t^{I_{\mid \sigma}}$.
 
 \end{comment}
By $\sigma(F)$ we denote the set of all 
function and 
predicate constants occurring in first-order formula $F$ (not including equality).
We will call this the {\em signature of $F$}. 
An interpretation~$I$ over $\sigma(F)$
is a {\em ${\bf p}$-stable model} of $F$ if it satisfies \SM$_{\bf p}[F]$, where {\bf p} is a tuple of predicates from $\sigma(F)$. 
%We 
%will sometimes refer to ${\bf p}$-stable models where ${\bf p}$ denotes a %{\emph set} rather than a tuple 
%of predicates. Since the cardinality of ${\bf p}$ will always be finite, the 
%meaning should be clear.
We note that a ${\bf p}$-stable model of $F$ is also a model of $F$. 
%Similarly, it is clear that for any interpretation $I$, if $I_{\mid \sigma(F)}$ is a 
%${\bf p}$-stable model of $F$ then $I$ satisfies \SM$_{\bf p}[F]$. We may 
%refer to such an interpretation as a ${\bf p}$-stable model as well. 

By $\pred(F)$ we denote the set of all 
predicate constants (excluding equality) occurring in a formula $F$.
Let $F$ be 
 a first-order sentence that contains at least one object constant. 
 We call an Herbrand interpretation of $\sigma(F)$ that
 is a 
  $\pred(F)$-stable model of $F$ {\em an answer set}.\footnote{ An Herbrand interpretation of a signature $\sigma$ (containing at least one object constant) is such that its universe is the set of all ground terms of $\sigma$, and every ground term represents itself.   An Herbrand interpretation can be identified with the set of ground atoms (not containing equality) to which it assigns the value true.} 
 Theorem 1 from \citep{fer09} illustrates in which sense this definition can be seen as a generalization of a classical definition of an answer set (via grounding and reduct) for typical logic programs whose syntax is more restricted than syntax of programs considered here. %arbitrary first-order formulas considered here.

% POSSIBLY ADD THEOREM EXPLAINING CONNECTION OF THIS OPERATOR TO NORMAL PROGRAMS. RESULTS ON CONNECTING THIS DEFINITION TO ABSTRACT GRINGO IS FUTURE WORK.
  \begin{comment}
  \begin{theorem}\label{thm:sm-traditional}
  	Let $\Pi$ be a traditional logic program. If $\sigma(\Pi)$ contains at least one object constant then for any Herbrand interpretation $X$ of $\sigma(\Pi)$
  	the following conditions are equivalent
  	\begin{itemize}
  		\setlength\itemsep{0em}
  		\item  $X$ is an answer set of $\Pi$;
  		\item $X$ is a $\pred(\Pi)$-stable model of $\Pi$.   
  	\end{itemize}
  \end{theorem}
  \end{comment}
  \vspace{-1em}
% \paragraph{Semantics of Logic Programs}
 \subsubsection{Semantics of Logic Programs}~\label{sec:sem}
 From this point on, we view logic program rules as alternative notation for particular 
 types of first-order 
 sentences.  
 We now define a procedure that turns every aggregate, every rule, and every program into a formula of first-order logic, called its {\em FOL representation}.
 First, we identify the logical connectives $\wedge$, $\vee$, and $\neg$  with their counterparts used in logic programs, namely, the comma, the disjunction symbol $\mid$, and connective $not$. This allows us to treat $L_1,\dots,L_k$ in~\eqref{eq:agg} as a conjunction of literals.
 % Also, for any list of variables $\vec{x}=(x_1,\dots,x_n)$ and $\vec{y}=(y_1,\dots,y_n)$ of the same length, $\vec x =\vec y$ stands for \hbox{$x_1=y_1\wedge\dots \wedge x_n=y_n$}.
 %\begin{comment}

For an  aggregate expressions of the form
$$  b\leq \#count\{\vec{x}:F(\vec{x})\},$$
 its FOL representation   follows 
%we view this expression as the formula
\beq
\exists \vec{x^1}\cdots  \vec{x^b} [\bigwedge_{1\leq i\leq b} F(\vec{x^i}) 
\wedge\bigwedge_{1\leq i<j\leq b} \neg(x^i=x^j)
]
\eeq{eq:formulaggexpr}
  %\end{comment}
  where $\vec{x^1}\cdots  \vec{x^b}$ are lists of new variables of the same length as~$\vec{x}$.

The FOL representations of  logic rules  of the form~\eqref{eq:ruleraspl1} and~\eqref{eq:ruleraspl2} are formulas  
$$
\ba{l}

\widetilde{\forall} (\body\rightarrow a_1\vee\cdots\vee a_l)\hbox{~~~   and~~~~}
\widetilde{\forall}(\neg\neg a_1\wedge \body\rightarrow a_1),
\ea
$$ 
 where each aggregate expression in $\body$  is replaced by its FOL representation.  Symbol~$\widetilde{\forall}$ denotes universal closure.
 
  For example, 
  expression~$SG(I)$
  stands for formula
$
step(I)\wedge\neg goal(I)\wedge\neg I=n
$  
  and 
   rules (\ref{eq:alchoice3}) and (\ref{eq:alchoice5}) in the~\alchoice encoding 
  have the FOL representation: 
 % \small
 % \beq
 % \ba{l} 
\begin{flalign}
&\widetilde{\forall}\big(\neg\neg \occurs(A,I)\wedge  SG(I)\wedge \action(A)\rightarrow \occurs(A,I)\big)\label{eq:folrep1}\\
&{\forall} I
\big( 
\neg \exists A[\occurs(A,I)] \wedge SG(I)
\rightarrow\bot\big)\label{eq:folrep3}
\end{flalign}
The FOL representation of rule (\ref{eq:alchoice4}) is the universal closure of the following implication
\begin{flalign}
&( 
\exists A  A' \big(\occurs(A,I)\wedge \occurs(A',I)\wedge \neg A=A'\big) \wedge SG(I)) 
\rightarrow\bot.\nonumber%\label{eq:folrep2}&
\end{flalign}
%The FOL representation of rule~\eqref{eq:localex1} follows
%$$
%\ba{l}
%\forall X \big(\exists Y^1Y^2\forall Z[e(x,Y^1)\wedge e(Y^1,Z)
%\wedge e(X,Y^2)\wedge e(Y^2,Z) \wedge red(Z)\wedge \neg (Y^1=Y^2)]\wedge\\
%~~~~~~~~~~~~~~vtx(X)\rar good(X)\big)
%\ea
%$$

%To illustrate the notion of a local variable consider RASPL-1 rule
%\beq
%\begin{array}{ll}
%good(X)\ar& vtx(X),2\leq\#count\{Y:e(X,Y),e(Y,Z),red(Z)\}
%\end{array}
%\eeq{eq:localex1}
%
\begin{comment}
Consider a modification of a rule taken from Example~5 by~\citeN{bic17}
\beq
\begin{array}{ll}
	good(X)\ar& vtx(X),2\leq\#count\{Y:e(X,Y),e(Y,Z),red(Z)\},good(Z).
\end{array}
\eeq{eq:localex.2}
 This rule intuitively says that a vertex is {\em good} if it has at least two neighbors (vertexes connected by edge $e$) so that one neighbor is red and is also good.
  The FOL representation of this rule follows
$$
\ba{l}
\forall X Z\big(\exists Y^1Y^2[e(X,Y^1)\wedge e(Y^1,Z)
\wedge e(X,Y^2)\wedge e(Y^2,Z) \wedge red(Z)\wedge \neg (Y^1=Y^2)]\wedge \\
~~~~~~~~~~~~~~
vtx(X)\wedge good(Z)\rar good(X)\big)
\ea
$$
%\normalsize
\end{comment}
\begin{comment}
%  \eeq{eq:alchoice3FOL}
Rules in~(\ref{eq:dalchoice2}-\ref{eq:dalchoice4}) have the FOL representation:   
%  \beq
\begin{flalign}
%&  \widetilde{\forall}\big(
%   SG(I)\wedge \action(A)\rightarrow
%  \occurs(A,I) \vee \noccurs(A,I)\big)&\nonumber\\
  %\widetilde{\forall} (\occurs(A,I)\wedge \noccurs(A,I)\rightarrow \bot)\\
&\widetilde{\forall} (\occurs(A,I)\wedge \occurs(A',I)
\wedge \action(A)\wedge \action(A')\wedge
\neg A = A'\rightarrow \bot)&\label{eq:simp2}\\
&\widetilde{\forall} \big(\occurs(A,I)\rightarrow  \sthHpd(I)\big)&\label{eq:sthDef}\\
&\widetilde{\forall} (\neg \sthHpd(I) \wedge SG(I)\rightarrow\bot)&\label{eq:sthCond}
\end{flalign}
 \end{comment}
  
We define a concept of an answer set for logic programs that   contain at least one object constant. This is inessential restriction as typical logic programs without object constants are in a sense trivial. In such programs,
whose semantics is given via grounding, rules with variables are eliminated during grounding. 
Let~$\Pi$ be a logic program with at least one object constant.
(In the sequel we often omit expression ``with at least one object constant''.) By~$\fol{\Pi}$ we denote its FOL representation. (Similarly, for a head $H$, a body $\body$, or a rule $R$, by $\fol{H}$, $\fol{\body}$, or $\fol{R}$ we denote their FOL representations.)
An {\em answer set} of $\Pi$ is an answer set of its  FOL representation  $\fol{\Pi}$.  In other words, an {\em answer set} of~$\Pi$ is an Herbrand interpretation of $\fol{\Pi}$ that is a $\pred(\fol{\Pi})$-stable model of $\fol{\Pi}$, i.e., a model of \beq
\SM_{\bf \pred(\fol{\Pi})}[\fol{\Pi}].
\eeq{eq:smpi} 
Sometimes, it is convenient to identify a logic program $\Pi$ with 
its semantic counterpart~\eqref{eq:smpi} so that formal results  stated in terms of {\SM} operator immediately translate into the results for logic programs.
%we can interchange between   \SM$_{\bf \pred(\fol{\Pi})}[\fol{\Pi}]$. 

\begin{comment}  
The remainder of the paper concerns making a formal claim under which conditions and according to which formal properties about logic programs we can claim that the groups~\eqref{eq:alchoice3} and~\eqref{eq:disj} are essentially equivalent encodings of what their authors call a simple planning module. 
  \end{comment}
  
  \vspace{-1em}
\subsubsection{Review: Strong Equivalence}\label{sec:strong}

We restate the definition of strong equivalence for first-order formulas given in~\citep{fer09} and recall some of its properties.
First-order formulas $F$ and $G$ are {\em strongly equivalent} if 
for any formula~$H$, any occurrence of~$F$ in~$H$, and any tuple~{$\bf p$} of 
distinct predicate constants,  \SM$_{\bf p}[H]$ is equivalent to 
\SM$_{\bf p}[H']$, where~$H'$ is obtained from $H$ by replacing~$F$ by~$G$. Trivially, any strongly equivalent formulas are such that their stable models coincide (relative to any tuple of predicate constants).
\citeN{lif07a} show that first-order 
formulas~$F$ and~$G$ are strongly equivalent if they are equivalent in \sqht logic --- an intermediate logic between classical and intuitionistic logics. Every formula provable using natural deduction, where 
the axiom of the law of the excluded middle ($F\vee\neg F$) is replaced by the weak law of the excluded middle ($\neg F\vee\neg \neg F$), is a theorem of \sqht. %We call this system {\em weak natural deduction system}.
%Since we use this \observation in providing formal arguments, we review the weak natural deduction system  next. We denote this system by {\bf N}.
%Its review follows the lines of~\citep{cs388l} to a large extent. 

The definition of strong equivalence between first-order formulas  paves the way to  a definition of strong equivalence for logic programs. 
%We say that a logic program $\Pi_1$ is a {\em conservative extension} of logic program $\Pi_2$ when \SM$_{\pred(\Pi_1)}[\fol{\Pi_1}]$ is a conservative extension of \SM$_{\pred(\Pi_2)}[\fol{\Pi_2}]$.
A logic program $\Pi_1$ is {\em strongly equivalent} to logic program~$\Pi_2$ when 
for any program $\Pi$, 
$$\SM_{\pi(\fol{\Pi\cup\Pi_1})}[\fol{\Pi\cup\Pi_1}]\hbox{ is equivalent to } \SM_{ \pi(\fol{\Pi\cup\Pi_2})}[\fol{\Pi\cup\Pi_2}].$$ 
It immediately follows that logic programs $\Pi_1$ and $\Pi_2$ are {\em strongly equivalent} if first-order formulas $\fol{\Pi_1}$ and $\fol{\Pi_2}$  are equivalent in logic of \sqht.

We now review an important result about properties of denials. 
\begin{theorem}[Theorem 3~\citep{fer09}]\label{thm:constraints}
	For any first-order formulas $F$ and $G$ and arbitrary tuple~${\bf p}$ of predicate constants, $\SM_{\bf p}[F\wedge\neg G]$ is equivalent to 
	$\SM_{\bf p}[F]\wedge\neg G.$
\end{theorem}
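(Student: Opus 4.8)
The plan is to unfold both sides using the definition of the $\SM$ operator and then reduce the claim to a single monotonicity fact about the starred translation. Recall that $\neg G$ abbreviates $G\rar\bot$, and $\bot^*=\bot$ since $\bot$ is atomic and mentions no member of ${\bf p}$. Applying the recursive clause for implication gives $(\neg G)^*=(G^*\rar\bot)\land(G\rar\bot)=\neg G^*\land\neg G$, and hence, by the clause for conjunction, $(F\land\neg G)^*=F^*\land\neg G^*\land\neg G$. Substituting into the definition yields
$$\SM_{\bf p}[F\land\neg G]\;=\;F\land\neg G\land\neg\exists{\bf u}\big(({\bf u}<{\bf p})\land F^*\land\neg G^*\land\neg G\big),$$
whereas
$$\SM_{\bf p}[F]\land\neg G\;=\;F\land\neg G\land\neg\exists{\bf u}\big(({\bf u}<{\bf p})\land F^*\big).$$
Both formulas carry the common classical conjunct $F\land\neg G$, so in every model it suffices to establish, \emph{under the assumption} $\neg G$, that the two existential subformulas are equivalent.

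First I would discharge the inner occurrence of $\neg G$. Since $\neg G$ contains none of the variables ${\bf u}$, it factors out of the quantifier, and under the outer assumption $\neg G$ it is simply true; thus $\exists{\bf u}(({\bf u}<{\bf p})\land F^*\land\neg G^*\land\neg G)$ collapses to $\exists{\bf u}(({\bf u}<{\bf p})\land F^*\land\neg G^*)$. The remaining obligation is therefore the equivalence, given $\neg G$, of $\exists{\bf u}(({\bf u}<{\bf p})\land F^*\land\neg G^*)$ and $\exists{\bf u}(({\bf u}<{\bf p})\land F^*)$. The left-to-right direction is immediate (drop the conjunct $\neg G^*$); for the converse it is enough to show that any witness ${\bf u}$ satisfying ${\bf u}<{\bf p}$ automatically satisfies $\neg G^*$.

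The crux, and the step I expect to require the most care, is a monotonicity lemma for the star operation: for every formula $H$, the implication ${\bf u}\leq{\bf p}\;\rar\;(H^*({\bf u})\rar H)$ is logically valid. I would prove this by structural induction on $H$. The base cases are the two atomic clauses: for $p_i({\bf t})$ we have $p_i({\bf t})^*=u_i({\bf t})$, and $u_i\leq p_i$ delivers $u_i({\bf t})\rar p_i({\bf t})$; for an atom not mentioning ${\bf p}$ (including equalities and $\bot$) we have $H^*=H$ and the implication is trivial. The conjunction, disjunction, and quantifier cases follow directly from the induction hypotheses, and the implication case is settled outright because $(F\rar G)^*$ carries $(F\rar G)$ as its second conjunct, giving $(F\rar G)^*\rar(F\rar G)$.

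With the lemma in hand I would instantiate $H:=G$: since ${\bf u}<{\bf p}$ entails ${\bf u}\leq{\bf p}$, we obtain $G^*\rar G$, whose contrapositive $\neg G\rar\neg G^*$ shows that the ambient assumption $\neg G$ forces $\neg G^*$ at every witness ${\bf u}<{\bf p}$. This supplies the missing conjunct for the right-to-left direction, so the two existentials coincide and the two $\SM$ formulas are equivalent. I would remark that this is precisely the specialization of the general behaviour of $\SM$ on a conjunction to the case where the second conjunct is a denial $\neg G$, and that the monotonicity lemma is exactly the point where the hypothesis ${\bf u}<{\bf p}$ (hence ${\bf u}\leq{\bf p}$), rather than an arbitrary ${\bf u}$, is used.
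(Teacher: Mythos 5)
Your proof is correct. The paper itself does not prove this statement --- it imports it verbatim as Theorem~3 of \citeN{fer09} --- and your argument (unfold $\SM$ on both sides, observe that the common conjunct $\neg G$ lets you discharge the inner $\neg G$, and close the gap with the monotonicity lemma ${\bf u}\leq{\bf p}\rar(H^*({\bf u})\rar H)$ proved by structural induction) is essentially the standard proof given in that source.
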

As a consequence, {\bf p}-stable models of   $F\wedge\neg G$ can be characterized as the {\bf p}-stable models of   $F$ that satisfy first-order logic formula $\neg G$.
Consider any denial 
$
\ar \body.
$
Its  FOL representation has the form $\widetilde{\forall}(\body\rar \bot)$ that is intuitionistically equivalent to formula $\neg \widetilde{\exists} \body$.
Thus, Theorem~\ref{thm:constraints} tells us that given any denial of a program it is safe to compute answer sets of a program without this denial and a posteriori verify that the FOL representation of a denial is satisfied. 
%The following corollary immediately follows.
\begin{corollary}\label{cor:constraints}
Two denials are strongly equivalent if
 their FOL representations are classically equivalent.
\end{corollary}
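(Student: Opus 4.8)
The plan is to reduce the claim to Theorem~\ref{thm:constraints}, which already isolates a denial as a classical first-order conjunct sitting \emph{outside} the \SM\ operator. Write the two denials as $\delta_1=(\ar\body_1)$ and $\delta_2=(\ar\body_2)$. As recorded just before the corollary, the FOL representation $\fol{\delta_i}$, namely $\widetilde{\forall}(\body_i\rar\bot)$, is intuitionistically equivalent to the negative formula $\neg G_i$, where $G_i=\widetilde{\exists}\,\body_i$. Since intuitionistic equivalence implies equivalence in \sqht, and \sqht-equivalence implies strong equivalence, each $\fol{\delta_i}$ is strongly equivalent to $\neg G_i$. Hence it suffices to show that, whenever $\neg G_1$ and $\neg G_2$ are classically equivalent, adding $\delta_1$ and adding $\delta_2$ to any program yield equivalent \SM-formulas.

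First I would unwind the definition of strong equivalence for the two one-rule programs $\delta_1$ and $\delta_2$. For an arbitrary program $\Pi$, set $F=\fol{\Pi}$ and let $\bf p$ be the relevant tuple of predicate constants; note $\fol{\Pi\cup\{\delta_i\}}=\fol{\Pi}\wedge\fol{\delta_i}$. Using that $\fol{\delta_i}$ is strongly equivalent to $\neg G_i$, I would substitute under the operator to get
\[
\SM_{\bf p}[\fol{\Pi}\wedge\fol{\delta_i}]\ \equiv\ \SM_{\bf p}[F\wedge\neg G_i],
\]
and then apply Theorem~\ref{thm:constraints} to pull the constraint out:
\[
\SM_{\bf p}[F\wedge\neg G_i]\ \equiv\ \SM_{\bf p}[F]\wedge\neg G_i.
\]
The crucial point is that the context $\SM_{\bf p}[F]$ is identical on the two sides, since it depends only on $\Pi$ and $\bf p$ and not on the denial, while $\neg G_1$ and $\neg G_2$ are now ordinary first-order conjuncts. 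Every interpretation evaluates these conjuncts classically, so the hypothesis that $\neg G_1$ and $\neg G_2$ are classically equivalent gives $\SM_{\bf p}[F]\wedge\neg G_1\equiv\SM_{\bf p}[F]\wedge\neg G_2$. Chaining the three equivalences yields $\SM_{\bf p}[\fol{\Pi\cup\{\delta_1\}}]\equiv\SM_{\bf p}[\fol{\Pi\cup\{\delta_2\}}]$ for every $\Pi$, which is exactly strong equivalence of the two denials.

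The step I expect to need the most care is the passage from \emph{classical} to \emph{strong} equivalence, and Theorem~\ref{thm:constraints} is precisely what licenses it: it converts the denial under \SM\ into a plain conjunct, where classical reasoning becomes legitimate. Conceptually this reflects that in the logic of here-and-there a negated formula is decided entirely by the classical ``there'' world, so two classically equivalent negative formulas cannot be separated by any \SM-context. Two minor points I would verify to make the argument airtight: (i) that $\fol{\delta_i}$ may be replaced by $\neg G_i$ under \SM, which is justified by substitutivity of strongly equivalent subformulas together with intuitionistic $\Rightarrow$ \sqht\ $\Rightarrow$ strong equivalence; and (ii) that the predicate tuple $\bf p$ agrees on both sides, i.e.\ $\pred(\fol{\Pi\cup\{\delta_1\}})=\pred(\fol{\Pi\cup\{\delta_2\}})$, which holds for the intended rewritings (such as replacing~\eqref{eq:alchoice4} by~\eqref{eq:ruleSGI}) because there both denials range over the same predicate symbols.
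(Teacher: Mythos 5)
Your proof is correct and follows essentially the same route as the paper, which derives the corollary from Theorem~\ref{thm:constraints} by rewriting each denial as $\neg\widetilde{\exists}\,\body$ and pulling it out of the \SM\ operator as a classical conjunct. The paper merely sketches this (and remarks that the Replacement Theorem II gives an alternative derivation); your write-up is a faithful, more explicit elaboration of the same argument.
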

This corollary is also an immediate consequence of 
 the Replacement Theorem for intuitionistic logic for first-order formulas~\cite{min00} stated below.

\begin{prop}[Replacement Theorem II \cite{min00}, Section 13.1]
	If $F$ is a first-order formula  containing a subformula $G$ and~$F'$ is the result of replacing that subformula by $G'$ then $\widetilde\forall(G\lrar G')$ intuitionistically implies $F\lrar F'$. 
\end{prop}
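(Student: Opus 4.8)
The plan is to argue by induction on the structure of the context surrounding the marked occurrence of $G$ within $F$ --- equivalently, by induction on the length of the path of connectives and quantifiers leading from the root of $F$ down to that occurrence. Throughout I would work inside intuitionistic first-order logic, taking $\widetilde\forall(G\lrar G')$ as a standing hypothesis, so that the goal at each stage is to derive the biconditional between the current subformula and its rewrite. Because the hypothesis is a \emph{universal closure}, it is a sentence with no free variables, and this single fact is what keeps the quantifier cases sound.

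For the base case the occurrence of $G$ is all of $F$, so $F$ is $G$ and $F'$ is $G'$. Here I would simply instantiate $\widetilde\forall(G\lrar G')$ at each of its bound variables by itself; intuitionistically the universal closure of a formula implies that formula, which yields $G\lrar G'$, i.e.\ $F\lrar F'$.

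For the inductive step I would split on the principal connective of $F$ lying above the occurrence. In the propositional cases $F$ is $A\land B$, $A\lor B$, or $A\rar B$ with $G$ inside one immediate subformula, say $A$; the induction hypothesis gives $A\lrar A'$ from the standing hypothesis, and I would combine it with the intuitionistically provable congruence laws (from $A\lrar A'$ one derives $(A\land B)\lrar(A'\land B)$, and likewise for $\lor$ and for either argument of $\rar$) to conclude $F\lrar F'$. The quantifier cases $F=\forall x\,A$ and $F=\exists x\,A$ are where the real care is needed: the induction hypothesis again yields $A\lrar A'$ under the hypothesis $\widetilde\forall(G\lrar G')$, and since that hypothesis contains no free variable I may apply generalization over $x$ to obtain $\forall x(A\lrar A')$, and then appeal to the intuitionistic validities $\forall x(A\lrar A')\rar(\forall x\,A\lrar\forall x\,A')$ and $\forall x(A\lrar A')\rar(\exists x\,A\lrar\exists x\,A')$.

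The main obstacle is exactly this quantifier step, and more precisely the bookkeeping of free variables: one must check that no variable bound along the path from $F$ to $G$ is captured, and that generalization over $x$ is licensed because $x$ does not occur free in the open assumptions --- which is guaranteed here only because $\widetilde\forall(G\lrar G')$ is closed. I would therefore state explicitly the two intuitionistic quantifier lemmas above (each a routine natural-deduction derivation) and verify the freshness side-conditions carefully; the propositional congruences and the base case are then entirely mechanical.
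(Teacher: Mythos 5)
Your induction on the context of the marked occurrence of $G$ is the standard and correct proof of this result: the two points you isolate --- that the standing hypothesis $\widetilde\forall(G\lrar G')$ is a closed sentence, which is what licenses generalization over the quantified variables met along the path, and that the congruence laws for $\land$, $\lor$, $\rar$, $\forall$, $\exists$ are all intuitionistically derivable --- are exactly the load-bearing steps. Note that the paper itself offers no proof of this statement; it is imported verbatim from Mints (Section 13.1), so there is no in-paper argument to compare against, but your derivation is the standard one given there.
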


\vspace{-1em}
 \subsection{Rewritings}\label{sec:rewr}

%However, the rewritings as in the \alchoice domain that replace a set of rules with choice and aggregates by rules with disjunction and auxiliary atoms
%cannot be justified using the notion of strong equivalence alone.  
%Strong equivalence is inappropriate  for 
%justifying this rewriting for a simple reason: the signature of the original 
%program is changed. 
%In what follows we attempt to ``adjust'' the notion of strong 
%equivalence to the context of modular programs so that we may formally reason about 
%the correctness of 
%rewritings exemplified by \alchoice domain. We then translate these notions to the realm of logic programs.
\vspace{-.5em}
\subsubsection{Rewritings via Pure Strong Equivalence}
Strong equivalence %is a powerful property and 
can be used to argue the 
correctness of some program rewritings practiced by ASP software engineers. Here we state several theorems about strong equivalence between programs. \observations~\ref{enum:observ4},~\ref{enum:observ2}, %~\ref{enum:observ6}
and~\ref{enum:observ1} are consequences of these results.

%\paragraph{Dropping of Subsumed Rules}
We say that body $\body$ subsumes  body $\body'$ when $\body'$ has the form $\body,\body''$ (note that an order of expressions in a body is immaterial) . We say that a rule~$R$ subsumes rule~$R'$ when heads of $R$ and $R'$ coincide while body of $R$ subsumes body of $R'$.
For example, rule~\eqref{eq:alchoice2} 
 subsumes rule~\eqref{eq:disj2}.

{\bf Subsumption Rewriting:} %AAAI-19-comment %1:}  
Let ${\bf R'}$ denote a set of rules subsumed by rule $R$. It is easy to see that formulas $\fol{R}$ and  $\fol{R}\wedge \fol{\bf R'}$ are intuitionistically equivalent.
 %(For instance, a proof in natural deduction system without the law of excluded middle for this claim is straightforward. We refer the reader to~\citep{lif08b} for the detailed review of the  natural deduction system.) 
 Thus, program composed of rule $R$ and program $\{R\}\cup {\bf R'}$ are strongly equivalent.
It immediately follows that \observation~\ref{enum:observ4} holds. Indeed, rule~\eqref{eq:alchoice2} is strongly equivalent to the set of rules composed of itself and~\eqref{eq:disj2}. Indeed, rule~\eqref{eq:alchoice2} 
subsumes rule~\eqref{eq:disj2}. 

{\bf Removing Aggregates:} 
The following theorem is an immediate consequence of the Replacement Theorem II.
\begin{theorem}\label{thm:Aggregates}
	Program
	\beq
H	\ar  b\leq \#count\{\vec{x}:F(\vec{x})\},\ G%(\vec{y})
	\eeq{eq:ruleag}
	is strongly equivalent to
	program
	\beq
H \ar \bigcomma_{1\leq i\leq b} F(\vec{x^i}) 
	\bigcomma_{1\leq i<j\leq b} x^i\neq x^j,\ G%(\vec{y})
	\eeq{eq:rulenoag}
	where $G$ and $H$ have no occurrences of variables in $\vec{x^i}$ $(1\leq i\leq b)$.
\end{theorem}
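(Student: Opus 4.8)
The plan is to reduce the claimed strong equivalence to an intuitionistic equivalence between the two rules' FOL representations and then lift it to arbitrary contexts by the Replacement Theorem II, exactly as the paper's criterion for strong equivalence permits. First I would spell out both FOL representations. Let $R$ and $R'$ denote rules \eqref{eq:ruleag} and \eqref{eq:rulenoag}, and write $\Phi$ for $\bigwedge_{1\leq i\leq b}F(\vec{x^i})\wedge\bigwedge_{1\leq i<j\leq b}\neg(x^i=x^j)$. By \eqref{eq:formulaggexpr}, the aggregate expression in $R$ has FOL representation $\exists\vec{x^1}\cdots\vec{x^b}\,\Phi$, so $\fol R$ is $\tall\big((\exists\vec{x^1}\cdots\vec{x^b}\,\Phi)\wedge\fol G\rar C\big)$, whereas $\fol{R'}$ is $\tall\big(\Phi\wedge\fol G\rar C\big)$, where $C$ is the consequent determined by $H$. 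The point to record is that the only difference lies in the placement of $\exists\vec{x^1}\cdots\vec{x^b}$: in $\fol R$ they sit inside the antecedent, while in $\fol{R'}$ the same variables are free in the matrix and are therefore bound by the rule's universal closure $\tall$.

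The core step is to show that $\fol R$ and $\fol{R'}$ are intuitionistically equivalent, and I would do this with two standard intuitionistically valid moves, both relying precisely on the hypothesis that $G$ and $H$ (hence $C$) contain no occurrence of the variables $\vec{x^i}$. Since $\vec{x^i}$ do not occur in $\fol G$, the existential distributes over the conjunction, giving $(\exists\vec{x^1}\cdots\vec{x^b}\,\Phi)\wedge\fol G \lrar \exists\vec{x^1}\cdots\vec{x^b}(\Phi\wedge\fol G)$; and since $\vec{x^i}$ do not occur in $C$, currying the existential antecedent into a universal yields $\big(\exists\vec{x^1}\cdots\vec{x^b}(\Phi\wedge\fol G)\rar C\big)\lrar\forall\vec{x^1}\cdots\vec{x^b}(\Phi\wedge\fol G\rar C)$. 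Both equivalences are theorems of intuitionistic predicate logic under exactly these freeness conditions, so closing under the remaining universal quantifiers gives $\tall(\fol R\lrar\fol{R'})$.

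I would then check that this goes through uniformly for both admissible forms of the head $H$. When $H$ is a disjunction $a_1\mid\dots\mid a_l$ the consequent is $C=a_1\vee\dots\vee a_l$; when $H$ is a choice $\{a_1\}$ the FOL representation carries the extra conjunct $\neg\neg a_1$ in the antecedent, so one takes $C=a_1$ and folds $\neg\neg a_1$ into the $\vec{x^i}$-free part of the antecedent alongside $\fol G$. In either case $C$ and any adjoined conjunct are free of $\vec{x^i}$, so the two moves above apply verbatim.

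Finally, having $\tall(\fol R\lrar\fol{R'})$ intuitionistically, the Replacement Theorem II lets me substitute $\fol{R'}$ for $\fol R$ inside $\fol{\Pi}$ for an arbitrary program $\Pi$ while preserving intuitionistic, hence \sqht, equivalence of $\fol{\Pi\cup\{R\}}$ and $\fol{\Pi\cup\{R'\}}$; by the strong-equivalence criterion this is precisely strong equivalence of the two one-rule programs. I expect the only real obstacle to be the second paragraph: getting the quantifier relocation certified as an \emph{intuitionistic} (not merely classical) equivalence and confirming that the freeness side conditions invoked are exactly those guaranteed by the hypothesis on $G$ and $H$. Everything else is bookkeeping.
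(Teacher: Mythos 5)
Your proposal is correct and follows essentially the same route as the paper's proof: both rewrite the universal closure of~\eqref{eq:rulenoag} by currying the quantifiers over $\vec{x^1}\cdots\vec{x^b}$ into an existential antecedent and then identify the two antecedents via the Replacement Theorem II. If anything, you are slightly more careful than the paper, which labels the antecedent equivalence ``classical'' before invoking replacement, whereas you correctly certify both quantifier moves as intuitionistically valid under the stated freeness conditions.
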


Theorem~\ref{thm:Aggregates} 
 shows us that
\observation~\ref{enum:observ2} is a special case of a more general fact.
Indeed, take rules~\eqref{eq:alchoice4} and~\eqref{eq:ruleSGI} to be the instances of  rules~\eqref{eq:ruleag} and~\eqref{eq:rulenoag} respectively.

We note that  the Replacement Theorem II also allows us  to immediately conclude the following. 
\begin{corollary}\label{cor:gen}
	Program~~
%\[%	\beq
$H	\ar  G$~~
%	\eeq{eq:ruleag}
%\]
	is strongly equivalent to
	program~~
%	\beq
$ %\[
H \ar G' 
$~~
%\]
%\eeq{eq:rulenoag}
when
~~$\widetilde\forall(\fol{G}\lrar \fol{G'})$.
\end{corollary}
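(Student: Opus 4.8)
The plan is to reduce the claim to an intuitionistic equivalence between the two one-rule programs' FOL representations, and then to invoke the sufficient condition for strong equivalence recalled earlier in the section. By the definition of strong equivalence for logic programs, together with the observation that two logic programs are strongly equivalent whenever their FOL representations are equivalent in the logic of \sqht, it suffices to show that the first-order sentences $\fol{H\ar G}$ and $\fol{H\ar G'}$ are \sqht-equivalent. Since \sqht is an intermediate logic lying between intuitionistic and classical logic, every intuitionistic theorem is a theorem of \sqht; hence it is in fact enough to prove that $\fol{H\ar G}$ and $\fol{H\ar G'}$ are \emph{intuitionistically} equivalent.

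First I would write out the FOL representation of $H\ar G$ explicitly. Depending on the shape of the head it equals either $\widetilde\forall(\fol{G}\rar a_1\vee\cdots\vee a_l)$, when $H$ is a disjunction $a_1\mid\cdots\mid a_l$, or $\widetilde\forall(\neg\neg a_1\wedge\fol{G}\rar a_1)$, when $H$ is a choice $\{a_1\}$. In either case $\fol{G}$ occurs as a subformula, and replacing that occurrence of $\fol{G}$ by $\fol{G'}$ yields exactly $\fol{H\ar G'}$ (namely $\widetilde\forall(\fol{G'}\rar a_1\vee\cdots\vee a_l)$ or $\widetilde\forall(\neg\neg a_1\wedge\fol{G'}\rar a_1)$, respectively).

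Next I would apply the Replacement Theorem II with $F=\fol{H\ar G}$, distinguished subformula $\fol{G}$, and replacement $\fol{G'}$, so that $F'=\fol{H\ar G'}$. The hypothesis $\widetilde\forall(\fol{G}\lrar\fol{G'})$ is precisely the premise of that theorem, and it therefore yields the intuitionistic implication, i.e.\ the intuitionistic equivalence $\fol{H\ar G}\lrar\fol{H\ar G'}$. Passing this up the inclusion of intuitionistic logic into \sqht gives \sqht-equivalence of the two sentences, hence their strong equivalence as first-order formulas, and consequently, by the stated criterion, the strong equivalence of the programs $H\ar G$ and $H\ar G'$.

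The only point that needs care --- and the closest thing to an obstacle --- is the bookkeeping that makes the Replacement Theorem II directly applicable: one must confirm that $\fol{G}$ appears as a genuine, substitutable subformula in both admissible head shapes, so that a single substitution produces precisely $\fol{H\ar G'}$, and one must read the hypothesis $\widetilde\forall(\fol{G}\lrar\fol{G'})$ as intuitionistic provability so that the theorem's conclusion is itself an intuitionistic equivalence that can be relayed into \sqht. Once these are settled, the result is immediate, matching the paper's remark that it follows directly from the Replacement Theorem II.
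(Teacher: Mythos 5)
Your proposal is correct and follows exactly the route the paper intends: the paper presents this corollary as an immediate consequence of the Replacement Theorem II, and your argument simply makes explicit the substitution of $\fol{G}$ by $\fol{G'}$ inside the FOL representation of the rule (for both head shapes) and the passage from intuitionistic equivalence to \sqht-equivalence and hence strong equivalence. Nothing is missing.
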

%Proposition~\ref{thm:Aggregates} is a special case of this corollary.
% We could  use Corollary~\ref{cor:gen} to illustrate the correctness of \observation~\ref{enum:observ2}. 
% Yet, the utility of
%Proposition~\ref{thm:Aggregates} is  that it can guide syntactic analysis of a program with a goal of equivalent rewriting (for instance, for the sake of performance or clarity).  In contrast, 
Corollary~\ref{cor:gen}  equips us with a general semantic condition that can be utilized in proving the syntactic properties of programs in spirit of Theorem~\ref{thm:Aggregates}. 

\begin{comment}
\begin{theorem}\label{thm:Aggregates2}
	Program
	\beq
	\ar not\  1\leq \#count\{\vec{x}:a(\vec{x},\vec{y})\},\ G%(\vec{y})
	\eeq{eq:ruleag2}
	is strongly equivalent to
	program
	\beq
	\ar not\ a(\vec{x},\vec{y}),\ G,%(\vec{y})
	\eeq{eq:rulenoag2}
	where $a(\vec{x},\vec{y})$ is an atom.
\end{theorem}
The proof of this theorem is similar to the proof of Theorem~\ref{thm:Aggregates}.	
Theorem~\ref{thm:Aggregates2} shows us that
\observation~\ref{enum:observ6} is  an instance of a more general fact:
take rules~\eqref{eq:alchoice5} and~\eqref{eq:ruleagrewr} to be the instances of  rules~\eqref{eq:ruleag2} and~\eqref{eq:rulenoag2} respectively.
\end{comment}

{\bf Replacing Choice Rule by Defining Rule:}
Theorem~\ref{thm:choice} shows us that
\observation~\ref{enum:observ1} is  an instance of a more general fact.
\begin{theorem}\label{thm:choice}
	Program
	\begin{align}
	&\ar p(\vec{x}),\ q(\vec{x})\label{l1}&\\
	& q(\vec{x})\ar not\ p(\vec{x}), F_1\label{l2}&\\
	&\{p(\vec{x})\}\ar F_1,\ F_2\label{l3}&
	\end{align}
	is strongly equivalent to program composed of rules~\eqref{l1},~\eqref{l2}  and rule 
	\beq
	p(\vec{x})\ar not\ q(\vec{x}),\ F_1,\  F_2,
	\eeq{eq:last}
	where $F_1$ and $F_2$ are the expressions of the form~\eqref{eq:body}.
\end{theorem}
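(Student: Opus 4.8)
The plan is to reduce the claimed strong equivalence to an equivalence of first-order formulas in \sqht, following the criterion recalled in Section~\ref{sec:strong}: two programs are strongly equivalent as soon as their FOL representations are \sqht-equivalent, and in particular as soon as they are intuitionistically equivalent. First I would write down the FOL representations of the four rules, keeping careful track of the asymmetry between defining rules and choice rules. Rule~\eqref{l1} is the denial $\tall\neg(p(\vec x)\wedge q(\vec x))$ (call it $A$); rule~\eqref{l2}, being a defining rule, translates to $\tall(\neg p(\vec x)\wedge\fol{F_1}\rightarrow q(\vec x))$ (call it $B$), with \emph{no} double negation on its head; rule~\eqref{l3}, a choice rule, translates to $\tall(\neg\neg p(\vec x)\wedge\fol{F_1}\wedge\fol{F_2}\rightarrow p(\vec x))$ (call it $C$); and the replacement rule~\eqref{eq:last}, again a defining rule, translates to $\tall(\neg q(\vec x)\wedge\fol{F_1}\wedge\fol{F_2}\rightarrow p(\vec x))$ (call it $D$). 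Since both programs contain rules~\eqref{l1} and~\eqref{l2}, it suffices to show that $A\wedge B\wedge C$ and $A\wedge B\wedge D$ are interderivable, i.e. that in the presence of $A$ and $B$ the formulas $C$ and $D$ are equivalent.

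I would then carry out two natural-deduction derivations, fixing the universally quantified variables and reasoning on the matrices; because $A$ binds only $\vec x$ while $B$, $C$, $D$ may bind the extra variables of $\fol{F_1}$ and $\fol{F_2}$, I would instantiate each closure at the common point and re-close at the end, which is sound. For the direction $A\wedge B\wedge C\vdash D$, assuming the antecedent $\neg q(\vec x)$, $\fol{F_1}$, $\fol{F_2}$ of $D$, the key step is to derive $\neg\neg p(\vec x)$: assume $\neg p(\vec x)$, feed $\neg p(\vec x)$ and $\fol{F_1}$ into $B$ to obtain $q(\vec x)$, and contradict $\neg q(\vec x)$; then $C$ yields $p(\vec x)$. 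For the converse $A\wedge B\wedge D\vdash C$, assuming the antecedent $\neg\neg p(\vec x)$, $\fol{F_1}$, $\fol{F_2}$ of $C$, the symmetric key step is to derive $\neg q(\vec x)$: assume $q(\vec x)$, obtain $\neg p(\vec x)$ from $A$ (since $p(\vec x)$ would give $p(\vec x)\wedge q(\vec x)$, contradicting $A$), and contradict $\neg\neg p(\vec x)$; then $D$ yields $p(\vec x)$.

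Both derivations use only intuitionistic steps, so they are a fortiori valid in \sqht; this gives the \sqht-equivalence of the two FOL representations and hence the strong equivalence of the two programs. The main thing to get right --- and the only real subtlety --- is the cross-use of the auxiliary rules: neither antecedent of $C$ nor of $D$ is stronger than the other, so each direction must \emph{manufacture} the missing guard ($\neg\neg p(\vec x)$ in one direction, $\neg q(\vec x)$ in the other) out of $A$ or $B$. This is exactly where the defining-rule/choice-rule asymmetry in the translations is exploited, and I would double-check that the quantifier bookkeeping for the extra variables of $F_1$ and $F_2$ (which occur under different closures in the different rules, and which I never decompose, treating $\fol{F_1}$ and $\fol{F_2}$ as opaque) does not interfere with these matrix-level arguments.
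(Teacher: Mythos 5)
Your proposal is correct, and its overall strategy coincides with the paper's: both reduce the claim to interderivability of the FOL representations of the two programs, arguing one direction per derivation. The one substantive difference is in the direction from \eqref{l1}--\eqref{l3} to \eqref{eq:last}. The paper's outline states that this direction is carried out in \sqht and that the weak De Morgan law $\neg(F\wedge G)\rar\neg F\vee\neg G$ is ``essential'' there; you instead derive the guard $\neg\neg p(\vec{x})$ directly by negation introduction (assume $\neg p(\vec{x})$, obtain $q(\vec{x})$ from \eqref{l2}, contradict $\neg q(\vec{x})$), which is a purely intuitionistic step. Your sharper claim is in fact right: a Kripke-semantics check confirms that both implications between the two FOL representations are intuitionistically valid (given the denial \eqref{l1} and rule \eqref{l2}), so the De Morgan law is needed only for the particular case-analysis derivation the paper appears to have in mind, not for the result itself. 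Since intuitionistic derivability implies \sqht-derivability, both arguments establish the theorem; yours is marginally stronger and more self-contained. Your handling of the quantifier bookkeeping --- instantiating all universal closures at common eigenvariables, treating $\fol{F_1}$ and $\fol{F_2}$ as opaque, and re-closing at the end --- is sound, and you correctly note that each direction uses only one of the two auxiliary rules (\eqref{l2} for manufacturing $\neg\neg p(\vec{x})$, \eqref{l1} for manufacturing $\neg q(\vec{x})$).
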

   
%The proof of this theorem can be constructed using the arguments in spirit of these used in the proof of Theorem~\ref{thm:Aggregates}.

To illustrate the correctness of \observation~\ref{enum:observ1} by Theorem~\ref{thm:choice}:
(i)
take rules
\eqref{eq:alchoice1},  \eqref{eq:alchoice2},  \eqref{eq:alchoice3} be the instances of rules (\ref{l1}), (\ref{l2}), (\ref{l3}) respectively,  and (ii) rule~\eqref{eq:disj1} be the instance of rule \eqref{eq:last}.

\vspace{-1em}
\subsubsection{Useful Rewritings using Structure}
In this subsection, we study rewritings on a program that rely on its structure.
We review the concept of a  dependency graph used in posing structural conditions on rewritings.

\paragraph{Review: Predicate Dependency Graph}
 We present the concept of the predicate dependency graph of a formula following the lines of~\citep{fer09a}. An occurrence of a predicate constant, or any other subexpression,  in a formula is called {\em positive} if the number of implications containing that occurrence in the antecedent is even, and {\em strictly positive} if that number is 0. We say that an occurrence of a predicate constant is {\em negated} if it belongs to a subformula of the form $\neg F$ (an abbreviation for $F\rar\bot$), and {\em nonnegated} otherwise.
 
  For instance, in formula~\eqref{eq:folrep1}, predicate constant $\occurs$ has a strictly positive occurrence  in the consequence of the implication;
  whereas the same symbol $\occurs$ has a negated positive occurrence in the antecedent  
  \beq
  \neg\neg \occurs(A,I)\wedge \step(I)\wedge\neg goal(I)\wedge\neg I= n\wedge \action(A)
  \eeq{eq:antec}
  of~\eqref{eq:folrep1}. 
  Predicate symbol $\action$ has a strictly positive non-negated occurrence in~\eqref{eq:antec}. The occurrence of predicate symbol $goal$ is negated and not positive in~\eqref{eq:antec}. 
  The occurrence of predicate symbol $goal$ is negated and  positive in~\eqref{eq:folrep1}.
 
 An {\em FOL rule} of a first-order formula $F$ is a strictly positive occurrence of an implication in $F$. For instance, in a conjunction of two  formulas~(\ref{eq:folrep1}) and  (\ref{eq:folrep3}) the 
 FOL rules are as follows
\begin{align}
&\neg\neg \occurs(A,I)\wedge  SG(I)\wedge \action(A)\rightarrow \occurs(A,I)&\label{fl:1}\\
&\neg \exists A[\occurs(A,I)] \wedge SG(I)
\rightarrow\bot.&\label{fl:2}
\end{align}
 
 For any first-order formula $F$, the {\em (predicate) dependency graph} of $F$ relative to the tuple {\bf p} of predicate symbols (excluding $=$) is the directed graph that
% \begin{itemize}
% 	\item 
 	(i) has all predicates in~{\bf p} as its vertices, and
% 	\item 
 	(ii) has an edge from $p$ to $q$ if for some FOL rule  $G\rar H$ of $F$
 	\begin{itemize}[topsep=0pt]
 		\itemsep0em 
 		\item $p$ has a strictly positive occurrence in $H$, and 
 		\item $q$ has a  positive nonnegated occurrence in $G$.
 	\end{itemize}
 %\end{itemize}
 We denote such a graph by $DG_{\bf p}[F]$.
 For instance, 
 Figure~\ref{fig:dpgpart} presents 
 the dependence graph of a conjunction of formulas~(\ref{eq:folrep1}) and (\ref{eq:folrep3}) relative to all its predicate symbols. It  contains four vertices, namely, $\occurs$, $\action$, $\step$, and $goal$, and  two edges: one from vertex $\occurs$ to vertex $\action$ and the other one from  $\occurs$ to $\step$.
 Indeed, consider the only two FOL rules~\eqref{fl:1} and~\eqref{fl:2} stemming from  this conjunction. Predicate constant $\occurs$ has a strictly positive occurrence in the consequent $\occurs(A,I)$ of the implication~\eqref{fl:1}, whereas  $\action$ and $\step$ are the only predicate constants in the antecedent $\neg\neg \occurs(A,I)\wedge  SG(I)\wedge \action(A)$  of~\eqref{fl:1} that have positive and nonnegated occurrence in this antecedent.  It is easy to see that a FOL rule of the form $G\rar\bot$, e.g., FOL rule \eqref{fl:2}, does not contribute edges to any dependency graph.
% \begin{comment}
  \begin{figure}
  	
%  	\begin{minipage}{.50\textwidth}
  		
  		\centering
  	
  		\begin{tikzpicture} [shorten >=.3pt, auto,scale=0.5]
  		\node[place,scale=0.8] (a)   {{\it \action}};
  		\node[place,scale=0.8] (occurs) [left of=a]  {{\it \occurs}};
  		    \node[place,scale=0.8] (step) [left of=occurs] {{\it \step}};
  		%     \node[place] (inc) [left of=step]  {{\it \inc}};  		
  		\node[place,scale=0.8] (goal) [left of=step]  {{\it goal}};
  		\path[->]
  		(occurs)  edge[->] node[above] {} (a)
  		(occurs)  edge[->] node[above] {} (step)
  		
  		;

  		%     \node[place] (i) [below left of=e] {$\;\;${\it in}$\;\;$ };
  		%     \node[place] (r) [below right of=e] {$\;\;\;${\it r}$\;\;$ };
  		%     \path[->]
  		%        (r) edge [loop above] node {} ()
  		%            edge[->] node[above] {} (i)
  		%        (i) edge[->] node[below] {} (e)
  		%        (e) edge [loop above] node {} ();
  		\end{tikzpicture}
  		\normalsize
  		\caption{The predicate dependency graph 
 of a conjunction of formulas (\ref{eq:folrep1}) and (\ref{eq:folrep3}).} 
  		\label{fig:dpgpart}
 % 	\end{minipage}
  	\normalsize
  \end{figure}
%\end{comment}
 %For example, 
 %first-order formula $S$ from Section \ref{sec:sm} is a rule. 
 %It is easy to see 
 %that disjunctive logic rules as introduced in~\citep{gel91b}  can be represented using normal modular logic 
 %programs. In fact, choice rules can also be represented in this syntax. For instance, we view 
 %the choice rule ${\tt \{p(X)\}\ar q(X).}$ as an abbreviation for the rule $\forall x (q(X)\wedge \neg \neg p(x) \rar 
 %p(x)).$
 
 For any logic program $\Pi$, the {\em dependency graph} of $\Pi$, denoted $DG[\Pi]$,
 is a directed graph of~$\fol{\Pi}$ relative to 
 the predicates occurring in $\Pi$. For example, let $\Pi$ be composed of two rules (\ref{eq:alchoice3}) and~(\ref{eq:alchoice5}). The  conjunction of   formulas~(\ref{eq:folrep1}) and~(\ref{eq:folrep3}) forms its FOL representation. 
 Thus, Figure~\ref{fig:dpgpart} captures its dependency graph $DG[\Pi]$.

\paragraph{Shifting}
We call a logic program {\em disjunctive} if all its rules have the form~\eqref{eq:ruleraspl1}, where  $\body$ only contains atoms possibly preceded by $not$. We say that a disjunctive program is {\em normal} when it does  not contain disjunction  connective $\mid $. 
\citeN{gel91a} defined a mapping from a propositional disjunctive program $\Pi$ to a  propositional normal program %$\Pi_{sh}$
%, the {\em shifted variant} of $\Pi$, 
by replacing each rule~\eqref{eq:ruleraspl1}  with $l>1$ in $\Pi$ by  $l$ new rules
%\hbox{
	$$%\[%\beq
a_i  \ar  \body,\ not\ a_1,   \dots  not\ a_{i-1}, not\ a_{i+1},\dots not\ a_l.
$$%}%\]%\eeq{eq:ruleraspl1shift}
 They showed that every answer set of the constructed program %$\Pi_{sh}$
is also an answer set of $\Pi$. Although
the converse does not hold in general, \citeN{ben94} 
 showed that
the converse holds if $\Pi$  is ``head-cycle-free''. \citeN{lin04a}  illustrated how this property holds about programs with nested expressions that  capture choice rules, for instance. 
Here we  generalize these findings further. First, we show that shifting is applicable to first-order programs (which also may contain choice rules and aggregates in addition to disjunction).  Second, we illustrate that under certain syntactic/structural conditions on a program we may apply shifting  ``locally'' to some rules with disjunction and not others.

% We also define the concept of local-head-cycle-free that allows us to apply shifting locally, e.g., some rules in a program, but not the other.

For an atom $a$, by $a^0$ we denote its predicate constant. For example $\occurs(A,I)^0=\occurs$.
Let~$R$ be a rule of the form~\eqref{eq:ruleraspl1} with~$l>1$. 
By $\shift_{\bf p}(R)$ (where ${\mathbf p}$ is a tuple of distinct predicates) we denote the rule 
%\beq
\beq
\bigmid_{\hbox{$1\leq i\leq l$, $a_i^0\in \bf p$}}
a_i  \ar  \body, \bigcomma_{\hbox{$1\leq j\leq l$, $a_j^0\not\in\bf p$}} not\ {a_j} .
\eeq{eq:ruleraspl1shift}
%\eeq{eq:ruleshift}
Let ${\mathcal{P}^R}$ be a partition of the set composed of the distinct predicate symbols  occurring in the head of rule $R$. 
By $\shift_{\mathcal{P}^R}(R)$ we denote the set of rules composed of rule $\shift_{\bf p}(R)$ for every  member~{\bf p} of the partition $\mathcal{P}^R$ (order of the elements in {\bf p} is immaterial).

For instance, if $R_1$ denotes a rule
\beq
a\mid b\mid c\mid d \mid e(1) \ar
\eeq{eq:longdisjunction}
then ${\mathcal P_1}^{R_1}=\{\{a,b\},\{c,d,e\}\}$ and
${\mathcal P_2}^{R_1}=\{\{a,b\},\{c\},\{d,e\}\}$
form  sample partitions of the described kind. 
Set $\shift_{\mathcal{P}_1^{R_1}}(R_1)$ consists of rules
$$
\ba{l}
a\mid b\ar not\  c,\ not\  d,\ not\   e(1) \\
c\mid d\mid e(1)\ar not\  a,\ not\  b,
\ea
$$
whereas set $\shift_{\mathcal{P}_2^{R_1}}({R_1})$ consists of rules
$$
\ba{l}
a\mid b\ar not\  c,\ not\  d,\ not\   e(1) \\
c\ar  not\  a,\ not\  b,\ not\ d,\ not\  e(1)\\
d\mid e(1)\ar not\  a,\ not\  b,\ not\ c. 
\ea
$$

\begin{theorem}\label{thm:shift2}
	Let $\Pi$ be  a logic program, % with at least one object constant, 
	$\bf R$ be a set of rules  in~$\Pi$ of the form~\eqref{eq:ruleraspl1} with $l>1$, and $C$ be the set of strongly connected components in the dependency graph of $\Pi$.
	A program constructed from~$\Pi$ by replacing each rule $R\in \bf R$ with  $\shift_{\mathcal{P}^R}(R)$ has the same answer sets as~$\Pi$
	if  any  partition~$\mathcal{P}^R$ is such that there are no two distinct members ${\bf p}_1$ and ${\bf p}_2$ in $\mathcal{P}^R$ so that for some 
	strongly connected component $c$ in $C$, $c\cap \bf p_1\neq \emptyset$ and 
	$c\cap \bf p_2\neq \emptyset$.\footnote{The statement of this theorem  was suggested by Pedro Cabalar and Jorge Fandinno in personal communication on January 17, 2019.}
\end{theorem}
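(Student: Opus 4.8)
The plan is to compare the two sentences $\SM_{\pred(\fol\Pi)}[\fol\Pi]$ and $\SM_{\pred(\fol{\Pi'})}[\fol{\Pi'}]$, where $\Pi'$ is the program obtained from $\Pi$ by the prescribed replacements, and to show that they have the same Herbrand models, i.e.\ that $\Pi$ and $\Pi'$ have the same answer sets. I would begin with three reductions. First, for any rule $R$ of the form~\eqref{eq:ruleraspl1} and any block ${\bf p}$ of $\mathcal P^R$, the formula $\fol{\shift_{\bf p}(R)}$ is the implication $\body\wedge\bigwedge_{a_j^0\notin{\bf p}}\neg a_j\rightarrow\bigvee_{a_i^0\in{\bf p}}a_i$, which is classically equivalent to $\body\rightarrow\bigvee_i a_i=\fol R$; hence $\fol\Pi$ and $\fol{\Pi'}$ are classically equivalent and, by the first conjunct of the definition of $\SM$, have the same models, so the task reduces to showing that a common model is an answer set of $\Pi$ iff it is an answer set of $\Pi'$. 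Second, the shifted rules add to the body only negated occurrences of the displaced head atoms, so they contribute no new edge to the predicate dependency graph; consequently $DG[\Pi]=DG[\Pi']$ and the family $C$ of strongly connected components is invariant under each replacement, which lets me perform the replacements one rule at a time with the hypothesis of the theorem intact at every step. Third, a coarsening of a separated partition is again separated, and $\shift_{\mathcal P^R}(R)$ can be produced by first splitting off one block and then shifting the disjunctive rule that collects the remaining blocks; I would therefore induct on $|\mathcal P^R|$ and assume henceforth that a single rule $R$ is shifted along a two-block partition $\{{\bf p}_1,{\bf p}_2\}$ no strongly connected component of which meets both blocks.

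I would then split the equivalence into its two inclusions. The direction ``every answer set of $\Pi'$ is an answer set of $\Pi$'' holds without the separation hypothesis: it is the first-order lift of the classical soundness of shifting~\citep{gel91a}, and I would obtain it by comparing the witnesses $\fol\Pi^*({\bf u})$ and $\fol{\Pi'}^*({\bf u})$ from the definition of $\SM$ and observing that the extra negated head atoms make the shifted witness logically stronger, so that the nonexistence of a witness for $\Pi'$ entails the nonexistence of one for $\Pi$.

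The crux is the reverse inclusion, which is exactly where the separation hypothesis is indispensable and which is the ``local'' first-order generalization of the head-cycle-free direction of~\citep{ben94}. The engine I would use is the symmetric-splitting machinery of~\citep{fer09a}, which rests on precisely the predicate dependency graph reviewed above. Because ${\bf p}_1$ and ${\bf p}_2$ lie in distinct strongly connected components, the condensation of $DG[\Pi]$ induces a partition of $\pred(\fol{\Pi'})$ into a part containing ${\bf p}_1$ and a part containing ${\bf p}_2$ with no component straddling the two; moreover, in $\Pi'$ every rule has its strictly positive head predicates inside a single part, which is exactly what shifting buys and what fails for the unshifted $R$. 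Splitting therefore factors $\SM_{\pred(\fol{\Pi'})}[\fol{\Pi'}]$ along this partition, and the remaining task is to show that the unsplittable program $\Pi$ has the same answer sets as that factorization. I would argue this by a founded-derivation (level-mapping) analysis of a fixed common model $I$: the separation hypothesis guarantees that no loop among ground atoms passes through head instances of $R$ lying in different blocks, so for each ground instance of $R$ whose body $I$ satisfies, the disjunct founded in $I$ sits in a single block and the matching shifted rule $\shift_{{\bf p}_i}(R)$ reproduces exactly that support, while the cross-block disjuncts appear only as negative conditions that $I$ already satisfies. The reduct-minimality of $I$ for $\Pi$ then transfers to $\Pi'$ and back.

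I expect the main obstacle to be this last step: making the foundedness argument rigorous in the presence of the choice rules and aggregate expressions in the untouched part of the program, and verifying that separation at the level of predicates really does forbid the atom-level loops that could otherwise let the additional negative conditions of the shifted rules destroy minimality. The aggregates can be handled by treating their FOL representations as ordinary first-order conditions, as licensed by Theorem~\ref{thm:Aggregates} and Corollary~\ref{cor:gen}, when computing both dependencies and witnesses.
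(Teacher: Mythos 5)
Your preliminary reductions are sound: $\fol{\Pi}$ and $\fol{\Pi^{sh}}$ are classically equivalent, shifting leaves the predicate dependency graph unchanged, and you correctly identify the separation hypothesis and the splitting machinery of \citep{fer09a} as central. But the proof has a genuine gap at exactly the point you flag yourself: the ``founded-derivation (level-mapping) analysis'' is the entire content of the theorem and is never carried out. In the first-order setting of the SM operator, with choice rules and aggregate expressions in the untouched part of the program, there is no off-the-shelf level-mapping characterization of stable models to lean on, so that step is not a routine verification but the whole difficulty. You also apply splitting asymmetrically, factoring only $\SM[\fol{\Pi'}]$ and calling $\Pi$ ``unsplittable''; in fact Version~2 of the Splitting Lemma decomposes $\SM_{\bf p}[F]$ along the strongly connected components of $DG_{\bf p}[F]$ for \emph{any} $F$, disjunctive or not, and applying it to $\Pi$ as well is what makes the argument go through.

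The paper closes the gap with two tools you do not use. First, it partitions the components in $C$ into those meeting the head of some rule of $\bf R$ and the rest, and applies the symmetric Splitting Lemma to both $\Pi$ and $\Pi^{sh}$, reducing the claim to the equivalence of $\SM_{\bf q}[\fol{\Pi}]$ and $\SM_{\bf q}[\fol{\Pi^{sh}}]$ for each single component $\bf q$. Second, it invokes Theorem~5 of \citep{fer09}: if the equivalence of $F$ and $G$ is derivable intuitionistically from the law of the excluded middle for the predicates outside $\bf p$, then $\SM_{\bf p}[F]$ and $\SM_{\bf p}[G]$ have the same stable models. The separation hypothesis guarantees that, relative to a single component $\bf q$, at most one block of $\mathcal{P}^R$ contributes intensional head atoms; every other head atom obeys excluded middle, and the equivalence between $R$ and $\shift_{\mathcal{P}^R}(R)$ then becomes a short intuitionistic derivation. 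This uniform logical argument replaces both your witness-comparison direction and your foundedness direction at once; if you want to salvage your outline, that relativized excluded-middle lemma is the missing ingredient.
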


Consider a sample program~$\Pi_{samp}$ composed of rule~\eqref{eq:longdisjunction}, which we denote as $R_1$, 
and rules
\beq 
\ba{l}
a\ar b\\
b\ar a.
\ea
\eeq{eq:defsample}
The strongly connected components of  program $\Pi_{samp}$ are $\{\{a,b\},\{c\},\{d\},\{e(1)\}\}$.
Theorem~\ref{thm:shift2} tells us, for instance, that the answer sets of program~$\Pi_{samp}$ coincide with the answer sets of two distinct programs:
\begin{enumerate}
	\item a program composed of rules $\shift_{\mathcal{P}_1^{R_1}}(R_1)$ and rules~\eqref{eq:defsample}; 
	\item a program composed of rules $\shift_{\mathcal{P}_2^{R_1}}(R_1)$ and rules~\eqref{eq:defsample}.
	\end{enumerate}

\smallskip
We now use Theorem~\ref{thm:shift2} to argue the correctness of  \observation~\ref{enum:observ5}.
Let \hbox{\alchoice}$'$ denote a  program constructed from the \alchoice encoding by replacing
\eqref{eq:alchoice3} with
\eqref{eq:dalchoice1}.
Let \hbox{\alchoice}$''$ denote a  program
constructed from the \hbox{\alchoice}, by (i) replacing \eqref{eq:alchoice3} with~\eqref{eq:disj1}  and (ii) adding rule~\eqref{eq:disj2}.
Theorem~\ref{thm:shift2} tells us that programs 
  \hbox{\alchoice}$'$ and  \hbox{\alchoice}$''$ have the same answer sets.
  Indeed, \begin{enumerate}[topsep=0pt] 
   		\itemsep0em
  	\item take~${\bf R}$ to consist of rule~\eqref{eq:dalchoice1} and
  	\item recall  Facts~\ref{enum:fact1},~\ref{enum:fact2}, and~\ref{enum:fact2.5}. Given any  \planinst intended to use with \alchoice a program 
  obtained from the union of \planinst and  \hbox{\alchoice}$'$
  is such that  $\occurs$ is terminal. 
  It is easy to see that any terminal predicate in a program occurs only in the singleton strongly connected components of a program dependency graph.
  \end{enumerate}
   Due to \observations~\ref{enum:observ4} and~\ref{enum:observ1}, the \hbox{\alchoice} encoding has the same answer sets as \hbox{\alchoice}$''$ and consequently the same answer sets as \hbox{\alchoice}$'$. This argument accounts for the proof of \observation~\ref{enum:observ5}.
  
\medskip

%We can further generalize the result on rewriting rules with disjunctions as follows.

\paragraph{Completion}
%[LEMMA ON COMPLETION SEEMS AN OVERKILL for Observation 5 Theorem 10 ny Fer and Lifschitz in circ paper might suffice)]

We now proceed at stating formal results about  first-order formulas and their stable models. The fact that we identify logic programs with their FOL representations translates these results to the case of the RASPL-1 programs.

% considered program. The stated Completion Lemma helps us to argue that the  \observation~\ref{enum:observ3} is correct

% Consider a simple \definition \SM$_{\bf p}[F]$.
About a
first-order formula $F$ we  say that it is in
{\em  Clark normal form}~\citep{fer09}
relative to
the tuple/set
{\bf p} of  predicate symbols if it is a conjunction of formulas of the
form
\beq 
\forall \vec{x} (G\rar p(\vec{x}))
\eeq{eq:compformula}
one for each predicate $p\in {\bf p}$, where $\vec{x}$ is a tuple of distinct object variables. 
%A formula $F$ is in {\em extended Clark normal form} if it is 
%of the form $G\wedge H$ where  $G$ is a formula in Clark normal form relative to {\bf p} and $H$ is a conjunction of formulas of the form $\widetilde{\forall}(K\ar\bot)$.
We refer the reader to Section 6.1 in~\citep{fer09} for the description of  the intuitionistically equivalent transformations that can convert a first-order formula, which is a FOL representation for a RASPL-1 program (without disjunction and denials),  into Clark normal form.

The {\em completion} of a formula  $F$ in Clark normal form relative to predicate symbols~{\bf p}, denoted by $Comp_{\bf p}[F]$, 
is obtained from $F$ by replacing each conjunctive term of the form~\eqref{eq:compformula}
with 
%\beq 
$$\forall \vec{x} (G\lrar p(\vec{x})).$$

We now review an important result about properties of completion. 
\begin{theorem}[Theorem 10~\citep{fer09}]\label{thm:comp0}
	For any formula $F$ in Clark normal form and arbitrary tuple~${\bf p}$ of predicate constants, formula $$\SM_{\bf p}[F]\rar Comp_{\bf p}[F]$$ is logically valid.
\end{theorem}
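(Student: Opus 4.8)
The plan is to argue by contradiction, exploiting the fact that $F$, being in Clark normal form relative to ${\bf p}$, is the conjunction of exactly the forward implications $\forall\vec{x}(G\rar p(\vec{x}))$, and that every ${\bf p}$-stable model of $F$ is a model of $F$. Hence $Comp_{\bf p}[F]$ differs from $F$ only by adding, for each $p\in{\bf p}$, the converse implication $\forall\vec{x}(p(\vec{x})\rar G)$, so it suffices to show that $\SM_{\bf p}[F]$ entails each such converse. I would fix an interpretation $I\models\SM_{\bf p}[F]$ and suppose, toward a contradiction, that for some $p_i\in{\bf p}$ with defining body $G_i$ there is a tuple $\vec{a}$ of domain elements with $I\models p_i(\vec{a})$ but $I\not\models G_i(\vec{a})$.

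The workhorse is a monotonicity lemma: for any tuple ${\bf u}$ of relations with ${\bf u}\leq{\bf p}$ (evaluated in $I$) and any formula $H$, $I\models H^*({\bf u})\rar H$. This is proved by structural induction on $H$. The atomic case for a member $p_k$ of ${\bf p}$ uses $p_k(\vec{t})^*=u_k(\vec{t})$ together with $u_k\leq p_k$; atoms outside ${\bf p}$ satisfy $H^*=H$; the conjunction, disjunction, and quantifier cases are immediate; and the implication case is trivial, since $(A\rar B)^*$ is $(A^*\rar B^*)\wedge(A\rar B)$, whose second conjunct is $A\rar B$ itself.

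The heart of the argument is constructing a witness that refutes the second conjunct $\neg\exists{\bf u}({\bf u}<{\bf p}\wedge F^*({\bf u}))$ of $\SM_{\bf p}[F]$. I would define ${\bf u}$ by setting $u_j=p_j$ for $j\neq i$ and letting $u_i$ be $p_i$ with the single tuple $\vec{a}$ deleted, i.e. $u_i(\vec{x})$ holds iff $p_i(\vec{x})\wedge\vec{x}\neq\vec{a}$. Then ${\bf u}\leq{\bf p}$, and since $I\models p_i(\vec{a})$ while $I\not\models u_i(\vec{a})$ we get $\neg({\bf p}\leq{\bf u})$, so ${\bf u}<{\bf p}$. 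It then remains only to verify $I\models F^*({\bf u})$, which produces the contradiction.

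To check $F^*({\bf u})$ I would expand the star on each conjunct of $F$: since $(\forall\vec{x}(G_j\rar p_j(\vec{x})))^*$ equals $\forall\vec{x}\big((G_j^*\rar u_j(\vec{x}))\wedge(G_j\rar p_j(\vec{x}))\big)$, the formula $F^*({\bf u})$ splits into a copy of $F$ (the ``unstarred'' conjuncts, true since $I\models F$) and the conjuncts $\forall\vec{x}(G_j^*\rar u_j(\vec{x}))$. For $j\neq i$ these follow from the monotonicity lemma ($G_j^*\rar G_j$) chained with $G_j\rar p_j(\vec{x})=u_j(\vec{x})$ from $F$. For $j=i$ and $\vec{x}\neq\vec{a}$ the same chain yields $u_i(\vec{x})$; and for $\vec{x}=\vec{a}$ the lemma gives $G_i^*(\vec{a})\rar G_i(\vec{a})$, so since $G_i(\vec{a})$ is false by assumption, $G_i^*(\vec{a})$ is false and the implication holds vacuously. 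Hence $I\models F^*({\bf u})$, contradicting $\SM_{\bf p}[F]$. The main obstacle I anticipate is essentially bookkeeping: computing the recursive starred form of the Clark-normal-form implications correctly and confirming that deleting the single witness $\vec{a}$ from $p_i$ suffices, both of which rest entirely on having the monotonicity lemma available.
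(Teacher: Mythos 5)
Your proof is correct, and it is essentially the standard argument for this result: the paper itself gives no proof of Theorem~\ref{thm:comp0} (it is imported verbatim from \citep{fer09}), and the proof in that source proceeds exactly as you do, via the monotonicity lemma $({\bf u}\leq{\bf p})\rar(H^*({\bf u})\rar H)$ together with the witness obtained by deleting the single offending tuple $\vec{a}$ from $p_i$ to violate the minimality conjunct of $\SM_{\bf p}[F]$. All the steps check out, including the key point that $G_i^*(\vec{a})$ must be false because $G_i(\vec{a})$ is, which makes the starred implication hold vacuously at $\vec{a}$.
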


The following Corollary is an immediate consequence of  this theorem, Theorem~\ref{thm:constraints}, and the fact that formula of the form $\widetilde{\forall}(\body\rar \bot)$  is intuitionistically equivalent to formula $\neg \widetilde{\exists} \body$.
\begin{corollary}\label{cor:comp}
For any formula $G\wedge H$ such that (i)
formula $G$ is in Clark normal form relative to {\bf p} and $H$ is a conjunction of formulas of the form $\widetilde{\forall}(K\rar\bot)$,
 the implication
 $$
 \SM_{\bf p}[G\wedge H]\rar Comp_{\bf p}[G]\wedge H
 $$
 is logically valid.
\end{corollary}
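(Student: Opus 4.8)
The plan is to chain the two cited theorems together, using the intuitionistic rewriting of denials as the glue, exactly as the corollary's preamble advertises. First I would observe that each conjunctive term of $H$, being of the form $\widetilde{\forall}(K\rar\bot)$, is intuitionistically equivalent to $\neg\widetilde{\exists}K$; hence the whole conjunction $H$ is intuitionistically equivalent to a single negation $\neg G'$, where $G'$ is the disjunction $\bigvee_i \widetilde{\exists}K_i$ of the corresponding existential closures (using the intuitionistically valid law $\neg A_1\wedge\cdots\wedge\neg A_n\lrar\neg(A_1\vee\cdots\vee A_n)$). Since intuitionistic equivalence implies strong equivalence, and the $\SM$ operator respects strong equivalence, I may replace $H$ by $\neg G'$ within the scope of the operator, so that $\SM_{\bf p}[G\wedge H]$ is equivalent to $\SM_{\bf p}[G\wedge\neg G']$.

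Next I would apply Theorem~\ref{thm:constraints} with the formula $G$ in the role of its ``$F$'' and $G'$ in the role of its ``$G$'', yielding that $\SM_{\bf p}[G\wedge\neg G']$ is equivalent to $\SM_{\bf p}[G]\wedge\neg G'$. Rewriting $\neg G'$ back as $H$, I obtain that $\SM_{\bf p}[G\wedge H]$ is equivalent to $\SM_{\bf p}[G]\wedge H$. At this point the separation between the Clark-normal-form part $G$ and the denials $H$ has been effected at the level of stable models, which is precisely what makes Theorem~\ref{thm:comp0} applicable.

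Finally I would invoke Theorem~\ref{thm:comp0}: since $G$ is in Clark normal form relative to {\bf p}, the implication $\SM_{\bf p}[G]\rar Comp_{\bf p}[G]$ is logically valid. Conjoining $H$ on both sides gives $\SM_{\bf p}[G]\wedge H\rar Comp_{\bf p}[G]\wedge H$, and composing this with the equivalence from the previous step yields the desired $\SM_{\bf p}[G\wedge H]\rar Comp_{\bf p}[G]\wedge H$.

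I expect the only real subtlety — rather than a genuine obstacle — to lie in the very first step: Theorem~\ref{thm:constraints} is phrased for a single negated conjunct $\neg G$, whereas here $H$ is presented as a conjunction of several denials in the syntactic shape $\widetilde{\forall}(K\rar\bot)$. The care required is to justify collapsing this conjunction into one negation $\neg G'$ and to note that replacing the syntactic denials by $\neg G'$ inside $\SM_{\bf p}[\cdot]$ is legitimate precisely because it is an intuitionistic (hence strong) equivalence. Once this bookkeeping is in place, the result follows immediately from the three stated facts.
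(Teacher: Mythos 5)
Your argument is correct and follows exactly the route the paper intends: it cites this corollary as an immediate consequence of Theorem~\ref{thm:comp0}, Theorem~\ref{thm:constraints}, and the intuitionistic equivalence of $\widetilde{\forall}(K\rar\bot)$ with $\neg\widetilde{\exists}K$, which are precisely the three facts you chain together (your collapsing of the conjunction of denials into a single negation via the intuitionistically valid De~Morgan law is the right bookkeeping to make Theorem~\ref{thm:constraints} apply). Nothing is missing.
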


To illustrate the utility of this result we now construct an argument for the correctness of \observation~\ref{enum:observ3}. This argument finds one more formal result of use:
\begin{theorem}\label{prop:equivelast}
For a program $\Pi$, a first-order formula $F$ such that every answer set of~$\Pi$  satisfies $F$, and any two denials $R$ and $R'$ such that $F\rar(\fol{R}\leftrightarrow\fol{R'})$,  
the answer sets of programs $\Pi\cup \{R\}$ and $\Pi\cup \{R'\}$   coincide.
\end{theorem}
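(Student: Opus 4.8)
The plan is to reduce the statement to Theorem~\ref{thm:constraints}, which lets us peel a denial off a program while preserving its stable models, and then to finish with a purely classical argument that invokes the hypothesis on $F$. Throughout I would work over the common tuple ${\bf p}$ of predicate symbols of the programs involved, assuming (as holds in the paper's applications, e.g.\ for rules~\eqref{eq:ruleSGI} and~\eqref{eq:dalchoice2} in \observation~\ref{enum:observ3}) that $R$ and $R'$ introduce no predicate or object constant beyond those of $\Pi$, so that $\pred(\fol{\Pi\cup\{R\}})=\pred(\fol{\Pi\cup\{R'\}})=\pred(\fol{\Pi})={\bf p}$ and all three programs have their answer sets over one signature.

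First I would record the shape of a denial's FOL representation. Since $\fol R=\widetilde\forall(\body_R\rar\bot)$ is intuitionistically equivalent to $\neg\widetilde\exists\,\body_R$, the formula $\fol R$ equals, up to intuitionistic (hence strong) equivalence, a formula of the form $\neg G_R$; likewise $\fol{R'}$ has the form $\neg G_{R'}$. Because $\fol{\Pi\cup\{R\}}=\fol\Pi\wedge\fol R$ and intuitionistically equivalent subformulas may be substituted inside $\SM_{\bf p}$ by strong equivalence, Theorem~\ref{thm:constraints} then yields
$$
\SM_{\bf p}[\fol{\Pi\cup\{R\}}]\ \equiv\ \SM_{\bf p}[\fol\Pi]\wedge\fol R,
\qquad
\SM_{\bf p}[\fol{\Pi\cup\{R'\}}]\ \equiv\ \SM_{\bf p}[\fol\Pi]\wedge\fol{R'}.
$$
Reading these off for an Herbrand interpretation $X$ over ${\bf p}$ gives the key decomposition: $X$ is an answer set of $\Pi\cup\{R\}$ iff $X$ is an answer set of $\Pi$ and $X\models\fol R$, and symmetrically with $R'$.

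With the decomposition in hand the remainder is elementary. Suppose $X$ is an answer set of $\Pi\cup\{R\}$. Then $X$ is an answer set of $\Pi$, so by hypothesis $X\models F$, and moreover $X\models\fol R$. Since the implication $F\rar(\fol R\lrar\fol{R'})$ holds, $X\models\fol R\lrar\fol{R'}$, whence $X\models\fol{R'}$; thus $X$ is an answer set of $\Pi$ satisfying $\fol{R'}$, i.e.\ an answer set of $\Pi\cup\{R'\}$. The converse direction is identical with $R$ and $R'$ exchanged, so the two programs have exactly the same answer sets.

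I expect the only genuine obstacle to be the bookkeeping around signatures rather than the logic: I must ensure the predicate tuple ${\bf p}$ and the Herbrand universe are common to $\Pi$, $\Pi\cup\{R\}$, and $\Pi\cup\{R'\}$, so that the clause ``$X$ is an answer set of $\Pi$'' occurring in both equivalences refers to the same objects and so that $X\models F$ is meaningful. Once the denials are confirmed to add no new predicates or constants — which is automatic in the intended applications — the application of Theorem~\ref{thm:constraints} and the subsequent classical step go through without friction.
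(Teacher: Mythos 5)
Your proposal is correct and follows exactly the route the paper intends: it invokes Theorem~\ref{thm:constraints} (after noting that a denial's FOL representation is intuitionistically a negated formula) to decompose the answer sets of $\Pi\cup\{R\}$ as the answer sets of $\Pi$ satisfying $\fol{R}$, and then closes with the classical step using $F\rar(\fol{R}\lrar\fol{R'})$. The paper itself gives only the one-line remark that Theorem~\ref{thm:constraints} provides the argument, so your write-up is a faithful (and more careful, given the signature bookkeeping) elaboration of the same proof.
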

Theorem~\ref{thm:constraints} provides grounds for a straightforward argument for this statement.

Consider 
the \alchoice encoding without denial~\eqref{eq:alchoice4}   extended with any \planinst.
We can partition it into two parts: one that contains the denials, denoted by~$\Pi_H$, and the remainder, denoted by~$\Pi_G$.
Recall Fact~\ref{enum:fact1}.
% and note that also predicate $success$ never occurs in the heads of rules in \planinst. $\Pi_H$
Following the steps described by \citeN[Section 6.1]{fer09},  formula $\fol{\Pi_G}$ turned into  Clark normal form relative to the predicate symbols occurring in $\Pi_H\cup \Pi_G$   contains implication~\eqref{eq:folrep1}. The completion of this formula contains equivalence 
\beq
\widetilde{\forall}\big(\neg\neg \occurs(A,I)\wedge  SG(I)\wedge \action(A)\leftrightarrow \occurs(A,I)\big).
\eeq{eq:streqcond}
By Corollary~\ref{cor:comp} it follows that any answer set of $\Pi_H\cup \Pi_G$ 
satisfies formula~\eqref{eq:streqcond}. 
It is easy to see that an interpretation satisfies~\eqref{eq:streqcond} and the FOL representation of~\eqref{eq:ruleSGI} %~\eqref{eq:ruleSGIfol} 
if and only if it satisfies~\eqref{eq:streqcond} and the FOL representation of denial~\eqref{eq:dalchoice2}.  Thus, by Theorem~\ref{prop:equivelast} program \hbox{$\Pi_H\cup \Pi_G$} extended with~\eqref{eq:ruleSGI} and program \hbox{$\Pi_H\cup \Pi_G$} extended with~\eqref{eq:dalchoice2} have the same answer sets. Recall \observation~\ref{enum:observ2} claiming that  it is safe to replace denial~\eqref{eq:alchoice4} with 
denial~\eqref{eq:ruleSGI} within an arbitrary program. It follows that
program $\Pi_H\cup \Pi_G$ extended with~\eqref{eq:dalchoice2} have the same answer sets $\Pi_H\cup \Pi_G$ extended with~\eqref{eq:alchoice4}.
 This concludes the argument for the claim of \observation~\ref{enum:observ3}.

%\eeq{eq:compformulalrar}

\begin{comment}
For any predicate symbol {\bf p}, by $Choice(p)$ we denote the formula
${\forall \vec{x}} 
\big( \neg \neg p(\vec x)\rar 
p(\vec{x})\big)$. %, where $\vec{x}$ is a tuple of distinct object variables whose length coincide with the arity of $p$. 
 We note that formula $Choice(p)$ (i) corresponds to the FOL representation of rule $\{p(x)\}.$, and (ii)  is also intuitionistically equivalent to the law of excluded middle for predicate symbol $p$.
For any tuple {\bf p} of predicate constants, $Choice({\bf p})$ stands for the conjunction of the formulas  $Choice(p)$ for all members $p$ of {\bf p}.
\end{comment}

We now state the main formal results of the second part of the paper. The Completion Lemma for first-order programs stated next is essential in proving the Lemma on Explicit Definitions for first-order programs. 
\observation~\ref{enum:observ7} follows immediately from the latter lemma.

\begin{theorem}[Completion Lemma]\label{thm:complemma}
	Let $F$ be a first-order formula  and {\bf q} be a set of predicate constants that do not have positive, nonnegated occurrences in any FOL rule of~$F$. 
	Let $\bf p$ be a set of predicates in $F$ disjoint from {\bf q}. 
	Let $D$ be a formula in Clark normal form relative to {\bf q} so that in every conjunctive term~\eqref{eq:compformula} of~$D$ 
		no  occurrence of an element in ${\bf q}$ occurs 
		in $G$ as positive and nonnegated.
Formula	 
\beq
%\[
\hbox{$\SM_{\bf pq}[F\wedge D]$}
%\] 
\eeq{eq:goal}
is equivalent to formulas
\begin{flalign}
&\SM_{\bf pq}[F\wedge D]\wedge Comp[D],\label{goal1}\\
&\SM_{\bf p}[F]\wedge Comp[D], \hbox{ and }\label{goal2}\\
&\SM_{\bf pq}[F\wedge \bigwedge_{q\in\{\bf q\}} {\forall \vec{x} 
\big( \neg \neg q(\vec x)\rar 
q(\vec{x})\big)}]\wedge Comp[D].\label{goal3}
\end{flalign}

\end{theorem}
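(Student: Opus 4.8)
The plan is to anchor all three target formulas to $Comp[D]$ (the completion of $D$ relative to $\mathbf{q}$) and reduce the theorem to one core equivalence. First note that \eqref{goal1} differs from \eqref{eq:goal} only by the conjunct $Comp[D]$, so $\eqref{eq:goal}\Leftrightarrow\eqref{goal1}$ amounts to showing $\SM_{\mathbf{pq}}[F\wedge D]\rar Comp[D]$. Next, writing $C$ for the choice formula $\bigwedge_{q}\forall\vec x(\neg\neg q(\vec x)\rar q(\vec x))$, a direct reduct computation gives $(\neg\neg q)^{*}\equiv(v_q\vee q)\wedge\neg\neg q$, so under $v_q\le q$ the formula $C^{*}$ is equivalent to $\bigwedge_q\forall\vec x(q(\vec x)\rar v_q(\vec x))$, which forces the witness component $\mathbf{v}$ to equal $\mathbf{q}$. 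Since $F\wedge C$ is classically equivalent to $F$, and setting $\mathbf{v}=\mathbf{q}$ turns the $\mathbf{pq}$-reduct of $F$ into its $\mathbf{p}$-reduct, this yields $\SM_{\mathbf{pq}}[F\wedge C]\equiv\SM_{\mathbf{p}}[F]$; as \eqref{goal2} and \eqref{goal3} both carry the conjunct $Comp[D]$, we get $\eqref{goal2}\Leftrightarrow\eqref{goal3}$. It therefore remains to prove the core equivalence $\SM_{\mathbf{pq}}[F\wedge D]\Leftrightarrow\SM_{\mathbf{p}}[F]\wedge Comp[D]$.

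The main technical tool I would isolate is a polarity lemma for $F^{*}$: if a predicate $r$ has no positive nonnegated occurrence in $H$, then for $w_r\le r$ every occurrence of $r$ in $H^{*}$ collapses, so that $H^{*}$ with the lowered star variable agrees with $H^{*}$ taken at $w_r=r$. I would prove this by induction on $H$, the only delicate case being $A\rar B$, where one uses the second conjunct of $(A\rar B)^{*}=(A^{*}\rar B^{*})\wedge(A\rar B)$ together with the standard monotonicity property $A^{*}(\mathbf{u})\rar A$ (for $\mathbf{u}\le\mathbf{p}$) to absorb the spurious dependence on $w_r$. Combined with the hypotheses this lemma captures the structural heart of the theorem: since $\mathbf{q}$ has no positive nonnegated occurrence in any FOL rule of $F$ nor in the bodies $G$ of $D$, no predicate depends positively on $\mathbf{q}$, i.e.\ $\mathbf{q}$ is a non-recursive top stratum with no incoming edges in $DG_{\mathbf{pq}}[F\wedge D]$, which is exactly what makes the $\mathbf{q}$-part separable.

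With the lemma in hand the easy steps are quick. For $\SM_{\mathbf{pq}}[F\wedge D]\rar Comp[D]$, suppose some $q(\vec a)$ holds while $G_q(\vec a)$ fails; lower only $v_q$ at $\vec a$, keeping $\mathbf{u}=\mathbf{p}$ and all other components full. Applying the polarity lemma with the single lowered predicate $q$ (which has no positive nonnegated occurrence in $F$ or in any body $G_{q'}$) shows $F^{*}$ and each $G_{q'}^{*}$ are unchanged, so $(F\wedge D)^{*}$ still holds; this is a strictly smaller witness, contradicting stability, whence $Comp[D]$. For the projection direction $\SM_{\mathbf{pq}}[F\wedge D]\rar\SM_{\mathbf{p}}[F]$, given a witness $\mathbf{u}<\mathbf{p}$ for failure of $\SM_{\mathbf{p}}[F]$ I would take $(\mathbf{u},\mathbf{q})$: then $F^{*}(\mathbf{u},\mathbf{q})=F^{*}_{\mathbf{p}}(\mathbf{u})$ holds and $D^{*}(\mathbf{u},\mathbf{q})$ holds because $G_q^{*}(\mathbf{u},\mathbf{q})\rar G_q\rar q$, giving a witness against $\SM_{\mathbf{pq}}[F\wedge D]$.

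The hard part, and the step I expect to be the main obstacle, is the converse $\SM_{\mathbf{p}}[F]\wedge Comp[D]\rar\SM_{\mathbf{pq}}[F\wedge D]$. Here a candidate counter-witness $(\mathbf{u},\mathbf{v})<(\mathbf{p},\mathbf{q})$ may lower both tuples at once, and $F^{*}(\mathbf{u},\mathbf{v})$ is genuinely not independent of $\mathbf{v}$ once $\mathbf{u}<\mathbf{p}$ (witness a subformula $q\rar p$ with $q\in\mathbf{q}$, $p\in\mathbf{p}$), so one cannot simply reset $\mathbf{v}$ to $\mathbf{q}$. When the strictness lies entirely in $\mathbf{v}$ (i.e.\ $\mathbf{u}=\mathbf{p}$) the polarity lemma reduces $D^{*}$ to $G_q\rar v_q$, which with $Comp[D]$ gives $q\rar v_q$ and hence $\mathbf{v}=\mathbf{q}$, a contradiction; the genuine difficulty is the case $\mathbf{u}<\mathbf{p}$. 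The plan is to exploit the constraints $G_q^{*}(\mathbf{u},\mathbf{v})\rar v_q$ from $D^{*}$, together with $Comp[D]$ and the non-recursiveness of $\mathbf{q}$, to re-derive the $\mathbf{q}$-component as the value forced by $\mathbf{u}$ (well defined precisely because $\mathbf{q}$ is a stratum with no incoming positive edges), thereby converting $(\mathbf{u},\mathbf{v})$ into a genuine $\mathbf{p}$-counter-witness for $\SM_{\mathbf{p}}[F]$. Making this transformation precise — in effect a splitting argument for the $\mathbf{q}$-stratum justified by the absence of positive dependence into $\mathbf{q}$ — is the crux; once it is established, the three stated equivalences follow by chaining the links above.
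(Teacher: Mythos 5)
Your decomposition is sound as far as it goes, and the pieces you do carry out are correct: the reduct computation showing $\SM_{\bf pq}[F\wedge C]\equiv\SM_{\bf p}[F]$ for the choice formula $C$, the reduction of \eqref{goal1} to $\SM_{\bf pq}[F\wedge D]\rar Comp[D]$, and the two ``easy'' implications. But the proposal does not contain a proof of the theorem, because the converse direction $\SM_{\bf p}[F]\wedge Comp[D]\rar\SM_{\bf pq}[F\wedge D]$ with ${\bf u}<{\bf p}$ is exactly where you stop, and the polarity lemma you lean on is false in the generality you state it. Your own example already shows why: for $H=q\rar p$ with $q\in{\bf q}$, the occurrence of $q$ is negative and nonnegated, which the theorem's hypothesis permits, and writing $u_p,v_q$ for the predicate variables of $p,q$ we get $H^*=(v_q\rar u_p)\wedge(q\rar p)$, which genuinely changes truth value when $v_q$ is lowered below $q$ and $u_p<p$ (take $q$ true, $v_q$ false, $u_p$ false, $p$ true: the lowered star holds, the star at $v_q=q$ fails). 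The second conjunct plus monotonicity rescues the induction only for \emph{negated} occurrences, where $(\neg A)^*$ collapses to $\neg A$ outright; it does not absorb nonnegated negative occurrences. Consequently the case ${\bf u}<{\bf p}$ cannot be dispatched by resetting ${\bf v}$ to ${\bf q}$, and your plan to ``re-derive the ${\bf q}$-component as the value forced by ${\bf u}$'' is left as a declared crux rather than an argument. That crux is the whole content of the theorem.

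The paper does not fight this battle at all. It checks that the occurrence hypotheses make every $q\in{\bf q}$ a singleton strongly connected component of the dependency graph of $F\wedge D$, that members of ${\bf p}$ have no strictly positive occurrences in $D$ and members of ${\bf q}$ none in $F$, and then invokes the Splitting Theorem (Theorem~\ref{thm:splitfer}) to obtain $\SM_{\bf pq}[F\wedge D]\equiv\SM_{\bf p}[F]\wedge\SM_{\bf q}[D]$; tightness of $D$ relative to ${\bf q}$ gives $\SM_{\bf q}[D]\equiv Comp[D]$ by Theorem~\ref{thm:completion}, and Theorem~\ref{thm:two} yields the variant \eqref{goal3}. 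What you are attempting is, in effect, a from-scratch reproof of that Splitting Theorem restricted to a non-recursive top stratum. To close the gap you should either cite the Splitting Theorem, as the paper does, or supply the stratum-reconstruction argument in full --- and the latter is considerably more delicate than the remainder of your proposal suggests.
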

This result tells us that {\bf pq}-stable models of $F\wedge D$ are such that they satisfy the classical first-order formula $Comp[D]$. These models also can be characterized as (i) the {\bf p}-stable models of~$F$ that satisfy $Comp[D]$, and 
(ii) the {\bf pq}-stable models of~\hbox{$F$} extended with the counterpart of choice rules for member of {\bf q} that satisfy $Comp[D]$.

\begin{comment}
OLD NARRATIVE RELYING ON MORE RESTRICTIVE RESULT
Theorem~\ref{thm:complemma}, in combination with Facts~\ref{enum:fact1} and~\ref{enum:fact2} about any possible \planinst, tells us that any answer set of  the \alchoice encoding satisfies the equivalence
\beq
\widetilde{\forall}\big(\neg\neg \occurs(A,I)\wedge  SG(I)\wedge \action(A)\leftrightarrow \occurs(A,I)\big).
\eeq{eq:streqcond}
Indeed, take $F$ to be the FOL representation of the \alchoice  encoding excluding rule~\eqref{eq:alchoice3} and augmented with a \planinst, {\bf q} to consist of a single predicate symbol~$\occurs$, and $D$ to be formula~\eqref{eq:folrep1} (the FOL representation of rule~\eqref{eq:alchoice3}).
\end{comment}

%\noindent
For an interpretation~$I$ over signature $\Sigma$, 
% and a function symbol (or, 
%predicate symbol) $t$ from $\Sigma$ by $t^I$ we denote a function (or, relation) 
%assigned to $t$ by $I$.   
%For interpretation $I$, 
by $I_{|\sigma}$ we denote the 
interpretation over $\sigma\subseteq\Sigma$ constructed from $I$ so that  every function or 
predicate symbol  in $\sigma$ is assigned the same value in both~$I$ and~$I_{|\sigma}$. %, $t^I=t^{I_{|\sigma}}$.
We call formula $G$ in~\eqref{eq:compformula} a {\em definition} of~$p(\vec{x})$.
\begin{theorem}[Lemma on Explicit Definitions]\label{thm:complemma2}
	Let $F$ be a first-order formula, {\bf q} be a set of predicate constants that do not occur in  $F$, and  {\bf p} be an arbitrary set of predicate constants in  $F$. 
	Let $D$ be a formula in Clark normal form relative to {\bf q} so that in every conjunctive term~\eqref{eq:compformula} of $D$ 
	there is no  occurrence of an element in ${\bf q}$ in $G$.
	% occurs 	in $G$ as positive and nonnegated.
	Then 
	\begin{itemize}
		\item[i] $M\mapsto M_{|\sigma(F)}$  is a 1-1 correspondence between the models of $\SM_{\bf pq}[F\wedge D]$ 
		and the models $\SM_{\bf p}[F]$, and
		\item[ii]  $\SM_{\bf pq}[F\wedge D]$ and $\SM_{\bf pq}[F^{\bf q}\wedge D]$ are equivalent, where we understand $F^{\bf q}$ as a formula obtained from~$F$ by replacing occurrences of the definitions of 
		$q(\vec{x})$ in~$D$ with~$q(\vec{x})$.
		\item[iii]  $\SM_{\bf pq}[F\wedge D]$ and $\SM_{\bf pq}[F'^{\bf q}\wedge D]$ are equivalent, where we understand $F'^{\bf q}$ as a formula obtained from~$F$ by replacing occurrences of any subformula of the definitions of 
		$q(\vec{x})$ in~$D$ with~$q(\vec{x})$.
	\end{itemize}

\end{theorem}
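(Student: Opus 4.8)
The plan is to let the Completion Lemma (Theorem~\ref{thm:complemma}) carry most of the argument. First I would verify that the hypotheses here are strictly stronger than those of Theorem~\ref{thm:complemma}: since {\bf q} does not occur in $F$ it has no positive, nonnegated occurrence in any FOL rule of $F$, and since no member of {\bf q} occurs in the definition $G$ of any conjunctive term~\eqref{eq:compformula} of $D$, the occurrence condition on $D$ is met a fortiori. Theorem~\ref{thm:complemma} then supplies the backbone equivalence
\[
\SM_{\bf pq}[F\wedge D]\ \equiv\ \SM_{\bf p}[F]\wedge Comp[D],
\]
where $Comp[D]=\bigwedge_{q\in{\bf q}}\widetilde\forall(G\lrar q(\vec x))$. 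I would build all three items on top of this one equivalence together with the explicit-definition structure.

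For item~i the two decisive facts are that $\SM_{\bf p}[F]$ is a sentence over $\sigma(F)$ containing no member of {\bf q}, so $M\models\SM_{\bf p}[F]$ iff $M_{|\sigma(F)}\models\SM_{\bf p}[F]$; and that $Comp[D]$, because each $G$ lives in $\sigma(F)$ and is free of {\bf q}, pins down the interpretation of every $q\in{\bf q}$ uniquely from the $\sigma(F)$-reduct (namely, $q(\vec x)$ must hold exactly where $G$ does). From these I would argue that $M\mapsto M_{|\sigma(F)}$ maps the models of $\SM_{\bf p}[F]\wedge Comp[D]$ onto the models of $\SM_{\bf p}[F]$, that each model $N$ of $\SM_{\bf p}[F]$ has a unique expansion $\bar N$ to $\sigma(F)\cup{\bf q}$ validating $Comp[D]$, and that no two distinct expansions of $N$ can satisfy $Comp[D]$. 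Combined with the backbone equivalence, this yields the claimed 1-1 correspondence between the models of $\SM_{\bf pq}[F\wedge D]$ and those of $\SM_{\bf p}[F]$.

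Items~ii and~iii are the delicate part. Renaming a definition $G$ to $q(\vec x)$ is licensed only by the \emph{classical} equivalence $G\lrar q(\vec x)$ that $Comp[D]$ provides; this equivalence is not valid in \sqht, so Corollary~\ref{cor:gen} and strong-equivalence reasoning do not apply, and I cannot simply reapply Theorem~\ref{thm:complemma} to $F^{\bf q}$, since the substituted atoms $q$ may now sit in positive, nonnegated positions of FOL rules of $F^{\bf q}$. My plan is therefore to show that both $\SM_{\bf pq}[F\wedge D]$ and $\SM_{\bf pq}[F^{\bf q}\wedge D]$ have precisely the interpretations $\bar N$ as their models: the former by item~i, and the latter by a direct analysis of the second-order matrix of $\SM$. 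Writing $({\bf u},{\bf v})$ for the predicate variables matched to $({\bf p},{\bf q})$, the reduct $D^*$ contributes for each $q$ the conjunct $\widetilde\forall(G^*\rar v_q(\vec x))\wedge\widetilde\forall(G\rar q(\vec x))$, while $F^*$ carries $G^*$ exactly where $(F^{\bf q})^*$ carries $v_q$; the aim is to show that, on any model of $Comp[D]$, a counterwitness $({\bf u},{\bf v})<({\bf p},{\bf q})$ to the stability of one side can be transported to the other by identifying $v_q$ with $G^*$, after which $F^*$ and $(F^{\bf q})^*$ coincide at the renamed occurrences. Item~iii then follows by performing this transport occurrence-by-occurrence, so that any chosen collection of occurrences of the definition may be renamed rather than all at once. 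The main obstacle I anticipate is exactly this transport of the reduct witness at occurrences of $G$ lying in positive, nonnegated positions of $F$, where the interplay between the $(\cdot)^*$ operation and the substitution must be controlled carefully; this is precisely the point that forces the local, occurrence-sensitive phrasing of item~iii.
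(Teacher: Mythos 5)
Your item~i is handled exactly as in the paper: both arguments rest on the Completion Lemma's equivalence $\SM_{\bf pq}[F\wedge D]\equiv\SM_{\bf p}[F]\wedge Comp[D]$, followed by the observation that $Comp[D]$ determines each $q\in{\bf q}$ uniquely from the $\sigma(F)$-reduct; you merely spell out the ``obvious 1-1 correspondence'' that the paper asserts in one line. That part is complete and correct.

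For items~ii and~iii you take a genuinely different route. The paper's entire proof of~ii is two steps: by the Replacement Theorem for intuitionistic logic, $\SM_{\bf p}[F]\wedge Comp[D]$ is equivalent to $\SM_{\bf p}[F^{\bf q}]\wedge Comp[D]$, and the latter is folded back into $\SM_{\bf pq}[F^{\bf q}\wedge D]$ by a second application of the Completion Lemma (item~iii is declared ``similar''). You reject that route on the grounds that $G\lrar q(\vec{x})$ is available only classically, via $Comp[D]$, and that the Completion Lemma cannot be reapplied to $F^{\bf q}\wedge D$ because the substituted atoms $q$ may occupy positive, nonnegated positions in FOL rules of $F^{\bf q}$; this is a fair observation about the hypotheses involved, and it is the reason your direct second-order argument is a legitimate alternative rather than a detour. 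What each approach buys is clear: the paper's is two lines but leans heavily on cited lemmas, while yours is self-contained at the level of the $(\cdot)^*$ matrix. The problem is that your version of~ii/iii is not actually finished. The whole content of the claim sits in the step you defer: transporting a counterwitness $({\bf u},{\bf v})<({\bf p},{\bf q})$ between the reducts of $F\wedge D$ and $F^{\bf q}\wedge D$ at occurrences of $G$ lying in positive, nonnegated positions. There $F^*$ carries $G^*({\bf u})$ while $(F^{\bf q})^*$ carries $v_q$, and the only relation $D^*$ hands you is $\widetilde{\forall}(G^*({\bf u})\rar v_q)$ --- one inclusion, not the identification $v_q=G^*({\bf u})$ you invoke; since $(\cdot)^*$ is not monotone in its predicate arguments, shrinking $v_q$ to $G^*({\bf u})$ does not obviously preserve satisfaction of $(F^{\bf q})^*$. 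Your own text names this as ``the main obstacle'' and does not resolve it, so as submitted, items~ii and~iii rest on an unproven step. To close it you need an induction on the structure of $F$ showing that, on models of $Comp[D]$ and for $({\bf u},{\bf v})\leq({\bf p},{\bf q})$, satisfaction of the reduct is unchanged when $v_q$ varies between $G^*({\bf u})$ and the extension of $q$ --- or else fall back on the paper's Replacement-plus-Completion argument and justify its applicability to $F^{\bf q}$ directly.
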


%We do not state the proofs
%for Theorems~\ref{thm:complemma} and~\ref{thm:complemma2}  for the lack of space.
%Yet, 

It is easy to see that
	the program composed of the single rule
	$$
	p(\vec{y})\ar  1\leq \#count\{\vec{x}:F(\vec{x},\vec{y})\}
	$$ %\eeq{eq:ruleag1}
	and	the program
	%\beq
%	\hbox{
		$p(\vec{y})\ar F(\vec{x},\vec{y})$
		%}
%	\eeq{eq:rulenoag1}
are strongly equivalent.
Thus, we can identify 
 rule \eqref{eq:dalchoice3} in the \dalchoice encoding with
the rule
%\hbox{$
\beq
	\sthHpd(I)\ar 1\leq \#count\{A: \occurs(A,I)\}.
	%$}
	\eeq{eq:sth2}
Using this fact and Theorem~\ref{thm:complemma2} allows us to support \observation~\ref{enum:observ7}. Take $F$ to be the FOL representation of \alchoice encoding extended with any \planinst and $D$ be the FOL representation of~\eqref{eq:sth2}, {\bf q} be composed of a single predicate \sthHpd and   {\bf p} be composed of all the predicates in \alchoice and \planinst.

\subsection{Projection}\label{sec:proj} 

\citeN{har16} considered a rewriting technique called projection. 
We start by reviewing their results. We then illustrate how the theory developed here is applicable in their settings. Furthermore, it allows us to  generalize their results to more complex programs.
In addition, our results give us a proof of correctness for system {\sc projector}~\cite{hip18} that implements so called $\alpha$ and $\beta$-projections.

\citeN{har16} considered programs to be  
first-order sentence formed as a conjunction of 
formulas of the form $$
%%\beq
\widetilde{\forall} (a_{k+1} \land \dots \land a_l \land \neg a_{l+1} \land \dots 
\land \neg a_m \land \neg \neg a_{m+1} \land \dots \land \neg \neg a_n \rar a_1 
\lor \dots \lor a_k). 
$$%\eeq{eq:rulehl}
%We call such  programs {\em HL-programs}.
It is easy to see that the FOL-representation of RASPL-1 rule without aggregate expressions comply with this form. We will now generalize  the main result by \citeN{har16} to the case of  RASPL-1 programs. 
%This will be in a sense an immediate generalization of their results to programs of more complex syntactic structure. 

Recall how in Section~\ref{sec:sem}  we identify the logical connective $\neg$  with its counterpart used in logic programs, namely,  $not$. This allows us to call an expression $not\ a$, where $a$ is an atom,  a literal. To simplify the presentation of rewriting in this section we will treat $L_1,\dots,L_k$ in~\eqref{eq:agg} as a set of literals. We will also identify
body~\eqref{eq:body} with the set $\{e_{1} , \dots , e_m , not\  e_{m+1}, not\  e_n\}$ of its elements. 

Let~$R$ be a RASPL-1 rule in a program $\Pi$, and let 
{\bf x} be a non-empty tuple of variables occurring only in 
%atoms in 
%the positive part of the 
body of~$R$ outside of any aggregate expression. By $\alpha({\bf x,y})$ we denote a set of literals in the body of $R$ so that it includes all literals in the body of $R$ that contain at least one variable from~{\bf x}. Tuple {\bf y} denotes 
all the variables occurring in the literals of   $\alpha({\bf x,y})$
different from~{\bf x}. 
By $\alpha'$ we denote any subset of $\alpha({\bf x,y})$ whose literals 
do not contain any variables occurring in ${\bf x}$. By $\body$ and $H$ we denote the body and the head of~$R$ respectively. Let $u$ be a 
predicate symbol that does not occur in $\Pi$. Then a 
\emph{result of projecting variables~${\bf x}$ out of~$R$ using predicate symbol $u$} consists of the 
following two rules  
%\beq
$$%\begin{align*}
\ba{l}
H\ar(\body\setminus\alpha({\bf x},{\bf y}))\cup\alpha'\cup\{u(\bf y)\}\\
u(\bf y)\ar\alpha({\bf x},{\bf y}). 
\ea
$$%\end{align*}
%\eeq{eq:r2x}
For example, one possible  result of projecting $Y$ out of 
\beq
s(X,Z)\ar p(Z), q(X,Y), r(X,Y), t(X)
\eeq{eq:s1def}
%\eqref{eq:s1def} 
using predicate symbol $u$ is 
\begin{flalign}
&s(X,Z)\ar u(X), p(Z), t(X)\label{eq:al1}\\
&u(X)\ar q(X,Y),r(X,Y).\label{eq:al2}
\end{flalign}
 Another possible  result of projecting $Y$ out of rule~\eqref{eq:s1def}
using predicate symbol $u$ consists of rule~\eqref{eq:al1}
and rule
\beq
\ba{l}
u(X)\ar q(X,Y),r(X,Y),t(X).
\ea
\eeq{eq:sprime2def}
Yet, another possible  result of projecting $Y$ out of rule~\eqref{eq:s1def}
 using predicate symbol $u$ consists of rule
\beq
s(X,Z)\ar u(X), p(Z)
\eeq{eq:al3}
 and rule~\eqref{eq:sprime2def}.

We are now ready to state a formal result about projecting that is a generalization of Theorem~6 in~\cite{har16}. 

%\eqref{eq:sprime1def}. 
%We can project variables out of a 
%traditional logic program by successively projecting variables out of rules. 
%For example, first projecting 
%$z$ out of the traditional logic program in \eqref{eq:hcmod4} and then 
%projecting $z'$ out of the first rule of the resulting program yields  
%the traditional logic program in~\eqref{eq:hcmodpr4}. 

%This definition of projection can be intuitively extended to traditional
%formulas that go beyond single rule so that we can talk about projecting 
%variables out of a  \definition. 
\begin{theorem}\label{thm:projection} Let~$R$ be a RASPL-1 rule in a program $\Pi$, and let 
	{\bf x} be a non-empty tuple of variables occurring only in 
	%atoms in 
	%the positive part of the 
	body of~$R$ outside of any aggregate expression and not in the head. 	 
	If~$\Pi'$ is constructed from $\Pi$ by replacing~$R$ in~$\Pi$ with a result of 
	projecting variables~${\bf x}$ out of~$R$ using a predicate symbol~$u$ 
	that is not in the signature of $\Pi$,
	then  $M\mapsto M_{|\sigma(\fol{\Pi})}$  is a 1-1 correspondence between the models of 
	SM$_{{\bf p}, u}[\fol{\Pi'}]$ 
		and the models  of 
	SM$_{{\bf p}}[\fol{\Pi}]$. 
	%and are essentially equivalent.
\end{theorem}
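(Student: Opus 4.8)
The plan is to recognize projection as an instance of explicit-definition rewriting and to reduce the statement to the Lemma on Explicit Definitions (Theorem~\ref{thm:complemma2}), with the fresh symbol $u$ as the defined predicate. Write $\Pi_0=\Pi\setminus\{R\}$, so $\fol{\Pi}=\fol{\Pi_0}\wedge\fol{R}$, and let $H$, $\body$, $\alpha({\bf x},{\bf y})$, $\alpha'$ be as in the definition of the projection; call the two new rules $R_1\colon H\ar(\body\setminus\alpha({\bf x},{\bf y}))\cup\alpha'\cup\{u({\bf y})\}$ and $R_2\colon u({\bf y})\ar\alpha({\bf x},{\bf y})$. Because ${\bf x}$ occurs only in $\body$, outside every aggregate expression, and not in $H$, every literal of $\alpha({\bf x},{\bf y})$ is an ordinary (non-aggregate) literal, and neither $\fol{\body\setminus\alpha({\bf x},{\bf y})}$ nor $\fol{H}$ contains a variable of ${\bf x}$. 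I take the explicit definition to be $D:=\forall{\bf y}\,(\exists{\bf x}\,\fol{\alpha({\bf x},{\bf y})}\rar u({\bf y}))$, which is the Clark normal form relative to $\{u\}$ of $\fol{R_2}=\tall(\fol{\alpha({\bf x},{\bf y})}\rar u({\bf y}))$ (the two are intuitionistically equivalent, pushing $\forall{\bf x}$ inward as $\exists{\bf x}$ since $u({\bf y})$ is ${\bf x}$-free). As $u$ does not occur in its antecedent and $u\notin\sigma(\fol{\Pi})$, the hypotheses of Theorem~\ref{thm:complemma2} on $D$ and ${\bf q}=\{u\}$ are met.

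The technical core is to present $\fol{R}$ in a shape from which a single replacement of $\exists{\bf x}\,\fol{\alpha({\bf x},{\bf y})}$ by $u({\bf y})$ produces exactly $\fol{R_1}$. Using the intuitionistically valid law $\forall{\bf x}\,(A\wedge Q({\bf x})\rar S)\lrar(A\wedge\exists{\bf x}\,Q({\bf x})\rar S)$, valid because ${\bf x}$ is free in neither $A$ nor $S$, with $A=\fol{\body\setminus\alpha({\bf x},{\bf y})}$ (augmented by the conjunct $\neg\neg a_1$ when $H$ is a choice head $\{a_1\}$), $Q=\fol{\alpha({\bf x},{\bf y})}$, and $S=\fol{H}$, one sees that $\fol{R}$ is intuitionistically equivalent to $\tall(\fol{\body\setminus\alpha({\bf x},{\bf y})}\wedge\exists{\bf x}\,\fol{\alpha({\bf x},{\bf y})}\rar\fol{H})$. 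Moreover, since every literal of $\alpha'$ is ${\bf x}$-free and lies in $\alpha({\bf x},{\bf y})$, we have $\exists{\bf x}\,\fol{\alpha({\bf x},{\bf y})}\rar\fol{\alpha'}$ intuitionistically, so adjoining $\fol{\alpha'}$ to an antecedent already containing $\exists{\bf x}\,\fol{\alpha({\bf x},{\bf y})}$ is redundant. Hence $\fol{R}$ is intuitionistically equivalent to $R^{**}:=\tall(\fol{\body\setminus\alpha({\bf x},{\bf y})}\wedge\fol{\alpha'}\wedge\exists{\bf x}\,\fol{\alpha({\bf x},{\bf y})}\rar\fol{H})$. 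Intuitionistic equivalence entails equivalence in \sqht and thus strong equivalence, so for $F:=\fol{\Pi_0}\wedge R^{**}$ we have $\SM_{\bf p}[F]\equiv\SM_{\bf p}[\fol{\Pi}]$, while $\sigma(F)=\sigma(\fol{\Pi})$ as $R^{**}$ introduces no new symbol.

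It remains to assemble the pieces. By part~(i) of Theorem~\ref{thm:complemma2}, $M\mapsto M_{|\sigma(F)}$ is a $1$--$1$ correspondence between the models of $\SM_{{\bf p},u}[F\wedge D]$ and those of $\SM_{\bf p}[F]=\SM_{\bf p}[\fol{\Pi}]$. By part~(ii), $\SM_{{\bf p},u}[F\wedge D]\equiv\SM_{{\bf p},u}[F^{\bf q}\wedge D]$, where $F^{\bf q}$ replaces the (unique) occurrence of the definition $\exists{\bf x}\,\fol{\alpha({\bf x},{\bf y})}$ in $R^{**}$ by $u({\bf y})$; by construction this turns $R^{**}$ into $\tall(\fol{\body\setminus\alpha({\bf x},{\bf y})}\wedge\fol{\alpha'}\wedge u({\bf y})\rar\fol{H})=\fol{R_1}$, so $F^{\bf q}=\fol{\Pi_0}\wedge\fol{R_1}$. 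Finally $F^{\bf q}\wedge D$ is intuitionistically, hence strongly, equivalent to $\fol{\Pi_0}\wedge\fol{R_1}\wedge\fol{R_2}=\fol{\Pi'}$, so $\SM_{{\bf p},u}[\fol{\Pi'}]\equiv\SM_{{\bf p},u}[F^{\bf q}\wedge D]\equiv\SM_{{\bf p},u}[F\wedge D]$. Chaining these equivalences with the correspondence of part~(i) and using $\sigma(F)=\sigma(\fol{\Pi})$ yields that $M\mapsto M_{|\sigma(\fol{\Pi})}$ is a $1$--$1$ correspondence between the models of $\SM_{{\bf p},u}[\fol{\Pi'}]$ and those of $\SM_{\bf p}[\fol{\Pi}]$, as required.

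The delicate points, and where I expect the real work to lie, are: verifying that the prenex and redundancy rewritings are intuitionistically (not merely classically) valid, so that the whole argument stays at the level of strong equivalence and invokes no stable-model-specific reasoning; confirming that part~(ii) replaces the single intended occurrence of $\exists{\bf x}\,\fol{\alpha({\bf x},{\bf y})}$ under the outer quantifiers correctly; and the aggregate bookkeeping, namely that ${\bf x}$ avoiding aggregate expressions keeps $\alpha({\bf x},{\bf y})$ literal-only and leaves the aggregate subformulas inside $\fol{\body\setminus\alpha({\bf x},{\bf y})}$ untouched by every step. The main obstacle is precisely the first: showing that the freedom in choosing $\alpha'$ (and the extra body literals it produces) collapses at the intuitionistic level, which is what lets this generalization of Harrison et al.'s Theorem~6 to aggregates, choice, and disjunction reduce cleanly to Theorem~\ref{thm:complemma2}.
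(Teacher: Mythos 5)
Your proof is correct and follows essentially the same route as the paper: it rewrites $\fol{R}$ intuitionistically so that $\exists{\bf x}\,\fol{\alpha({\bf x},{\bf y})}$ appears as the definition of the fresh predicate $u$ and then invokes the Lemma on Explicit Definitions (Theorem~\ref{thm:complemma2}), which is exactly how the paper derives this result. Your explicit handling of the $\alpha'$ redundancy and of the $\neg\neg a_1$ conjunct for choice heads is, if anything, slightly more careful than the paper's own sketch.
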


This result on correctness of projection is immediate consequences of Lemma on Explicit Definitions presented here.
We  note that the proof  of a more restricted statement by \citeN{har16} is rather complex relying directly on the definition of SM operator. This illustrates the utility of presented theory, e.g., Lemma on Explicit Definitions, as it equips ASP practitioners with a formal result that  eases a construction of proofs of correctness of their rewritings.

\citeN{hip18} considered  rewritings that they call $\alpha$ and $\beta$-projections. They 
also implement these rewritings in 
system~{\sc projector}. Both $\alpha$ and $\beta$-projections are instances of the projection defined here. 
As a result, Theorem~\ref{thm:projection} provides a proof of correctness for the $\alpha$ and $\beta$-projections.

Here we reproduce the definition of $\alpha$-projection by~\citeN[Section 2]{hip18} for the case of {\em positive} rules of the form 
$$
a_0\ar a_1,\dots, a_m,
$$
where $a_0$ is an atom or $\bot$ and $a_1,\dots, a_m$ are atoms (we use the notation of this paper to reproduce the definition).  Expression~\eqref{eq:s1def} exemplifies  a positive rule.
% Given an atom $a$ by $\vars[a]$ we denote the set of variables occurring in~$a$. For example, 
% $\vars[q(X,Y)]=\{X,Y\}.$ Also, for a set~$A$ of atoms $\vars[A]$ denotes the set of all variables occurring in %the elements of $A$.  For instance, $\vars[\{q(X,Y),r(X,Y),t(X)\}]=\{X,Y\}.$
  For a positive rule $\rho$ and a set ${\bf x}$ of variables, by
 $\mathit{alpha}(\rho,{\bf x})$ we denote the  set of all atoms in the body of $\rho$ such that they contain {\em some} variable in ${\bf x}$.
 For instance, let $\rho_1$ be  rule~~\eqref{eq:s1def}.
 Then,
 $$
 \ba{l}
 \mathit{alpha}(\rho_1,\{Y\})=\{q(X,Y),r(X,Y)\}\\
 \mathit{alpha}(\rho_1,\{X,Y\})=\{q(X,Y),r(X,Y),t(X)\}
 \ea
 $$
 
% For a literal $l$ and a set $V$ of variables, we say that $l$ is  $V$-free  when  no variable in~$V$  occurs in $l$. Symbol $\bot$ is   $V$-free for any set $V$.
 
 For a set ${\bf x}$ of  variables and a positive rule $\rho$ of the form $H\ar \body$, where 
 no variable in~${\bf x}$  occurs in $H$, the process of {\em $\alpha$-projecting} ${\bf x}$  out of this rule will result in replacing it by two rules:
 \begin{enumerate}[topsep=0pt,noitemsep]
 	\item a rule $$u({\bf y})\ar \mathit{alpha}(\rho,{\bf x}).$$ 
 	so that
 	\begin{itemize}[topsep=0pt,noitemsep]
 		\item $u$ is a fresh predicate symbol with respect to original program, and 
 		\item ${\bf y}$ is composed of the variables that occur in  $\mathit{alpha}(\rho,{\bf x})$, but not in ${\bf x}$; %(in other words, \hbox{${\bf y}=\vars[\mathit{alpha}(\rho,{\bf x})]\setminus {\bf x}$}; here we abuse the notation and associate a set of elements with a tuple. Let us assume a lexicographical order as a default order of elements in a constructed tuple.);
 	\end{itemize} 
 	\item a rule $$ H\ar (\body\setminus \mathit{alpha}(\rho,{\bf x}))\cup\{u({\bf y})\}.$$
 \end{enumerate}
 For instance, replacing rule~\eqref{eq:s1def} with rules~\eqref{eq:al1}  and~\eqref{eq:al2} exemplifies $\alpha$-projection of $Y$. 
It is  easy to see that $\alpha$-projection on positive programs  is an instance of projection rewriting studied here. 
The definitions of $\alpha$ and $\beta$-projections for general programs require  substantially more notation. Thus, we refer an interested reader to the paper by~\citeN[Section 2]{hip18} for the details. Yet, it is still easy to see that these rewritings are instances of projection as defined here.
For example,  replacing rule~\eqref{eq:s1def} with rules~\eqref{eq:al3}
 and~\eqref{eq:sprime2def} exemplifies $\beta$-projection.

\vspace{-1.2em}
\section{Conclusions}

We illustrated how the concepts of strong equivalence and conservative extensions can be used jointly to argue the correctness of a newly designed program or correctness of program rewritings. This work outlines a methodology for such arguments.  
Also, this paper lifts several important theoretical results for propositional programs to the case of first-order logic programs. These new formal findings allow us to argue a number of first-order program rewritings to be safe. We illustrate the usefulness of these findings by utilizing them in constructing an argument which shows that the sample programs \alchoice and \dalchoice  are essentially the same. We believe that these results provide a strong building block for a portfolio of safe rewritings that can be used in creating an automatic tool for carrying these rewritings during program performance optimization phase. 
For example, system {\sc projector} discussed in the last section implements projection rewritings for the sake of performance. In this work we utilized the presented formal results to argue the correctness of this system. 

%Furthermore, the two running examples illustrated the use of theoretical foundations behind answer set programming in arguing the precise formal links between  distinct encodings for a problem.

\vspace{-.5em}
\paragraph*{Acknowledgements}
We are grateful to Pedro Cabalar, Jorge Fandinno, Nicholas Hippen, Vladimir Lifschitz,  Miroslaw Truszczynski for valuable discussions on the subject of this paper. We are also thankful to the anonymous reviewers for their help to improve the presentation of the material in the paper.  Yuliya Lierler was partially supported by the NSF 1707371 grant.

\appendix
\section{"Weak" Natural Deduction System}
Recall that in addition to introducing the strong equivalence, \citeN{lif01} also illustrated that traditional programs can be associated with the propositional formulas and a question whether the programs are strongly equivalent can  be turned into a question whether the respective propositional formulas are equivalent in the HT logic.
The authors also  state that
every formula provable in
the natural deduction system, where 
the axiom of the law of the excluded middle~($F\vee\neg F$) is replaced by the weak law of the excluded middle ($\neg F\vee\neg \neg F$), is a theorem of logic HT. We call this system {\em weak natural deduction system}.
Since we use this observation in providing formal arguments stated in Section~\ref{sec:prop}, we review the weak natural deduction system here. We denote this system by {\bf N}.
Its review follows the lines of~\citep{cs388l} to a large extent. For another reference to natural deductions system we refer the reader to~\cite{lif08b}. We note that \citeN{min10} introduced an alternative sequent calculus for logic of HT that was further generalized to first-order case.

A {\sl sequent} is an expression of the form
\beq
\G\seq F
\eeq{seq1}
(``$F$ under assumptions~$\G$''), where $F$ is a propositional formula that allows connectives $$\bot, \top, \neg,\wedge,\vee,\rar$$ and  $\G$ is a finite set of
formulas.  If~$\G$ is written as $\{G_1,\dots,G_n\}$, we  drop
the braces and write~(\ref{seq1}) as
%\beq
$G_1,\dots,G_n\seq F.$
%\eeq{seq3}
Intuitively, this sequent  is understood as the formula
%\beq
$(G_1\wedge\cdots\wedge G_n) \rar F$
%\eeq{seq5}
if $n>0$, and as $F$ if $n=0$.

The axioms of {\bf N} are sequents of the forms
$$F \seq F,~~~~ \seq \top,
\hbox{ ~~~~and~~~~ }
\seq \neg F\vee \neg\neg F.$$

In the list of inference rules presented in Figure~\ref{fig:infrules}, $\G$, $\D$,~$\Sigma$
are finite sets of
formulas, and~$F,G,H$ are formulas. 
The inference
rules of {\bf N} except for the two rules at the last row are classified into
introduction rules $({\mathbf \cdot} I)$ and {\sl elimination rules}~$({\mathbf \cdot} E)$; the exceptions are the contradiction rule $(C)$ and the
weakening rule $(W)$.

\begin{figure*}[th]
	\begin{minipage}{.96\textwidth}
		%\small
		
		$$\begin{array}{ll}
		\!\!\!\!\!\!\!\!(\wedge I)\;\r	{\G\seq F \quad \D\seq G}
		{\G,\D\seq F\wedge G}%\qquad\qquad
		&\qquad		
		(\wedge E)\;\r	{\G\seq F \wedge G}
		{\G\seq F}
		\quad
		\r  {\G\seq F \wedge G}
		{\G\seq G}
		\\ \\
		\!\!\!\!\!\!\!\!(\vee I)\;\r	{\G\seq F}
		{\G\seq F \vee G}\quad
		\r	{\G\seq G}
		{\G\seq F \vee G}%\quad
		&\qquad				
		(\vee E)\;\r{\G\seq F \vee G \quad \D,F \seq H \quad \Sigma,G\seq H}
		{\G,\D,\Sigma\seq H}
		\\ \\
		\!\!\!\!\!\!\!\!(\rar\!\! I)\;\r	{\G,F\seq G}
		{\G\seq F\rar G}
		&\qquad				
		(\rar\!\! E)\;\r	{\G\seq F\quad \D\seq F \rar G}
		{\G,\D\seq G}
		\\ \\
		\!\!\!\!\!\!\!\!(\neg I)\;\r	{\G,F\seq\bot}
		{\G\seq\neg F}
		&\qquad				
		(\neg E)\;\r	{\G\seq F \quad \D\seq \neg F}
		{\G,\D\seq\bot}
		\\ \\ 
		\!\!\!\!\!\!\!\! (C)\;\r {\G\seq\bot}{\G\seq F}
		&\qquad (W)\;\r {\G\seq H}{\G,\D\seq H}\\ \\
		\end{array}$$
		\vskip -2mm
		\normalsize
		\caption{Inference rules of system {\bf N}. \label{fig:infrules}}
	\end{minipage}
\end{figure*}
A {\em proof/derivation} is a list of sequents
$S_1,\dots,S_n$ such that each $S_i$ is either an axiom or can be derived from some of the sequents in  $S_1,\dots,S_{i-1}$ by one of the inference rules.
To prove a sequent $S$ means to find a proof with the last sequent~$S$.
To prove a formula~$F$ means to prove the sequent $\seq F$.

The De Morgan's law
$$\neg (F\vee G) \lrar \neg F\wedge \neg G$$
is provable intuitionistically  (where we understand formula $H\lrar H'$ as an abbreviation for $(H\rar H')\wedge (H'\rar H)$). Thus, formulas  $\neg (F\vee G)$ and $\neg F\wedge \neg G$ are intuitionistically equivalent.
The other De Morgan's law
$$\neg (F\wedge G) \lrar \neg F\vee \neg G$$
is such that its one half is provable intuitionistically, while the other one is provable in HT (thus, formulas  $\neg (F\wedge G)$ and $\neg F\vee \neg G$ are  equivalent in HT-logic). 
We illustrate the latter fact in Figure~\ref{fig:prove1}  using system {\bf N}.
In other words,  we prove sequent $\seq \neg (F\wedge G) \rar \neg F\vee \neg G$ in {\bf N}. 
It is convenient to introduce abbreviations for the assumptions used in the proofs so that~$A_1$ abbreviates assumption  $\neg(F\wedge G)$ in Figure~\ref{fig:prove1}.
\begin{figure*}[ht]
	\begin{minipage}{.96\textwidth}
		%\normalsize
		$$
		\begin{array}{rll|rll}
		1.& \seq \neg F\vee \neg\neg F	&\hbox{ axiom} & 8.& \neg \neg F\seq  \neg \neg F		&\hbox{ axiom}\\				
		
		A_1.& \neg(F\wedge G)	&				& 9.& A_1,G, \neg \neg F\seq  \bot		&\hbox{ $(\neg E)$ 7,8}\\		
		
		2.& A_1\seq \neg(F\wedge G)	&\hbox{ axiom} & 10.& A_1, \neg \neg F\seq  \neg G		&\hbox{ $(\neg I)$ 9}\\		
		
		3.& G\seq  G		&\hbox{ axiom}& 11.& A_1, \neg \neg F\seq \neg F\vee \neg G		&\hbox{ $(\vee I)$ 10}\\
		
		4.& F\seq  F		&\hbox{ axiom} & 12.& \neg F\seq  \neg F		&\hbox{ axiom}\\		
		
		5.& F,G\seq  F\wedge G		&\hbox{ $(\wedge I)$ 3,4} & 13.& \neg F\seq  \neg F	\vee\neg G	&\hbox{ $(\vee I)$ 12}\\		
		
		6.& A_1,F,G\seq \bot		&\hbox{ $(\neg E)$ 2,5} & 14.& A_1\seq  \neg F \vee\neg G	&\hbox{ $(\vee E)$ 1, 11, 13}\\
		
		7.& A_1,G\seq \neg F		&\hbox{ $(\neg I)$ 6} & 15.& \seq \neg(F\wedge G) \rar (\neg F \vee\neg G)	&\hbox{ $(\rar I)$ 14}\\ 				
		\end{array}
		$$
		\vskip -2mm
		\normalsize
		\caption{Proof of sequent $\seq \neg (F\wedge G) \rar \neg F\vee \neg G$ in system {\bf N}. \label{fig:prove1}}
	\end{minipage}
\end{figure*}

It is easy to show that the propositional formulas $F\rar \bot$ and $\neg F$ are equivalent using {\bf N}, so that in the sequel we often identify rules of the form
$$	 a_{1} \wedge \dots \wedge a_l \wedge \neg  a_{l+1} \wedge \dots 
\wedge \neg a_m \wedge \neg \neg a_{m+1} \wedge \dots \wedge \neg\neg  a_n \rar \bot  
$$
with the propositional formula
$$
\neg(  a_{1} \wedge \dots \wedge a_l \wedge  a_{l+1} \wedge \dots 
\wedge \neg a_m \wedge \neg \neg a_{m+1} \wedge \dots \wedge \neg \neg a_n).
$$

\section{Proofs}
\subsection{Proofs for Section~\ref{sec:prop}}
To prove Proposition~\ref{prop:ce} several earlier results from the literature are of use.

\begin{prop}[Replacement Theorem I in \citep{min00}, Section 2.8]\label{prop:repl} If $F$ is a formula  containing a subformula $G$ and $F'$ is the result of replacing that subformula by~$G'$ then $G\lrar G'$ intuitionistically implies $F\lrar F'$. 
\end{prop}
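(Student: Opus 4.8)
The plan is to prove the statement by structural induction on the formula $F$, keeping track of the distinguished occurrence of the subformula $G$ that is being replaced. The engine of the argument is a family of \emph{congruence lemmas}, one for each connective, each saying that intuitionistic equivalence is preserved under replacement inside a compound formula; every one of them is intuitionistically derivable, so the argument never invokes the (weak) law of the excluded middle and stays within pure intuitionistic logic.

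The base case is when the distinguished occurrence is all of $F$, so that $F$ is $G$ and $F'$ is $G'$; then the conclusion $F\lrar F'$ is literally the hypothesis $G\lrar G'$. If instead $F$ is atomic (or $\bot$ or $\top$) and does not contain the occurrence, then $F'$ is $F$ and $F\lrar F$ is an intuitionistic theorem.

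For the inductive step I would case on the principal connective of $F$, assuming the occurrence of $G$ lies inside exactly one immediate subformula $A$, with $A'$ its replacement; the inductive hypothesis supplies $A\lrar A'$. The congruence lemmas I would establish are: from $A\lrar A'$ infer $(A\wedge B)\lrar(A'\wedge B)$ and $(A\vee B)\lrar(A'\vee B)$ (and symmetrically for the right component); from $A\lrar A'$ infer $(B\rar A)\lrar(B\rar A')$ for an occurrence in a consequent; and from $A\lrar A'$ infer $(A\rar B)\lrar(A'\rar B)$ for an occurrence in an antecedent. Since $\neg A$ abbreviates $A\rar\bot$, negation is just the special case $B=\bot$ of the last lemma. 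Each lemma is a short derivation: for conjunction, say, from $A\rar A'$ and $A\wedge B$ one extracts $A$, obtains $A'$, re-pairs with $B$, and the converse is symmetric using $A'\rar A$.

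The step I expect to be the main obstacle --- and the one that explains the form of the hypothesis --- is the antecedent case of implication. Intuitionistically $A\rar B$ is anti-monotone in $A$, so proving $(A\rar B)\lrar(A'\rar B)$ genuinely needs \emph{both} directions of $A\lrar A'$: deriving $A'\rar B$ from $A\rar B$ composes with $A'\rar A$, while the converse composes with $A\rar A'$. This is precisely why the theorem must hypothesize the full equivalence $G\lrar G'$ and not merely a one-way implication. Once the congruence lemmas are in hand, the induction closes routinely; chaining the per-connective equivalences along the path from the root of $F$ down to the distinguished occurrence yields $F\lrar F'$.
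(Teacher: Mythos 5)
Your proposal is correct, but it is worth noting that the paper does not actually prove this proposition at all: it is imported verbatim as a known result, ``Replacement Theorem I'' from Mints's introduction to intuitionistic logic (Section 2.8 of the cited reference), and is used as a black box in the proofs of Proposition 1 and Lemma 1. What you have written is essentially a reconstruction of the standard textbook proof that the citation points to: structural induction along the path from the root of $F$ to the distinguished occurrence of $G$, discharged by one intuitionistically derivable congruence lemma per connective. The details check out --- the base case is the hypothesis itself, negation reduces to the antecedent case via $\neg A \equiv A\rar\bot$, and your observation that the antecedent position of $\rar$ is contravariant (so that both directions of $G\lrar G'$ are genuinely needed) correctly explains why the hypothesis is a biconditional rather than a one-way implication. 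Since the proposition is propositional (quantifiers are handled separately by the paper's ``Replacement Theorem II,'' which requires the universal closure $\widetilde\forall(G\lrar G')$), your case analysis over $\wedge$, $\vee$, $\rar$, $\neg$, $\bot$, $\top$ is exhaustive, and nothing in the argument appeals to excluded middle or its weak form, so the derivation stays within pure intuitionistic logic as required. The only thing your write-up buys beyond the paper is self-containedness; the paper's choice to cite rather than prove is reasonable given that the result is classical.
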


To rely on formal results stated earlier in the literature, we now consider the case of programs that are more general than traditional logic programs. We call such programs {\em definitional}. 
In other words, traditional programs are their special case. 
%We call these programs  In particular,~\citep{fer05} introduces the notion of answer sets for general propositional theories. Here, we borrow from their work and allow rules of other syntactic form than presented in~\eqref{eq:ruletraditional}.
A {\em definitional program}  consists of rules of the form~\eqref{eq:formulatraditional} (recall that we identify rule~\eqref{eq:ruletraditional} with the propositional formula~\eqref{eq:formulatraditional})
and rules of the form $a\rar F$, where $a$ is an atom and $F$ is a basic conjunction. If a program contains two rules $F\rar a$ and $a\rar F$ we abbreviate that by a single expression $F\lrar a$. A definitional program is a special case of propositional theories presented in~\citep{fer05}.   We understand answer sets for definitional programs as presented there. 
 \citeN[Section 2]{fer05} illustrates that in application to any traditional program the definition from~\cite{lif99d}, presented here, and their definition are equivalent.

We now restate the results that immediately follow from Lemma on Explicit Definitions and Completion Lemma presented in~\citep{fer05} for the case of definitional programs. 

\begin{prop}[Proposition 4~\citep{fer05}]\label{prop:expdeffer} 
	Let $\Pi$ be a definitional program and $Q$ be a set of atoms that do not occur in $\Pi$. For each $q\in Q$, let $Def(q)$ be a basic conjunction that does not contain any atom in $Q$. Then, $X\mapsto X\setminus Q$ is a 1-1 correspondence between the answer sets if $\Pi \cup \{ Def(q)\rar q\mid q\in Q\}$ and the answer sets of $\Pi$.
	\end{prop}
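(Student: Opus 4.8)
The plan is to argue directly from the reduct characterisation of answer sets, exploiting that the atoms of $Q$ are fresh for $\Pi$ and are ``pinned down'' by the added rules. Write $\Pi'=\Pi\cup\Delta$, where $\Delta=\{Def(q)\rar q\mid q\in Q\}$, and note that $\Pi'$ is again a definitional program. The central structural observation is that in $\Pi'$ every atom of $Q$ occurs only in the head of its single defining rule, while each body $Def(q)$ mentions only atoms of $\sig{\Pi}$, which is disjoint from $Q$. Consequently the pair $(\sig{\Pi},Q)$ behaves as a splitting of $\Pi'$ into a ``bottom'' $\Pi$ that never refers to $Q$ and a ``top'' $\Delta$ that merely reads off the bottom.

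First I would record two reduct facts. Since the reduct is taken member-by-member, $(\Pi')^{Y}=\Pi^{Y}\cup\Delta^{Y}$ for every candidate $Y$; and because no atom of $Q$ occurs in $\Pi$ nor in any $Def(q)$, writing $X=Y\setminus Q$ we get $\Pi^{Y}=\Pi^{X}$, and for each $q$ the reduced rule $Def(q)^{Y}\rar q$ has a body depending only on $X$ (so that $Y\models Def(q)$ iff $X\models Def(q)$). Using these, the minimal model of $(\Pi')^{Y}$ decomposes: a set is a minimal model of $\Pi^{X}\cup\Delta^{Y}$ exactly when its restriction to $\sig{\Pi}$ is a minimal model of $\Pi^{X}$ and its $Q$-part is $\{q\in Q\mid X\models Def(q)\}$, the latter being forced upward by the defining rules and kept minimal because nothing else can derive a $q$ and no $q$ feeds back into $\Pi^{X}$.

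With the decomposition in hand both directions are short. For the forward direction, if $Y$ is an answer set of $\Pi'$ then $Y$ is a minimal model of $(\Pi')^{Y}=\Pi^{X}\cup\Delta^{Y}$ with $X=Y\setminus Q$; the decomposition forces $Y\cap\sig{\Pi}=X$ to be a minimal model of $\Pi^{X}$, i.e. $X$ is an answer set of $\Pi$, and moreover $Y=X\cup\{q\in Q\mid X\models Def(q)\}$. For the backward direction, given an answer set $X$ of $\Pi$, set $Y=X\cup\{q\in Q\mid X\models Def(q)\}$; the same decomposition shows $Y$ is a minimal model of $(\Pi')^{Y}$, hence an answer set of $\Pi'$, and clearly $Y\setminus Q=X$. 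Since these two constructions are mutually inverse, $X\mapsto X\setminus Q$ is the claimed bijection between the answer sets of $\Pi'$ and those of $\Pi$.

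The main obstacle is the decomposition step: one must verify that minimality of a model over $\sig{\Pi}\cup Q$ genuinely factors into minimality over $\sig{\Pi}$ together with a deterministic extension over $Q$. This is where freshness of $Q$ and the restriction that each $Def(q)$ avoids $Q$ are essential, since they guarantee that the $Q$-atoms form an independent stratum whose values are completely determined by, and exert no feedback on, the $\sig{\Pi}$-part. Equivalently, one may read the statement as the instance of the Lemma on Explicit Definitions of~\citep{fer05} in which the definitions are one-directional implications: freshness of each $q$ lets completion supply the missing converse $q\rar Def(q)$, so the single rule $Def(q)\rar q$ already behaves like a full definition $Def(q)\lrar q$.
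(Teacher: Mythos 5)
Your argument is correct in substance, but note that it takes a genuinely different route from the paper: the paper does not prove this proposition at all. It is imported verbatim as Proposition~4 of \citep{fer05}, with the text explicitly noting that it follows immediately from Ferraris' Lemma on Explicit Definitions, and it is then used (together with the companion completion result, Proposition~5 of \citep{fer05}) only as a black box in the proof of the paper's own Proposition~1. Your closing remark --- that freshness of each $q$ lets completion supply the missing converse $q\rar Def(q)$, so $Def(q)\rar q$ behaves like a full definition --- is exactly the reading the paper takes. What your direct reduct argument buys is self-containedness: the two reduct facts $(\Pi')^{Y}=\Pi^{Y}\cup\Delta^{Y}$ and $\Pi^{Y}=\Pi^{X}$ for $X=Y\setminus Q$, plus the deterministic extension $Y=X\cup\{q\in Q\mid X\models Def(q)\}$, yield an elementary bijection without invoking completion or splitting machinery, and checking that the two constructions are mutually inverse is routine. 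One point of bookkeeping you should fix: since a definitional program is declared to be a special case of the propositional theories of \citep{fer05}, the relevant reduct is Ferraris' (every unsatisfied subformula is replaced by $\bot$), not the traditional reduct defined earlier in the paper. Your key facts survive under that definition --- $\Pi^{Y}=\Pi^{X}$ holds because satisfaction of every subformula of $\Pi$ by $Y$ depends only on $X$, as no atom of $Q$ occurs in $\Pi$ or in any $Def(q)$ --- but the displayed form $Def(q)^{Y}\rar q$ is not quite right: when $q\notin Y$ the head's reduct is $\bot$, and the whole rule's reduct collapses to $\bot$ whenever $Y$ falsifies the rule (in which case $Y$ is not an answer set anyway, so this case can be discharged up front). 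With that adjustment, your decomposition of minimality over $\sig{\Pi}\cup Q$ into minimality over $\sig{\Pi}$ plus a forced, feedback-free $Q$-stratum is sound.
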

\begin{prop}[Proposition 5~\citep{fer05}]\label{prop:complemmafer} 
	Let $\Pi$ be a definitional program and $Q$ be a set of atoms that do not occur in $\Pi$. For each $q\in Q$, let $Def(q)$ be a basic conjunction that does not contain any atom in $Q$. Then, $\Pi \cup \{Def(q)\rar q\mid q\in Q\}$ and $\Pi \cup \{ Def(q)\lrar q\mid q\in Q\}$ have the same answer sets.	
\end{prop}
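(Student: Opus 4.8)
The plan is to obtain Proposition~\ref{prop:complemmafer} as a direct instance of the Completion Lemma of \citeN{fer05} (the propositional predecessor of the first-order Theorem~\ref{thm:complemma}), once the present data are checked to satisfy that lemma's hypotheses. First I would record the observation already made when definitional programs were introduced: a definitional program is a special case of the propositional theories of \citep{fer05}, obtained by identifying each rule with its associated propositional formula, and the notion of answer set used here agrees with the one of \citep{fer05}. Hence both $\Pi \cup \{Def(q)\rar q\mid q\in Q\}$ and $\Pi \cup \{Def(q)\lrar q\mid q\in Q\}$ are propositional theories to which the results of \citep{fer05} apply.

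The substance of the argument is the verification of the occurrence conditions that the Completion Lemma imposes on the atoms being ``completed.'' Because the atoms of $Q$ do not occur in $\Pi$ and each $Def(q)$ is a basic conjunction containing no atom of $Q$, in the theory $\Pi \cup \{Def(q)\rar q\mid q\in Q\}$ every atom $q\in Q$ occurs only as the consequent of its own implication $Def(q)\rar q$; in particular no atom of $Q$ has a positive, nonnegated occurrence in the antecedent of any rule, and none occurs in $\Pi$. Together with the assumption that the formulas $Def(q)$ are distinct, this shows that $\{Def(q)\rar q\mid q\in Q\}$ is precisely a family of definitions of the kind governed by the Completion Lemma, whose associated completion is exactly $\{Def(q)\lrar q\mid q\in Q\}$.

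Applying the Completion Lemma then gives at once that $\Pi \cup \{Def(q)\rar q\mid q\in Q\}$ and $\Pi \cup \{Def(q)\lrar q\mid q\in Q\}$ have the same answer sets, as required. The only point deserving care is matching the bookkeeping of the general lemma to the present hypotheses --- concretely, confirming that strengthening each implication $Def(q)\rar q$ to the equivalence $Def(q)\lrar q$ (thereby introducing $q$ in the antecedent of the added converse $q\rar Def(q)$) leaves the answer sets unchanged. This is exactly the content guaranteed by the Completion Lemma, so I do not anticipate any genuine difficulty beyond this verification; the mathematical weight of the statement is inherited wholesale from \citep{fer05}.
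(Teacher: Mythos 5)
Your proposal is correct and matches the paper's treatment: the paper offers no independent proof, presenting this statement as a restatement of Proposition~5 of \citep{fer05} that ``immediately follows from'' the Completion Lemma there, which is exactly the reduction you carry out. Your explicit verification of the occurrence conditions (each $q\in Q$ appears only as the consequent of its own rule $Def(q)\rar q$) is the routine bookkeeping the paper leaves implicit.
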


\begin{proof}[Proof of Proposition \ref{prop:ce}] 
	By $\Pi'$ we denote a program constructed from $\Pi$ by  adding a  rule 
	$def(q)\rar q$ for every atom \hbox{$q\in Q$}. By Proposition~\ref{prop:expdeffer}, $\Pi'$ is a conservative extension of $\Pi$. 
	By Proposition~\ref{prop:complemmafer}, traditional program $\Pi'$ has the same answer sets as the definitional program $\Pi''$ constructed from $\Pi'$ by replacing a rule 	$def(q)\rar q$ with a rule $def(q)\lrar q$ . 
	Similarly,  traditional program $\Pi[Q,def(Q)]$ has the same answer sets as the definitional program $\Pi[Q,def(Q)]'$ constructed from it by replacing a rule 	$def(q)\rar q$ with a rule $def(q)\lrar q$.  
	By Replacement Theorem I, $\Pi''$ and   $\Pi[Q,def(Q)]'$ are strongly equivalent.
\end{proof}

We now state auxiliary lemmas that are useful in the argument of Proposition~\ref{prop:main}.
% This argument is also among the main contributions of the paper. 
It is constructed by uncovering the formal link between logic programs  $simp_T(D)$ and $lp_T(D)$, where $lp_T(D)$ serves the role of a gold standard by the virtue of Proposition~\ref{prop:lif}. 

\begin{lemma}\label{lem:aux1}
	If $F$ is a formula  containing a subformula $G$ and~$F'$ is the result of replacing that subformula by~$G'$ then
	the sequent
	$$
	\Gamma\seq (G\lrar G')\rar (F\lrar F')
	$$
	is provable in {\bf N}, where $\Gamma$ is arbitrary set of assumptions.
\end{lemma}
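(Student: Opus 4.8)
The plan is to reduce the statement to the Replacement Theorem~I (Proposition~\ref{prop:repl}) together with the fact that system {\bf N} contains intuitionistic natural deduction. Proposition~\ref{prop:repl} asserts that $G\lrar G'$ intuitionistically implies $F\lrar F'$; applying $(\rar I)$ (the deduction theorem) this says that the sequent $\seq (G\lrar G')\rar(F\lrar F')$ is provable intuitionistically. The inference rules of {\bf N} are precisely the intuitionistic natural-deduction rules, and its axioms only \emph{add} the weak law of the excluded middle to the intuitionistic axioms $F\seq F$ and $\seq\top$; hence every intuitionistically provable sequent is provable in {\bf N}. A single application of the weakening rule $(W)$ then yields $\Gamma\seq (G\lrar G')\rar(F\lrar F')$ for the arbitrary assumption set $\Gamma$, which is the claim.

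If a self-contained {\bf N}-derivation is preferred --- in the style of the proof in Figure~\ref{fig:prove1} --- I would instead argue by structural induction on the occurrence of the subformula $G$ within $F$. In the base case the distinguished occurrence is all of $F$, so $F$ is $G$ and $F'$ is $G'$; the goal $\seq(G\lrar G')\rar(G\lrar G')$ is then an instance of $\seq H\rar H$, obtained from the axiom $H\seq H$ by $(\rar I)$ with $H$ taken to be $(G\lrar G')$. For the induction step I would split on the principal connective of $F$. If $F$ is $F_1\wedge F_2$, $F_1\vee F_2$, or $F_1\rar F_2$ with the occurrence of $G$ inside $F_1$ (the symmetric $F_2$ subcases being identical), the induction hypothesis supplies $\seq(G\lrar G')\rar(F_1\lrar F_1')$, and it remains to give a fixed {\bf N}-derivation of the congruence $\seq(F_1\lrar F_1')\rar(F\lrar F')$; chaining the two implications by transitivity (a short derivation using $(\rar E)$ and $(\rar I)$) produces the goal. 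The case $F=\neg F_1$ is handled the same way, its congruence $\seq(F_1\lrar F_1')\rar(\neg F_1\lrar\neg F_1')$ being derived with $(\neg I)$ and $(\neg E)$.

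The substance of the inductive version is thus the handful of connective-wise congruences, one for each of $\wedge,\vee,\rar,\neg$, each obtained by unfolding $H\lrar H'$ as $(H\rar H')\wedge(H'\rar H)$ and building the two implications separately. I do not expect a genuine obstacle here: all these congruences are intuitionistically valid, so none of them calls on the weak excluded-middle axiom, and each is a small mechanical sequent proof. The only real care is organizational --- making ``the occurrence of $G$ in $F$'' precise enough that $F'$ is uniquely determined and the induction on its position is well founded. Since all the substantive congruence reasoning is already packaged inside Proposition~\ref{prop:repl}, the one ingredient genuinely new relative to the cited literature is the weakening that accommodates $\Gamma$, and that step is immediate.
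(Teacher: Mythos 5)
Your first paragraph is exactly the paper's proof: the paper disposes of Lemma~\ref{lem:aux1} in one line by saying it trivially follows from the Replacement Theorem~I (Proposition~\ref{prop:repl}), and your reduction --- intuitionistic provability from Proposition~\ref{prop:repl}, the observation that {\bf N} extends intuitionistic natural deduction, then $(\rar I)$ and $(W)$ to accommodate $\Gamma$ --- just makes that ``trivially'' explicit. The inductive derivation you sketch afterwards is a correct but optional elaboration; nothing in it is needed beyond what the citation of Proposition~\ref{prop:repl} already provides.
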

\begin{proof} Trivially follows from the Replacement Theorem I stated as Proposition~\ref{prop:repl} here.
\end{proof}

\begin{lemma}\label{lem:aux} The sequent
	$$
	\neg(F\wedge G), \neg(\neg F\wedge \neg G) \seq \neg F\lrar \neg\neg G \wedge \neg G\lrar \neg\neg F
	$$
	is provable in {\bf N}.
\end{lemma}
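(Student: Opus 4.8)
The plan is to read the right-hand side as the conjunction $(\neg F\lrar \neg\neg G)\wedge(\neg G\lrar \neg\neg F)$ and, unfolding the abbreviation $H\lrar H'$ as $(H\rar H')\wedge(H'\rar H)$, to reduce the goal to proving the four implications $\neg F\rar \neg\neg G$, $\neg\neg G\rar\neg F$, $\neg G\rar\neg\neg F$, and $\neg\neg F\rar\neg G$ under the two assumptions $\neg(F\wedge G)$ and $\neg(\neg F\wedge \neg G)$, and then to reassemble them with three applications of $(\wedge I)$.

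First I would record the clean split of labour between the two assumptions. The implications $\neg F\rar \neg\neg G$ and $\neg G\rar\neg\neg F$ are each derivable from the single assumption $\neg(\neg F\wedge\neg G)$, whereas $\neg\neg G\rar\neg F$ and $\neg\neg F\rar\neg G$ are each derivable from $\neg(F\wedge G)$. For the first kind, to prove for instance $\neg F\rar\neg\neg G$, I assume $\neg F$ and then $\neg G$, form $\neg F\wedge\neg G$ by $(\wedge I)$, contradict the assumption by $(\neg E)$ to reach $\bot$, and discharge with $(\neg I)$ and $(\rar I)$. For the second kind, to prove for instance $\neg\neg G\rar\neg F$, I assume $\neg\neg G$ and then $F$; inside, I derive $\neg G$ by assuming $G$, forming $F\wedge G$ via $(\wedge I)$, and contradicting $\neg(F\wedge G)$; then $\neg G$ against $\neg\neg G$ gives $\bot$ by $(\neg E)$, and discharging yields the implication. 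The two remaining implications are entirely symmetric, obtained by interchanging the roles of $F$ and $G$.

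A feature worth flagging is that none of these four sub-derivations invokes the weak law of the excluded middle axiom $\seq\neg H\vee\neg\neg H$: they are purely intuitionistic, so the lemma in fact holds already in the intuitionistic fragment of $\mathbf{N}$. Consequently I can rely on the standard intuitionistic derivations of the form above without any appeal to HT-specific reasoning, which keeps each branch to a handful of lines built only from $(\wedge I)$, $(\neg I)$, $(\neg E)$, and $(\rar I)$ over the two axioms $F\seq F$ and $G\seq G$.

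The only genuine obstacle is the bookkeeping of the assumption sets $\Gamma$ as the branches are combined by $(\wedge I)$, whose conclusion carries the union of the left-hand sides. Each of the four sub-derivations terminates in a sequent whose left-hand side is just one of the two global assumptions, so before combining I would apply the weakening rule $(W)$ to bring both $\neg(F\wedge G)$ and $\neg(\neg F\wedge\neg G)$ into every branch; then the set-union performed by $(\wedge I)$ collapses to exactly these two assumptions. Combining $\neg F\rar\neg\neg G$ with $\neg\neg G\rar\neg F$ produces $\neg F\lrar\neg\neg G$, combining $\neg G\rar\neg\neg F$ with $\neg\neg F\rar\neg G$ produces $\neg G\lrar\neg\neg F$, and a final $(\wedge I)$ yields the desired conjunction with precisely $\neg(F\wedge G)$ and $\neg(\neg F\wedge\neg G)$ on the left, which is the sequent to be proved.
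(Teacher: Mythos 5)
Your proof is correct, and it takes a genuinely different route from the paper's. The paper derives the key implication $\neg(F\wedge G)\seq\neg\neg F\rar\neg G$ by first passing through De Morgan's law $\neg(F\wedge G)\rar\neg F\vee\neg G$ --- which, as the appendix itself notes, is provable only with the weak law of the excluded middle --- and then eliminating the disjunction with $(\vee E)$ and the contradiction rule $(C)$. You instead prove each of the four implications directly: $\neg F\rar\neg\neg G$ and $\neg G\rar\neg\neg F$ from $\neg(\neg F\wedge\neg G)$ by forming the conjunction and contradicting it, and $\neg\neg G\rar\neg F$ and $\neg\neg F\rar\neg G$ from $\neg(F\wedge G)$ by a nested $(\neg I)$ that first extracts $\neg G$ (resp.\ $\neg F$) and then clashes it with the doubly negated assumption. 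All four sub-derivations check out, and your observation that none of them invokes the axiom $\seq\neg H\vee\neg\neg H$ is accurate: the lemma is in fact intuitionistically valid, which is a mild strengthening of what the paper establishes (and harmless for its use, since intuitionistic derivability implies derivability in $\mathbf{N}$). What your approach buys is a shorter, purely intuitionistic argument using only $(\wedge I)$, $(\neg I)$, $(\neg E)$, and $(\rar I)$; what the paper's approach buys is reuse of the De Morgan derivation it has already exhibited in Figure~3, at the cost of routing through the HT-specific axiom. Your final assembly via $(\wedge I)$ is also fine --- indeed, since $(\wedge I)$ already unions the assumption sets $\G$ and $\D$, the preliminary applications of $(W)$ you describe are not even necessary, though they do no harm.
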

\begin{proof}
	We illustrate the proof in {\bf N} for the sequent
	$$\neg(F\wedge G)\seq\neg \neg F\rar\neg G.$$
	We allow ourselves a freedom to use De Morgan's Laws as if they were given to us as additional inference rules in {\bf N}. 
	%\small
	$$
	\begin{array}{rll}
	A_1.&\neg (F\wedge G)&\\
	1.& A_1\seq \neg (F\wedge G)	&\hbox{ axiom} \\
	2.& A_1\seq \neg F\vee \neg G	&\hbox{De Morgan's Law 1} \\
	3.& \neg F\seq \neg F	&\hbox{axiom} \\
	4.& \neg G\seq \neg G	&\hbox{axiom} \\
	5.& \neg\neg F\seq \neg \neg F	&\hbox{axiom} \\
	6.& \neg\neg F, \neg F\seq \bot	& \hbox{($\neg$ E) 3,5} \\
	7.& \neg\neg F, \neg F\seq \neg G	& \hbox{(C) 6} \\
	8.& A_1,\neg\neg F \seq \neg G	& \hbox{($\vee$ E) 2,4,7} \\
	9.& A_1\seq \neg\neg F \rar\neg G	& \hbox{($\rar$ I) 8} 
	\end{array}
	$$
	\normalsize
	Similar proofs in structure are available for  the sequents
	$$
	\ba{l}
	\neg(F\wedge G)\seq \neg\neg G\rar\neg F,\\
	\neg(\neg F\wedge \neg G)\seq \neg F\rar\neg \neg G, \hbox{ and}\\
	\neg(\neg F\wedge \neg G)\seq \neg G\rar\neg \neg F.
	\ea
	$$
	Several applications of $(\wedge I)$ will allow us  to conclude the proof in {\bf N} for the sequent in the statement of this lemma.
\end{proof}

\begin{proof}[Proof of Proposition \ref{prop:main}] 
	It is easy to see that the signatures of $simp_T(D)$ and $lp_T(D)$ differ by complement action atoms present in $lp_T(D)$.
	What we show next is the fact that $lp_T(D)$ is a conservative extension of $simp_T(D)$. Then the claim of this proposition follows from Proposition~\ref{prop:lif}. 
	
	{\em Claim 1}: 
	The set of rules from groups 1 and 3 of $lp_T(D)$ are strongly equivalent to the set of  rules from group 1 of $lp_T(D)$ and 
	the rules 
	\beq
	\ba{l}
	\h{l_0}(t+1) \ar not\ not\ \h{l_1}(t+1),\dots,
	not\ not\ \h{l_m}(t+1),\\
	~~~~~~~~~~~~~~~~~~~~~~~~~~~\h{l_{m+1}}(t),\dots,\ \h{l_n}(t), 
	\ea
	\eeq{eq:dynamicintermid}
	for all $t=0,\dots,T-1$,
	for every dynamic law~\eqref{eq:dlaw} in $D$.    
	
	\medskip  
	
	It is easy to see that these sets of rules only differ in structure of rules~\eqref{eq:dynamicorig} and~\eqref{eq:dynamicintermid} so that the atoms of the form $\o{l_i}(t+1)$ $(1\leq i\leq m)$ in~\eqref{eq:dynamicorig} are replaced by the expressions of the form $not\ \h{l_i}(t+1)$ in~\eqref{eq:dynamicintermid}.

	\begin{figure*}[t!]\
		\begin{minipage}{.96\textwidth}
			\begin{center}
				$$
				\ba{lll}
				\hbox{Sequent 1}&
				\Gamma\seq &
				(\neg \o{l_1}(t+1)\wedge\cdots\wedge\neg\o{l_m}(t+1)\wedge 
				\h{l_{m+1}}(t)\wedge\cdots\wedge \h{l_n}(t)\rar \h{l_0}(t+1))
				\lrar\\
				&  &  	(\neg\neg \h{l_1}(t+1)\wedge\cdots\wedge\neg\neg\h{l_m}(t+1)\wedge 
				\h{l_{m+1}}(t)\wedge\cdots\wedge \h{l_n}(t)\rar \h{l_0}(t+1))\\
				\hbox{Sequent 2}&
				\seq& 
				\Gamma\wedge (\neg \o{l_1}(t+1)\wedge\cdots\wedge\neg\o{l_m}(t+1)\wedge 
				\h{l_{m+1}}(t)\wedge\cdots\wedge \h{l_n}(t)\rar \h{l_0}(t+1))
				\lrar\\
				& & \Gamma\wedge (\neg\neg \h{l_1}(t+1)\wedge\cdots\wedge\neg\neg\h{l_m}(t+1)\wedge 
				\h{l_{m+1}}(t)\wedge\cdots\wedge \h{l_n}(t)\rar \h{l_0}(t+1)).
				
				\ea
				$$
			\end{center}
			
		\end{minipage}
		\caption{Sequents in the proof of Proposition~\ref{prop:main}\label{fig:sequents}}
	\end{figure*}
	
	Let $\Gamma$ denote the set of rules from group 1 of $lp_T(D)$. Using Lemmas~\ref{lem:aux1} and~\ref{lem:aux} it is easy to see that the  sequent 1 presented in Figure~\ref{fig:sequents}
	is provable in {\bf N}. It is easy to construct a proof in {\bf N} from this sequent 1 to the sequent 2 in the same figure.
	This immediately concludes the proof of Claim 1.
	
	\medskip  
	
	{\em Claim 2}: 
	The set of rules from groups 1 and 2 of $lp_T(D)$ are strongly equivalent to the set of  rules from group 1 and~2  of $simp_T(D)$. 
	
	\medskip  
	The proof for this claim follows the lines of a proof for Claim 1.
	
	\medskip  
	{\em Claim 3}: 
	The set of rules from groups 1 and 4 of $lp_T(D)$ are strongly equivalent to the set of  rules from group 1 and~4  of $simp_T(D)$. 
	
	\medskip  
	The proof for this claim is similar to that of a proof of Claim 1.

	\medskip
	Due to Claims 1, 2, and 3, it follows that $lp_T(D)$ has the same answer sets as the program $lp'_T(D)$ constructed from $lp_T(D)$ by replacing (i) the rules  from group  3 with rules~\eqref{eq:dynamicintermid}
	for all $t=0,\dots,T-1$,
	for every dynamic law~\eqref{eq:dlaw} in $D$, and
	(ii) the rules  from groups  2 and 4 in $lp_T(D)$ by the rules from groups 2 and 4 in $simp_T(D)$.    
	
Let $def(\t{a})$ denote formula $\neg a$ for every elementary action $a$ in $\sigma^{act}_T$.	
It is easy to see that $lp'_T(D)$
	coincides with the program 
	$$
	\ba{l}
	simp_T(D)[~\{\t{a}\mid a\in \sigma^{act}_T\},~
	\{def(\t{a}) \mid a\in \sigma^{act}_T  \}~].
	\ea$$
	By Proposition~\ref{prop:ce}, program $lp'_T(D)$  is a conservative extension of $simp_T(D)$. Consequently, $lp_T(D)$ is a conservative extension of $simp_T(D)$.
\end{proof}

\subsection{Proofs for Section~\ref{sec:foprogram}}
\begin{proof}[Proof of Theorem~\ref{thm:Aggregates}]
	Consider the case when $H$ is a disjunction of atoms then FOL representation of rule~\eqref{eq:ruleag} is the universal closure of formula
%	$$ %\widetilde{\forall}\Bigg(
\beq
	\Big(\exists \vec{x^1}\cdots\vec{x^b} \big( \bigwedge_{1\leq i\leq b} F(\vec{x^i}) 
	\wedge\bigwedge_{1\leq i<j\leq b} \neg(x^i=x^j)\big) \wedge G%(\vec{y})
	\Big) \rar H.
	%\bot %\Bigg)
\eeq{eq:formulainproof}	%.$$ 
	The FOL representation of rule~\eqref{eq:rulenoag} is the universal closure of formula
\[
	\big( \bigwedge_{1\leq i\leq b} F(\vec{x^i}) 
	\wedge\bigwedge_{1\leq i<j\leq b} \neg(x^i=x^j)
		\wedge  G \big) \rar H. 
		%\bot
\]
%	It is easy to check that these formulas are intuitionistically equivalent
	Given that  formula $\forall \vec z (H \rar H')$ where $H'$ has no free occurrences of variables in $\vec{z}$  
	is intuitionistically equivalent to $\exists \vec z (H) \rar H'$,
	the FOL representation of rule~\eqref{eq:rulenoag} can be written as the universal closure of formula
	\[
	\Big(\exists \vec{x^1}\cdots\vec{x^b}  \big(\bigwedge_{1\leq i\leq b} F(\vec{x^i}) 
	\wedge\bigwedge_{1\leq i<j\leq b} \neg(x^i=x^j)
	\wedge  G \big)  \Big)\rar H. 
	%\bot
	\]
	
It is easy to see that the left hand sides of the implications in this formula and formula~\eqref{eq:formulainproof} are classically equivalent. 
And thus by Replacement Theorem II these formulas are intuitionistically equivalent.
Similarly we can argue for the case when $H$ is of the form $\{a\}$.
 \end{proof}
 
\begin{proof}[Outline of the Proof of Theorem~\ref{thm:choice}]
  We can 
  derive 
   the former program given in the theorem statement (its FOL representation) from the latter intuitionistically; 
    and   	 we can 
    	 derive 
    the later  from the former in logic \sqht. For the second direction,
    De Morgan's law $\neg (F\wedge G)\rar \neg F\vee \neg G$ (provable in logic \sqht, but not valid intuitionistically) is essential. 
 \end{proof}
 
     To prove Theorem~\ref{thm:shift2} we recall the Splitting Lemma from~\citep{fer09a} (this Splitting Lemma is the generalization of the Splitting Set Theorem~\citep{lif94e}).
     
     \smallskip
     \noindent
     {\bf Splitting Lemma.}~~
     {\em 	Let $F$ be a first-order sentence.
     	
     	{\em Version 1}:	
     	Let~{\bf p, q} be disjoint tuples of distinct predicate constants. If each strongly connected component of  $DG_{\bf pq}[F]$ is a subset of {\bf p} or a subset of~{\bf q}, then 
     	$\SM_{\bf pq}[F]$ is equivalent to \hbox{$\SM_{\bf p}[F]\wedge\SM_{\bf q}[F]$}. 
     	
     	{\em Version 2}:
     	Let~{\bf p} be a tuple of distinct predicate constants. If ${\bf c^1,\dots,c^n}$ are all the strongly connected components of 
     	$DG_{\bf p}[F]$, then 
     	$\SM_{\bf p}[F]$ is equivalent to \hbox{$\SM_{\bf c^1}[F]\wedge\cdots\wedge\SM_{\bf c^n}[F]$}. 
     }
     \smallskip
     
     \noindent
     \begin{proof}[Proof of Theorem~\ref{thm:shift2}]
     	%	Let $C$ be the set of strongly connected components of~$DG[\Pi]$. 
     	We start by partitioning~$C$ into two sets   $Q$ and~${\bf r}$ so that 
     	\begin{itemize}
     		\item any element in $Q$ is  such that at least one of its predicate symbols occurs in a head of some rule in  $\bf R$,		
     		\item any element in ${\bf r}$ is  such that none of its predicate symbols occurs in a head of some rule in  $\bf R$.
     	\end{itemize}
     	We identify set ${\bf r}$ with a tuple (order is immaterial) composed of the predicate symbols occurring in its elements. For set $Q$ we identify every strongly connected component  ${\bf q}\in Q$, with a tuple of predicate symbols in this component.
     	
     	By $\Pi^{sh}$ we denote	a program constructed from $\Pi$ by replacing each rule $R\in \bf R$ with  $\shift_{\mathcal{P}^R}(R)$.

     	By definition, an answer set of $\Pi$ is an Herbrand model of formula~\eqref{eq:smpi}.
     	%\beq 
     	%\SM_{\bf \pred(\fol{\Pi})}[\fol{\Pi}].
     	%\eeq{eq:thmprove1} 
     	Similarly, an answer set of $\Pi^{sh}$   is an Herbrand  model of \beq
     	\SM_{\bf \pred(\fol{\Pi^{sh}})}[\fol{\Pi^{sh}}].
     	\eeq{eq:thmprove2} We now show that formulas~\eqref{eq:smpi} and
     	~\eqref{eq:thmprove2} are equivalent.%\SM_{\bf \pred(\fol{\Pi})}[\fol{\Pi}].$$

     	By the Splitting Lemma, formula~\eqref{eq:smpi}  
     	is  equivalent to 
     	\beq\SM_{\bf {r}}[\fol{\Pi}] \wedge \bigwedge_{{\bf q}\in Q}\SM_{{\bf q{}}}[\fol{\Pi}]\eeq{eq:bigshift}
     	Theorem~5 from \citep{fer09} shows that
     	given formulas $\SM_{{\bf p}}[F]$ and $\SM_{{\bf p}}[G]$ so that $\pred(F)=\pred(G)$
     	if the equivalence between $F$ and~$G$  can 
     	be derived intuitionistically from the law of excluded middle formulas for all  predicates in $\pred(F)\setminus{\bf p}$, then they have the same stable  models. Following claims are  the consequences of that theorem 
     	\begin{itemize}
     		\itemsep0em 
     		\item $\SM_{{\bf r}}[\fol{\Pi}]$ is equivalent to  $\SM_{{\bf r}}[\fol{\Pi^{sh}}]$,
     		\item for every ${\bf q}$ in $Q$,  $\SM_{{\bf q}}[\fol{\Pi}]$ is equivalent to  $\SM_{{\bf q}}[\fol{\Pi^{sh}}]$.  	 	
     	\end{itemize}
     	Consequently, formula~\eqref{eq:bigshift} is 
     	equivalent to formula
     	\beq\SM_{{\bf r}}[\fol{\Pi^{sh}}] \wedge \bigwedge_{{\bf q}\in Q}\SM_{{\bf q}}[\fol{\Pi^{sh}}]\eeq{eq:bigshift2}
     	It is easy to see that  $\pred(\fol{\Pi})=\pred(\fol{\Pi^{sh}})$. 
     	By the Splitting Lemma,  formula~\eqref{eq:bigshift2} is  equivalent to~\eqref{eq:thmprove2}. 
     \end{proof}
     
 In order to state a proof for Completion Lemma, we recall several important theorems from~\cite{fer09a,fer09}. 
 \begin{theorem}[Splitting Theorem~\citep{fer09a}]\label{thm:splitfer}
 	Let $F$ and $D$ be first-order sentences, and let ${\bf p,q}$ be disjoint tuples of distinct predicate constants. If
 	\begin{itemize}
 		\item each strongly connected component of $DG_{\bf p,q}[F\wedge D]$ is a subset of {\bf p} or {\bf q},
 		\item members of {\bf p} have no strictly positive occurrences in $D$, and 
 		\item members of {\bf q} have no strictly positive occurrences in $F$
 	\end{itemize}
 	then 
 	$$
 	\SM_{\bf pq}[F\wedge D]\hbox{ is equivalent to } \SM_{\bf p}[F]\wedge \SM_{\bf q}[D].
 	$$
 \end{theorem}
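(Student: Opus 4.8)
The final statement is the Splitting Theorem of~\citeN{fer09a}, which the paper recalls rather than reproves; still, the route I would take is to \emph{derive} it from two facts already in play, namely the Splitting Lemma (Version~1) recalled just above and a mild strengthening of Theorem~\ref{thm:constraints}. The guiding idea is that conditions~(b) and~(c) let me peel the ``foreign'' conjunct off each half of the minimization, after which the dependency-graph condition~(a) is exactly what the Splitting Lemma consumes. So the plan is: prove a generalization of Theorem~\ref{thm:constraints}, then combine it with the Splitting Lemma.

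The strengthening I would prove first is the following. \textbf{(Lemma~A)} If no member of~$\mathbf{p}$ has a strictly positive occurrence in a sentence~$D$, then $\SM_{\mathbf{p}}[F\wedge D]$ is equivalent to $\SM_{\mathbf{p}}[F]\wedge D$. Since $(F\wedge D)^{*}=F^{*}\wedge D^{*}$, this rests on a substitution claim proved by structural induction on~$D$: whenever $\mathbf{u}\le\mathbf{p}$, the formulas $D$ and $D^{*}(\mathbf{u})$ are equivalent. The conjunction, disjunction and quantifier cases are immediate from the induction hypothesis, since a strictly positive occurrence in those compounds descends to a strictly positive occurrence in a component. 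The implication case $D=D_{1}\rar D_{2}$ is the crux: here $D^{*}=(D_{1}^{*}\rar D_{2}^{*})\wedge(D_{1}\rar D_{2})$; occurrences of~$\mathbf{p}$ in the antecedent~$D_{1}$ are automatically not strictly positive in~$D$, so the hypothesis constrains only~$D_{2}$, to which the induction hypothesis applies, while on~$D_{1}$ I would invoke the standard monotonicity property of the star operator, $\mathbf{u}\le\mathbf{p}\rar(D_{1}^{*}\rar D_{1})$. Assuming $D_{1}\rar D_{2}$ and $D_{1}^{*}$, monotonicity gives~$D_{1}$, hence~$D_{2}$, hence $D_{2}^{*}$ by the induction hypothesis, so $D_{1}^{*}\rar D_{2}^{*}$ follows. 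Substituting the resulting equivalence $D^{*}(\mathbf{u})\lrar D$ (valid for $\mathbf{u}<\mathbf{p}$) into $\SM_{\mathbf{p}}[F\wedge D]=(F\wedge D)\wedge\neg\exists\mathbf{u}(\mathbf{u}<\mathbf{p}\wedge F^{*}\wedge D^{*})$ and pulling the $\mathbf{u}$-free conjunct~$D$ out of the quantifier then yields $\SM_{\mathbf{p}}[F]\wedge D$. Note that Theorem~\ref{thm:constraints} is exactly the special case $D=\neg G$.

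With Lemma~A in hand the theorem assembles quickly. Condition~(a) says every strongly connected component of $DG_{\mathbf{pq}}[F\wedge D]$ lies inside~$\mathbf{p}$ or inside~$\mathbf{q}$, so the Splitting Lemma gives $\SM_{\mathbf{pq}}[F\wedge D]\lrar\SM_{\mathbf{p}}[F\wedge D]\wedge\SM_{\mathbf{q}}[F\wedge D]$. Condition~(b) lets Lemma~A rewrite the first factor as $\SM_{\mathbf{p}}[F]\wedge D$, and condition~(c), via the symmetric instance of Lemma~A with the roles of $F$ and $D$ interchanged, rewrites the second as $\SM_{\mathbf{q}}[D]\wedge F$. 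The conjunction is $\SM_{\mathbf{p}}[F]\wedge\SM_{\mathbf{q}}[D]\wedge F\wedge D$, and since a $\mathbf{p}$-stable model of~$F$ is a model of~$F$ (so $\SM_{\mathbf{p}}[F]$ entails~$F$, and likewise $\SM_{\mathbf{q}}[D]$ entails~$D$), the trailing $F\wedge D$ is redundant, leaving $\SM_{\mathbf{p}}[F]\wedge\SM_{\mathbf{q}}[D]$. The hard part is entirely inside Lemma~A: getting the implication case of the structural induction right, because that is the single place where the strictly-positive-occurrence hypothesis is used and where the asymmetric treatment of the antecedent (handled by monotonicity) and the consequent (handled by the induction hypothesis) must be reconciled.
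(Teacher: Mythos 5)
This is one of the results the paper imports verbatim from \citeN{fer09a} with a citation and no proof, so there is no in-paper argument to compare against; what you have written is a genuine reconstruction rather than a retracing. Your derivation is sound. The decomposition into (1) the Splitting Lemma, Version 1 (which the paper does recall, and whose hypothesis is exactly your condition (a) applied to $F\wedge D$) and (2) your Lemma~A is valid, and Lemma~A itself is a correct generalization of Theorem~\ref{thm:constraints}: the structural induction works because a strictly positive occurrence in the consequent $D_2$ of $D_1\rar D_2$ stays strictly positive in $D$, so the hypothesis really does constrain only $D_2$, while the antecedent is discharged by the monotonicity property $\mathbf{u}\le\mathbf{p}\rar(D_1^*(\mathbf{u})\rar D_1)$. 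The final bookkeeping is also right: replacing $D^*(\mathbf{u})$ by $D$ under the guard $\mathbf{u}<\mathbf{p}$, factoring the $\mathbf{u}$-free conjunct out of the existential, and absorbing the residual $F\wedge D$ into $\SM_{\bf p}[F]\wedge\SM_{\bf q}[D]$ because each $\SM$ operator entails its argument. Two caveats worth flagging if you intend this as a self-contained proof: the star-operator monotonicity lemma you invoke is nowhere stated or proved in this paper (it is a separate induction from \citeN{fer09}, so it is an imported ingredient on the same footing as the theorem you are proving), and the Splitting Lemma itself remains unproved here, so your argument reduces the Splitting Theorem to the Splitting Lemma rather than establishing it from first principles. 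What the reduction buys is a clean isolation of the roles of the three hypotheses --- (a) feeds the Splitting Lemma, (b) and (c) feed the two symmetric instances of Lemma~A --- which the bare citation in the paper does not provide.
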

 
 \begin{theorem}[Theorem~2~\citep{fer09}]\label{thm:two}
 	Let $F$ be first-order sentences, and let ${\bf p,q}$ be disjoint tuples of distinct predicate constants. 
 	Then 
 	$$
 	\SM_{\bf pq}[F\wedge \bigwedge_{q\in\{\bf q\}} {\forall \vec{x} 
 		\big( \neg \neg q(\vec x)\rar 
 		q(\vec{x})\big)}]\hbox{ is equivalent to } \SM_{\bf p}[F].
 	$$
 \end{theorem}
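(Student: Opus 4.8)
The plan is to unfold both sides with the definition of the \SM\ operator and reduce the claim to a statement about the minimality (second-order) conjuncts. Write $C$ for $\bigwedge_{q\in\{\mathbf{q}\}}\forall\vec x(\neg\neg q(\vec x)\rar q(\vec x))$ and let $\mathbf u,\mathbf v$ be tuples of predicate variables matching $\mathbf p,\mathbf q$. By definition $\SM_{\mathbf{pq}}[F\wedge C]$ is $F\wedge C\wedge\neg\exists\mathbf{uv}\big((\mathbf{uv}<\mathbf{pq})\wedge F^*(\mathbf u,\mathbf v)\wedge C^*(\mathbf u,\mathbf v)\big)$, while $\SM_{\mathbf p}[F]$ is $F\wedge\neg\exists\mathbf u\big((\mathbf u<\mathbf p)\wedge F^*(\mathbf u)\big)$. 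Here the reduct $F^*(\mathbf u,\mathbf v)$ treats every member of $\mathbf{pq}$ as intensional, whereas $F^*(\mathbf u)$ treats only members of $\mathbf p$ as intensional. Since $C$ is classically valid (each conjunct is $\forall\vec x(q(\vec x)\rar q(\vec x))$ after erasing double negations), the first-order part $F\wedge C$ is classically equivalent to $F$; as the asserted equivalence is a classical equivalence of second-order sentences, it therefore suffices to prove that the two $\neg\exists$ clauses are classically equivalent under the assumption that $F$ holds.

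The crux is to compute $C^*$. First I would evaluate the reduct of a single conjunct $\forall\vec x(\neg\neg q(\vec x)\rar q(\vec x))$ strictly from the recursion for $(\cdot)^*$. Unfolding $\neg$ as $(\cdot\rar\bot)$ and applying the implication clause $(G\rar H)^*=(G^*\rar H^*)\wedge(G\rar H)$ twice, with $\bot^*=\bot$, gives $(q(\vec x)\rar\bot)^*=\neg v(\vec x)\wedge\neg q(\vec x)$ and hence $(\neg\neg q(\vec x))^*=\neg(\neg v(\vec x)\wedge\neg q(\vec x))\wedge\neg\neg q(\vec x)$, so the reduct of the conjunct is $\forall\vec x\big(((\neg\neg q(\vec x))^*\rar v(\vec x))\wedge(\neg\neg q(\vec x)\rar q(\vec x))\big)$. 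The second factor is classically valid, and classically $(\neg\neg q(\vec x))^*$ simplifies to $q(\vec x)$; thus each conjunct of $C^*$ is classically equivalent to $\forall\vec x(q(\vec x)\rar v(\vec x))$, i.e. $C^*(\mathbf u,\mathbf v)$ is classically equivalent to $\mathbf q\leq\mathbf v$.

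With this in hand the two existentials match up. The abbreviation $\mathbf{uv}<\mathbf{pq}$ contains the conjunct $\mathbf v\leq\mathbf q$, which together with $C^*\equiv(\mathbf q\leq\mathbf v)$ forces $\mathbf v=\mathbf q$ on every witness. Substituting $\mathbf v=\mathbf q$ I would check that $\mathbf{uv}<\mathbf{pq}$ collapses to $\mathbf u<\mathbf p$ (the clause $\neg(\mathbf{pq}\leq\mathbf{uv})$ becomes $\neg(\mathbf p\leq\mathbf u)$ since $\mathbf q\leq\mathbf q$ holds), that $C^*(\mathbf u,\mathbf q)$ becomes trivially true, and --- the one identity that needs its own short induction --- that $F^*(\mathbf u,\mathbf q)$ coincides with $F^*(\mathbf u)$, because substituting the genuine predicates $\mathbf q$ for the variables $\mathbf v$ in the $\mathbf{pq}$-reduct of $F$ exactly re-creates the $\mathbf p$-reduct of $F$. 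Hence $\exists\mathbf{uv}(\cdots)$ is classically equivalent to $\exists\mathbf u((\mathbf u<\mathbf p)\wedge F^*(\mathbf u))$, which is precisely the existential negated in $\SM_{\mathbf p}[F]$; carrying this out in both directions yields the claimed equivalence.

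The main obstacle I anticipate is the reduct computation for $C$: the nested double negation must be handled strictly by the recursive clauses for $(\cdot)^*$, and one must resist simplifying $\neg\neg$ intuitionistically \emph{before} taking the reduct, since that would change $C^*$. The only other delicate point is keeping the two reducts $F^*(\mathbf u,\mathbf v)$ and $F^*(\mathbf u)$ distinct and proving the substitution identity $F^*(\mathbf u,\mathbf q)=F^*(\mathbf u)$ by structural induction on $F$; once these are settled, the remainder is routine classical manipulation of the second-order minimality clauses.
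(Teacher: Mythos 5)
Your proof is correct. Note, though, that the paper itself contains no proof of this statement: it is imported verbatim as Theorem~2 of Ferraris, Lee, and Lifschitz (2009) and used as a black box (e.g., in the proof of the Completion Lemma), so there is no in-paper argument to compare against; your direct verification from the definition of $\SM$ is essentially the standard argument from the cited source. All the delicate points check out: the reduct of each conjunct $\forall \vec{x}(\neg\neg q(\vec x)\rar q(\vec x))$ is computed strictly by the recursive clauses (correctly resisting the temptation to simplify $\neg\neg$ before taking the star), yielding $\forall \vec x\big(((\neg\neg q(\vec x))^*\rar v(\vec x))\wedge(\neg\neg q(\vec x)\rar q(\vec x))\big)$, which classically collapses to ${\bf q}\leq{\bf v}$; the conjunct ${\bf v}\leq{\bf q}$ hidden in ${\bf u}{\bf v}<{\bf p}{\bf q}$ then forces ${\bf v}={\bf q}$ extensionally on any witness, and ${\bf u}{\bf v}<{\bf p}{\bf q}$ collapses to ${\bf u}<{\bf p}$ exactly as you say. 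The substitution identity $F^*({\bf u},{\bf q})=F^*({\bf u})$ does need the structural induction you flag, and it goes through: the only nontrivial cases are atoms $q_i(\vec t)$ (where $v_i(\vec t)[{\bf v}:={\bf q}]=q_i(\vec t)$ matches the ${\bf p}$-reduct, since $q_i\notin{\bf p}$) and implication (where the untouched second factor $G\rar H$ contains no ${\bf v}$). One small remark: your existential-clause equivalence in fact holds unconditionally, not just under the assumption $F$, since the converse direction is witnessed by setting ${\bf v}:={\bf q}$ outright; your weaker formulation suffices but slightly undersells the argument.
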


 We say that formula $\SM_{\bf p}[F]$ is {\em tight} if the graph   $DG_{\bf p}[F]$ is acyclic.
 \begin{theorem}[Theorem 11~\citep{fer09a}]\label{thm:completion}
 	For any tight formula $\SM_{\bf p}[F]$ where $F$ is in Clark normal form relative to {\bf p}, $\SM_{\bf p}[F]$ is equivalent to the completion of $F$.
 \end{theorem}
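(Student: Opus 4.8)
The plan is to prove the two implications of the equivalence separately, noting that tightness is needed for only one of them. The direction $\SM_{\bf p}[F]\rar Comp_{\bf p}[F]$ is already available: it is exactly Theorem~\ref{thm:comp0}, which holds for every $F$ in Clark normal form, tight or not. So all the work goes into the converse, $Comp_{\bf p}[F]\rar\SM_{\bf p}[F]$, and this is where acyclicity of $DG_{\bf p}[F]$ will enter. To localize the argument I would first invoke the Splitting Lemma (Version~2): since $F$ is tight, every strongly connected component of $DG_{\bf p}[F]$ is a singleton, so $\SM_{\bf p}[F]$ is equivalent to $\bigwedge_{p\in{\bf p}}\SM_{\{p\}}[F]$. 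It therefore suffices to show that $Comp_{\bf p}[F]$ implies $\SM_{\{p\}}[F]$ for each individual $p\in{\bf p}$.

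Fix an interpretation $I\models Comp_{\bf p}[F]$; since $Comp_{\bf p}[F]\rar F$, we have $I\models F$. Writing the conjunctive term of $F$ for $p$ as $\forall\vec x(G_p\rar p(\vec x))$, I want $I\models\neg\exists u(u<p\wedge F^*(u))$, where $F^*$ is formed relative to the single predicate $p$ and $u$ is a fresh predicate variable of $p$'s arity. Suppose for contradiction that some $u<p^I$ satisfies $I,u\models F^*(u)$. The next step is to show that, given $I\models F$ and $u\le p^I$, the only conjunct of $F^*(u)$ that constrains $u$ is $\forall\vec x(G_p^*(u)\rar u(\vec x))$. Indeed, each conjunct coming from a defining term $\forall\vec z(G_q\rar q(\vec z))$ contributes $(G_q^*(u)\rar q'(\vec z))\wedge(G_q\rar q(\vec z))$, where $q'$ is $q$ for $q\neq p$; the right conjunct holds because $I\models F$, and the left one holds because $G_q^*(u)\rar G_q$ whenever $u\le p^I$ (monotonicity of the star transform, \citep{fer09}) together with $G_q\rar q$ from $F$. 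Hence all these conjuncts are automatically satisfied, and the genuine restriction is $I,u\models\forall\vec x(G_p^*(u)\rar u(\vec x))$.

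The heart of the proof is then the claim that, under $u\le p^I$, $I,u\models G_p^*(u)$ holds iff $I\models G_p$. This is exactly where tightness enters: acyclicity of $DG_{\bf p}[F]$ forbids a self-loop at $p$, which by the definition of the dependency graph means $p$ has no positive, nonnegated occurrence in $G_p$. I would establish, by structural induction on subformulas tracking polarity, the auxiliary lemma that if $p$ has no positive nonnegated occurrence in a formula $A$ and $u\le p^I$, then $I,u\models A^*(u)$ iff $I\models A$. Negated occurrences of $p$ are handled by the star-monotonicity $A^*(u)\rar A$, so that negated positions effectively read off $p^I$, while the absence of positive nonnegated occurrences means no surviving $u$-atom forces a discrepancy. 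Applying this with $A=G_p$, and using that $I\models\forall\vec x(G_p(\vec x)\leftrightarrow p(\vec x))$ from $Comp_{\bf p}[F]$, the binding conjunct $\forall\vec x(G_p^*(u)\rar u(\vec x))$ becomes $\forall\vec x(p^I(\vec x)\rar u(\vec x))$, i.e. $p^I\le u$. Combined with $u\le p^I$ this yields $u=p^I$, contradicting $u<p^I$. Hence no such $u$ exists, $I\models\SM_{\{p\}}[F]$ for every $p$, and by the Splitting Lemma $I\models\SM_{\bf p}[F]$.

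I expect the auxiliary lemma of the third paragraph to be the main obstacle, since it requires a careful polarity-sensitive induction describing exactly which occurrences of $p$ in $G_p^*(u)$ are read from $u$ and which from $p^I$; everything else is bookkeeping on top of Theorem~\ref{thm:comp0} and the Splitting Lemma. A secondary point to make explicit is finiteness of ${\bf p}$, which guarantees that the decomposition produced by the Splitting Lemma is a finite conjunction (and, should one prefer the alternative route of a direct rank induction along $DG_{\bf p}[F]$ in place of splitting, that the induced dependency ordering is well-founded).
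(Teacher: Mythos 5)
Your proposal is correct, but one thing must be said up front: the paper contains no proof of this statement to compare against --- it is imported verbatim as Theorem~11 of \citeN{fer09a} and used as a black box in the proof of the Completion Lemma (Theorem~\ref{thm:complemma}). So what you have produced is an independent reconstruction, and it is a sound one, assembled entirely from ingredients the paper already recalls (Theorem~\ref{thm:comp0} for the forward direction, the Splitting Lemma, Version~2, for the decomposition) plus one genuinely new auxiliary lemma. Two remarks. First, it is worth making explicit that acyclicity is used twice, in two different ways: once so that all strongly connected components of $DG_{\bf p}[F]$ are singletons, making $\SM_{\bf p}[F]$ equivalent to $\bigwedge_{p\in{\bf p}}\SM_{\{p\}}[F]$ (note a singleton component with a self-loop is still a singleton, so this step alone does not consume the full hypothesis), and once to exclude the self-loop at $p$, i.e.\ a positive nonnegated occurrence of $p$ in $G_p$, which is exactly what your auxiliary lemma requires. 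Second, the auxiliary lemma --- the step you flag as the main obstacle --- goes through, and more easily than you anticipate: in the induction for $A=B\rar C$ you never need the induction hypothesis on the antecedent $B$, since the generic validity of $u\le p\rar(B^*(u)\rar B)$ suffices there; the polarity hypothesis is propagated only into the consequent $C$ (with $\neg B$ the special case $C=\bot$). This is precisely why negative nonnegated occurrences of $p$ are harmless: the star transform retains the original implication as a conjunct. One bookkeeping point to state explicitly: your reduction of $F^*(u)$ to the single conjunct $\forall\vec{x}(G_p^*(u)\rar u(\vec{x}))$ holds only under the standing assumptions $I\models F$ and $u\le p^I$; with that said, the derived $p^I\le u$ contradicts $u<p^I$ and the argument closes. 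The finiteness worry is a non-issue, as ${\bf p}$ is a finite tuple by definition.
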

 
 \begin{proof}[Proof of Theorem~\ref{thm:complemma}]
 Recall that $D$ and $Comp[D]$ denote $$
 \ba{ll}
 \bigwedge_{q \hbox{ in \bf q}}\forall\bf x \big(G\rar q(\vec{x})\big)\hbox{ ~ and } & \bigwedge_{q \hbox{ in \bf q}}\forall\bf x \big(G\lrar q(\vec{x})\big),
 \ea$$
 respectively.
 From the conditions posed on the occurrence of elements in {\bf q} in $F$ and $D$ it is easy to see that every element in {\bf q} forms a singleton strongly connected component in $DG_{\bf p,q}[F\wedge D]$ and in $DG_{\bf q}[D]$.
 Consequently, each strongly connected component of $DG_{\bf p,q}[F\wedge D]$ is a subset of {\bf p} or {\bf q} and $D$ is tight.  Furthermore, by the assumptions of the theorem,
  members of {\bf p} have no strictly positive occurrences in $D$, and 
  members of {\bf q} have no strictly positive occurrences in~$F$.
 By Theorem~\ref{thm:splitfer}, formula~\eqref{eq:goal}
 is equivalent to 
 $\SM_{\bf p}[F]\wedge\SM_{\bf q}[D]$.
 By Theorem~\ref{thm:completion}, $\SM_{\bf q}[D]$ is equivalent to $Comp[D]$. Consequently, formula~\eqref{eq:goal} is equivalent to~\eqref{goal1} and~\eqref{goal2}. By Theorem~\ref{thm:two},
 formula~\eqref{goal2} is equivalent to formula~\eqref{goal3}.
 \end{proof}
 
 \begin{proof}[Proof of Theorem~\ref{thm:complemma2}]
 	By Theorem~\ref{thm:complemma2},
 	$\SM_{\bf pq}[F\wedge D]$ is equivalent to 
 		$\SM_{\bf p}[F]\wedge Comp[D]$. Formula $Comp[D]$
  corresponds to so called explicit definitions of predicates in {\bf q}.
  There is an obvious 1-1
  correspondence between the models of $\SM_{\bf p}[F]$ and the models of the
  same formula extended with explicit definitions (for predicates that do not occur in $F$). In particular, if $M$ is a model of $\SM_{\bf p}[F]\wedge Comp[D]$ then $M_{|\sigma(F)}$ is a model of $\SM_{\bf p}[F]$.
  This concludes the proof of statement (i).
  By the Replacement Theorem for intuitionistic logic, we conclude that 
  	$\SM_{\bf p}[F]\wedge Comp[D]$
  	is equivalent to $\SM_{\bf p}[F^{\bf q}]\wedge Comp[D]$
  and hence to 
 	$\SM_{\bf pq}[F^{\bf q}\wedge D]$ 	by Theorem~\ref{thm:complemma2}.
 	This concludes the proof of statement (ii). Statement (iii) is proved similarly.

 \end{proof}

\bibliographystyle{acmtrans}
\bibliography{shared}

\end{document}

\citeN{hip18} considered a rewriting technique in spirit of projection defined here. They describe two rewritings $\alpha$ and $\beta$-projections. 
For instance, replacing rule~\eqref{eq:s1def} with rules~\eqref{eq:al1}  and~\eqref{eq:al2} exemplifies $\alpha$-projection. Replacing rule~\eqref{eq:s1def} with rules~\eqref{eq:al3}
and~\eqref{eq:sprime2def} exemplifies $\beta$-projection.
Theorem~\ref{thm:projection} provides grounds for a proof of correctness for these projections. System~{\sc projector} described in~\cite{hip18} implements these rewritings.

% (modulo intuitionistically equivalent transformations for turning a FOL representation of a rule to Clark normal form). 
We now generalize the notion of a result of projecting {\bf x} out of RASPL-1 rule $R$ also when some variables in {\bf x} occur in   aggregate expressions.

%For this generalization it is convenient to identify the RASPL-1 rule with its FOL representation. In this case we can talk about the body of RASPL-1 rule as a conjunction consisting of subformulas of two kinds (i) literals (here we take a freedom and view expressions of the form $x=x'$ and $\neg x=x'$ as literals) and (2) existentially quantified conjunctions of   literals. We call the conjunction of literals part of the body its {\em non-aggregate} part.

Let~$R$ be a RASPL-1 rule of the form
\beq H\ar b\leq \#count\{\vec{y}:F\}, \body
\eeq{eq:rasplprojrule}
Rule $R$ occurs in a program~$\Pi$. Let 
{\bf x} be a non-empty tuple of variables occurring
only in 
body of~$R$ such that 
no member of {\bf x} is an aggregate variable of some aggregate expression.
By $\beta({\bf x})$ we denote some set  of literals in the body of~$R$ outside of any aggregate expressions so that it includes all literals that have occurrences of variables in~{\bf x}.
By $\gamma({\bf x})$ we denote some set of  literals in $b\leq \#count\{\vec{y}:F\}$  so that it includes all literals that have occurrences of variables in~{\bf x}.
Let $u$ be a 
predicate symbol that does not occur in $\Pi$. Then a
\emph{result of projecting variables~${\bf x}$ out of~$R$ using predicate symbol $u$} consists  of the 
following two rules  
%\beq
\begin{flalign}
&H\ar b\leq \#count\{\vec{y}:u({\bf z}),F\setminus\gamma({\bf x})\},\body.\label{rule:proj1}
\\
%\eeq{eq:r1x}
%\beq
&u({\bf z})\ar\beta(\bf x)\cup\gamma(\bf x).\label{rule:proj2}
\end{flalign}
where $\bf z$ denotes the variables that occur in $\beta(\bf x)\cup\gamma(\bf x)$, but do not occur in {\bf x}.

Consider RASPL-1 rule 
$$
\begin{array}{ll}
good(X)\ar& vtx(X),2\leq\#count\{y:edge(X,y),edge(y,Z),red(Z)\},good(Z),\\
&2\leq\#count\{Y:edge(X,Y),edge(Y,Z),yellow(Z)\}.
\end{array}
$$
A
result of projecting variable~$Z$ out of~it using predicate symbol $u$ consists  of the 
following two rules:
$$
\begin{array}{l}
good(X)\ar vtx(X),2\leq\#count\{Y:edge(X,Y),u(Y)\},good(Z),\\
~~~~~~~~~~~~~~~~~~~~2\leq\#count\{Y:edge(X,Y),edge(Y,Z),yellow(Z)\}
\\
u(Y) \ar e(Y,Z),red(Z),good(Z).
\end{array}
$$

%Recall rule~\eqref{eq:localex1}.
%$$
%\beq
%\begin{array}{ll}
%good(X)\ar& vtx(X),2\leq\#count\{Y:e(X,Y),e(Y,Z),red(Z)\}
%\end{array}
%$$%\eeq{eq:localex1}
%A
%result of projecting variable~$Z$ out of~it using predicate symbol $u$ consists  of the 
%following two rules  
%\beq
%\begin{array}{l}	
%	good(X)\ar vtx(X),2\leq\#count\{Y:e(X,Y),u(Y)\}\\
%	u(Y)\ar e(Y,Z),red(Z)
%\end{array}
%\eeq{eq:localex2}

By $\beta'$ we denote all literals in $\beta(\vec{x})$ that contain no other variables, but these occurring in~$\vec{x}$.

If no variable in {\bf x}  occurs in any aggregate expression of $\body$, then we can simplify rule~\eqref{rule:proj1} as follows
$$
H\ar b\leq \#count\{\vec{y}:u({\bf z}), F\setminus\gamma({\bf x})\},
\body\setminus\beta'.
$$
For instance, recall rule~\eqref{eq:localex.2}.
$$
\begin{array}{ll}
good(X)\ar& vtx(X),2\leq\#count\{Y:e(X,Y),e(Y,Z),red(Z)\},good(Z).
\end{array}
$$
A
result of projecting variable~$Z$ out of~it using predicate symbol $u$ consists  of the 
following two rules 
$$
\begin{array}{l}
good(X)\ar vtx(X),2\leq\#count\{Y:e(X,Y),u(Y)\}\\
u(Y)\ar  e(Y,Z),red(Z),good(Z).
\end{array}
$$

Consider another example %inspired by Example 3.13~\cite{thesis}.
$$
\begin{array}{l}
\ar\#count\{X,Y:p(Z),q(Z,Y),q(U,T),q(T,S),f(V+1,W),X=2*W\},t(Y),t(Z),t(V),t(S).
\end{array}
$$
One
result of projecting variables~$\{U,T\}$ out of this rule using predicate symbol $u$ consists  of the 
following two rules 
$$
\begin{array}{l}
\ar\#count\{X,Y:u(Z,S),f(V+1,W),X=2*W\},t(Y),t(Z),t(V),t(S)\\
u(Z,S)\ar p(Z),q(Z,Y),q(U,T),q(T,S).
\end{array}
$$
Another
result of projecting variables~$\{U,T\}$ using predicate symbol $u$ consists  of the 
following two rules 
$$
\begin{array}{l}
\ar\#count\{X,Y:u(Z,S),f(V+1,W),X=2*W\},t(Y),t(Z),t(V),t(S)\\
u(Z,S)\ar p(Z),q(Z,Y),q(U,T),q(T,S),t(Z),t(S).
\end{array}
$$

We are now ready to state a formal result stating that applying  described projection technique  results in a program that is essentially equivalent to an original one. We note that already mentioned system {\sc projector} implements rewritings on rules with aggregates as exemplified here. The statement below provides a proof of correctness for this system. 

\begin{theorem}\label{thm:projection2} Let $\Pi$ be a RASPL-1 program, {\bf p} be a set of predicate constants, and $R$ be a RASPL-1 rule of the form
	$$H\ar b\leq \#count\{\vec{y}:F(\vec{y})\}, \body.$$
	Let 
	{\bf x} be a non-empty tuple of variables occurring  only in 
	body of~$R$ such that no member of {\bf x} is an aggregate variable of some aggregate expression. 
	If~$\Pi'$ is constructed from $\Pi$ by replacing~$R$ in~$\Pi$ with a result of 
	projecting variables ${\bf x}$ out of~$R$ using a predicate symbol~$u$ 
	that is not in the signature of $\Pi$,
	then  $M\mapsto M_{|\sigma(\fol{\Pi})}$  is a 1-1 correspondence between the models of 
	SM$_{{\bf p}, u}[\fol{\Pi'}]$ 
	and the models  of 
	SM$_{{\bf p}}[\fol{\Pi}]$. 
\end{theorem}

\begin{proof}[Proof of Theorem~\ref{thm:projection2}]
	Let 
	\begin{itemize}
		\item $\beta({\bf x})$ be some set  of literals in the body of~$R$ outside of any aggregate expressions so that it includes all literals that have occurrences of variables in~{\bf x}.
		\item $\gamma({\bf x})$ be some set of  literals in $b\leq \#count\{\vec{y}:F(\vec{y})\}$  so that it includes all literals that have occurrences of variables in~{\bf x}.
		\item  $\bf z$ be the variables that occur in $\beta({\bf x})\cup \gamma({\bf x})$, but do not occur in~{\bf x}.
	\end{itemize}
	A
	result of projecting variables~${\bf x}$ out of~$R$ using  $u$ consists  of the 
	two rules:  
	\begin{flalign}
	&H\ar b\leq \#count\{\vec{y}:u({\bf z}),F(\vec{y})\setminus\gamma({\bf x})\},\body.\label{l:rp1}\\
	%\eeq{eq:r1x}
	%\beq
	&u({\bf z})\ar\beta({\bf x})\cup \gamma({\bf x})\label{l:rp2}.
	\end{flalign}

	The FOL-representation of $R$ follows:
	\beq
	\univ\Big(\fol{\body}\wedge 
	\exists \vec{y^1}\cdots  \vec{y^b} [\bigwedge_{1\leq i\leq b} F(\vec{y^i}) 
	\wedge\bigwedge_{1\leq i<j\leq b} \neg(y^i=y^j)
	]
	\rar\fol{H} \Big).
	\eeq{eq:folruleproof1}
	
	The LHS of the implication in~\eqref{eq:folruleproof1} is classically equivalent to
	\beq\fol{\beta(\vec{x})}\wedge \fol{\body}\wedge 
	\exists \vec{y^1}\cdots  \vec{y^b} [
	\fol{\gamma(\vec{x})}\wedge
	\bigwedge_{1\leq i\leq b} F(\vec{y^i}) 
	\wedge\bigwedge_{1\leq i<j\leq b} \neg(y^i=y^j)
	].
	\eeq{eq:folruleproof2}
	Since $\vec{y'}$ consists of  local variables, it is obvious that $\beta(\vec{x})$ does not contain any variables in $\vec{y'}$. Also none of the variables $\vec{y^1}\cdots  \vec{y^b}$ are in $\vec{x}$.
	Thus, formula~\eqref{eq:folruleproof2} is classically equivalent to 
	$$
	\fol{\body}\wedge 
	\exists \vec{y^1}\cdots  \vec{y^b} \forall \vec{y'}[\fol{\beta(\vec{x})}\wedge 
	\fol{\gamma(\vec{x})}\wedge
	\bigwedge_{1\leq i\leq b} F(\vec{y^i},\vec{y'}) 
	\wedge\bigwedge_{1\leq i<j\leq b} \neg(y^i=y^j)
	].
	$$
	By  Replacement Theorem II, 
	formula~\eqref{eq:folruleproof1} is intuitionistically equivalent to  formula
	\beq
	\univ\Big(\fol{\body}\wedge 
	\exists \vec{y^1}\cdots  \vec{y^b} \forall \vec{y'}[
	\fol{\beta(\vec{x})}\wedge\fol{\gamma(\vec{x})}\wedge
	\bigwedge_{1\leq i\leq b} F(\vec{y^i},\vec{y'}) 
	\wedge\bigwedge_{1\leq i<j\leq b} \neg(y^i=y^j)
	]
	\rar\fol{H} \Big).
	\eeq{eq:folruleproof3}
	Lemma on Explicit Definitions allows us now to conclude that
	$M\mapsto M_{|\sigma(\fol{\Pi})}$  is a 1-1 correspondence between the models of 
	SM$_{{\bf p}, u}[\fol{\Pi'}]$ 
	and the models  of 
	SM$_{{\bf p}}[\fol{\Pi}]$. 
	This is easy to see by considering the FOL representations of rules~\eqref{l:rp1} and \eqref{l:rp2}, and the relation between the FOL representation of rule~\eqref{l:rp1} and formula~\eqref{eq:folruleproof3}. 
	
	It is easy to see how a similar argument can be constructed for the suggested simplification.
\end{proof}